\journal{Fuzzy Sets and Systems}
\newtheorem{definition}{Definition}[section]
\newtheorem{example}{Example}[section]
\newtheorem{remark}{Remark}[section]
\newtheorem{lemma}{Lemma}[section]
\newtheorem{corollary}{Corollary}[section]
\newtheorem{proposition}{Proposition}[section]
\newtheorem{theorem}{Theorem}[section]
\newproof{proof}{Proof}
\newcommand{\xmark}{\ding{55}}%
\begin{document}

\begin{frontmatter}

\title{Towards interval uncertainty propagation control in bivariate aggregation processes and the introduction of width-limited interval-valued overlap functions}

\author[upna,furg]{Tiago da Cruz Asmus\corref{cor}}

\cortext[cor]{Corresponding author. Phone: +34 610031770}
 \ead{tiago.dacruz@unavarra.es, tiagoasmus@furg.br}

\author[upna,furg2]{Graçaliz Pereira Dimuro}
 \ead{gracaliz@gmail.com, gracalizdimuro@furg.br, gracaliz.pereira@unavarra.es}

\author[ufrn,isc]{Benjamín Bedregal}
\ead{bedregal@dimap.ufrn.br}

\author[upna,isc]{José Antonio Sanz}
\ead{joseantonio.sanz@unavarra.es}

\author[slovak]{Radko Mesiar}
\ead{mesiar@math.sk}

\author[upna,isc]{Humberto Bustince}
\ead{bustince@unavarra.es}

\address[upna]{Departamento de Estadística, Informática y Matemáticas, Universidad Pública de Navarra\\ Campus Arrosadia s/n, 31006, Pamplona, Spain}

\address[furg]{Instituto de Matemática, Estatística e Física,  Universidade Federal do Rio Grande\\
	Av. Itália km 08,  Campus Carreiros,  96201-900 Rio Grande, Brazil}

\address[furg2]{Centro de Ciência Computacionais,  Universidade Federal do Rio Grande\\
Av. Itália km 08,  Campus Carreiros,  96201-900 Rio Grande, Brazil}

\address[ufrn]{Departamento de Informática e
	Matemática Aplicada,  Universidade Federal do Rio Grande do Norte\\ Campus Universitário s/n, 59072-970 Natal, Brazil}

\address[isc]{Institute of Smart Cities, Universidad Pública de Navarra\\ Campus Arrosadia s/n, 31006, Pamplona, Spain}

\address[slovak]{Slovak University of Technology,  Bratislava, Slovakia\\
Institute of Information Theory and Automation, Academy of Sciences of the Czech Republic, Prague, Czech Republic}

\begin{abstract}
Overlap functions are a class of aggregation functions that measure the overlapping degree between two values. They have been successfully applied as a fuzzy conjunction operation in several problems in which associativity is not required, such as image processing and classification. Interval-valued overlap functions were defined as an extension to express the overlapping of interval-valued data, and they have been usually applied when there is uncertainty regarding the assignment of membership degrees, as in interval-valued fuzzy rule-based classification systems. In this context, the choice of a total order for intervals can be significant, which motivated the recent developments on interval-valued aggregation functions and interval-valued overlap functions that are increasing to a given admissible order, that is, a total order that refines the usual partial order for intervals. Also, width preservation has been considered on these recent works, in an intent to avoid the uncertainty increase and guarantee the information quality, but no deeper study was made regarding the relation between the widths of the input intervals and the output interval, when applying interval-valued functions, or how one can control such uncertainty propagation based on this relation. Thus, in this paper we: (i) introduce and develop the concepts of width-limited interval-valued functions and width limiting functions, presenting a theoretical approach to analyze the relation between the widths of the input and output intervals of bivariate interval-valued functions, with special attention to interval-valued aggregation functions; (ii) introduce the concept of $(a,b)$-ultramodular aggregation functions, a less restrictive extension of one-dimension convexity for bivariate aggregation functions, which have an important predictable behaviour with respect to the width when extended to the interval-valued context; (iii) define width-limited interval-valued overlap functions, taking into account a function that controls the width of the output interval and a new notion of increasingness with respect to a pair of partial orders $(\leq_1,\leq_2)$; (iv) present and compare three construction methods for these width-limited interval-valued overlap functions, considering a pair of orders $(\leq_1,\leq_2)$, which may be admissible or not, showcasing the adaptability of our developments.
\end{abstract}

\begin{keyword}
aggregation functions, overlap functions, interval-valued aggregation functions, interval-valued overlap functions, admissible orders
\end{keyword}

\end{frontmatter}

\section{Introduction}\label{intro}

Aggregation functions are useful operators that combine (fuse) several numerical values into a single representative one, being especially suitable to model fuzzy logic operations and they have been widely employed in several theoretical and applied fields \cite{grabisch2,Beliakov,BeliakovBustinceCalvo_book}.

Overlap functions are a particular class of aggregation functions that do not need to be associative, and they were originally defined as continuous functions in order to deal with the overlapping between classes in image processing problems \cite{Bus10a,Jurio201369,Bustince2021}. They have quickly risen in popularity due to some desirable properties that they present. In \cite{DIMURO201927,Dimuro2015}, one can find clear discussions on the advantages that overlap functions have over the popular t-norms. For example, overlap functions are closed to the convex sum and the aggregation by internal generalized composition, whereas t-norms are not. Also, overlap functions showed good results when applied in problems in which associativity of the employed aggregation operator is not required, as in fuzzy rule-based classification \cite{elkano, edurne, DIMURO202027,gcf1f2,8468107,Lucca2018-nc,LUCCA201894}, decision making \cite{ELKANO2017,GARCIAINDEX}, wavelet-fuzzy power quality diagnosis system \cite{NOLASCO2019284} or forest fire detection \cite{GARCIAJIMENEZ2017834}, among others.


In fuzzy modeling, there may be uncertainty regarding the values of the membership degrees or the definition of the membership functions to be employed in the system \cite{MENDEL2007988}. One possible solution is the adoption of interval-valued fuzzy sets (IVFSs) \cite{Bus7145399,Grattan-Guiness75,Zad75}, where the membership degrees are represented by intervals. In this manner, the width of the assigned intervals are intrinsically related with the uncertainty/ignorance with respect to the modeling of the fuzzy sets \cite{BEDREGAL20101373, Dimuro20113898,Sanz2}. IVFSs have been successfully applied in many different fields, such as classification \cite{Sanz3,Sanz2013,ASMUS2020}, image processing \cite{Bustince2009,GALARCOLOR}, game theory \cite{Asmus2016}, multicriteria decision making \cite{Barrenechea201433,KUTLUGUNDOGDU2019307}, pest
control \cite{leo-ifsa}, irrigation systems \cite{HU2020109950} and collaborative clustering \cite{NGO2018404}.

Interval-valued aggregation functions were defined in \cite{Marichal2002}, in order to model the aggregation of interval-valued data in the unit interval. Following a similar approach, interval-valued overlap functions were defined, independently, by Qiao and Hu \cite{Qiao201719} and Bedregal et al. \cite{BEDREGAL20171}, as an extension of overlap functions to the interval-valued context. By extending and generalizing the concept of interval-valued overlap functions, Asmus et al. \cite{ASMUS2020} introduced the concepts of n-dimensional interval-valued overlap functions and general interval-valued overlap functions.

It is noteworthy that most of the extensions of aggregation functions to the interval-valued context were made following the optimality (the least possible interval width) and correctness (the unknown value of the extended operation is contained in the resulting interval) criteria for interval representation (also called, the best interval representation), as discussed in \cite{BEDREGAL20101373, Dimuro2008123}, and taking into account the usual product order when comparing intervals \cite{moore2009}. Although this approach is both intuitive and theoretically sound, it may present some drawbacks on the application side:
\begin{description}
  \item[(i)] The product order is not a total order, meaning that one may face data that is not comparable, which is a serious hindrance in problems such as decision making and classification \cite{BUSTINCE201369};
   \item[(ii)] In order to ensure correctness, many interval-valued operations produce overestimation \cite{ML03}, which leads to increasing widths of the resulting intervals and, consequently, such intervals, although correct, usually carry no meaningful information \cite{DIM00}.
\end{description}

To solve the first problem \textbf{(i)}, avoiding a stalemate when comparing intervals, Bustince et al. \cite{BUSTINCE201369} introduced the concept of admissible orders, that is, total orders that refine the product order, and that can be constructed by a pair of aggregation functions. Since then, many works using admissible orders have appeared in the literature, for example, \cite{ZAPATA201791,BENTKOWSKA2015792}. In particular, Bustince et al. \cite{BUSTINCE202023} presented a construction method for interval-valued aggregation functions that are increasing with respect to a given admissible order. In a similar line of work, Asmus et al. \cite{ASMUS2020-TFS} introduced the concept of n-dimensional admissibly ordered interval-valued overlap functions, which are n-dimensional interval-valued overlap functions that are increasing with respect to an admissible order.

 In an initial attempt to deal with the second drawback \textbf{(ii)}, the construction method presented by Bustince et al. \cite{BUSTINCE202023} produces interval-valued aggregation functions that can be width-preserving whenever some restrictive conditions are satisfied. Note that the width of the interval output of a width-preserving interval-valued function is equal to the width of the interval inputs, when they all have the same width. However, Bustince et al. \cite{BUSTINCE202023} clearly state that, ideally, the definition of width preservation would have to take into account the width of the interval inputs in every case, not only when those inputs have the same length, which we call the drawback \textbf{(iii)} to be overcome.

Considering the second problem \textbf{(ii)}, Asmus et al. \cite{ASMUS2020-TFS} presented a construction method for n-dimensional admissibly ordered interval-valued overlap functions in which the width of the output is always less or equal to the minimal width of the inputs (see Theorem \ref{theo-benja} in Section \ref{sec-prelim}), which also comes to avoid the problem \textbf{(iii)}. However, this type of minimal width limitation has two sides: on one hand, as desired, the functions produced by the method avoid an increasing width (uncertainty) propagation; on the other hand, unfortunately, just one degenerate input interval (that is, with width equal to zero) is sufficient to completely remove all uncertainty of the output interval, which is clearly counterintuitive, to say the least.



Thus, the study of the relation between the width of the inputs and the output of interval-valued fuzzy operations coupled with adaptable tools to limit the increasing uncertainty in the output of such operations is still a challenge to overcome in the literature, especially regarding interval-valued aggregation and interval-valued overlap functions, which are of our particular interest.

The development of models  that help to avoid the overestimation\footnote{For a discussion on the problem of overestimation of interval-valued operations in practical applications, see \cite{doi:10.1080/15502287.2012.683134,doi:10.1080/15397734.2018.1548969}.} of the interval outputs' widths in practical applications certainly will increase the applicability of interval-valued fuzzy-based tools to solve many problems, as in fuzzy-rule based classification systems (see, e.g.: \cite{Sanz2,ASMUS2020-TFS,Sanz13}) and decision making (see, e.g.: \cite{CHEN2020,LAN2020146}), by providing interval outputs with better information quality. We point out that the information quality of interval-valued results is a strong requirement claimed by scientists and engineers interested in interval-based tools \cite{KK96}.

So, in order to present a contribution to solve the problems \textbf{(ii)} and \textbf{(iii)} in the context of interval-valued overlap functions, and even in a more general framework, without disregarding the problem \textbf{(i)}, this paper has the following general objective:

\begin{description}
  \item[$\bullet$] To develop a theoretical approach to aid the analysis of bivariate interval-valued operations with respect to the width of the operated intervals in order to control the uncertainty propagation, with special attention to interval-valued overlap functions, admissibly ordered or not.
\end{description}

To accomplish this goal, we have the following specific objectives:

\begin{description}
  \item[1)] To introduce the concepts of width-limited interval-valued functions and width-limiting functions, which are theoretical tools to study the relation between the widths of the inputs with the width of the output of interval-valued functions, necessary for the construction of interval-valued functions with controlled uncertainty propagation;


  \item[2)] To define $(a,b)$-ultramodular aggregation functions, a less restrictive extension of one-dimension convexity for bivariate aggregation functions, which shall have an important predictable behaviour with respect to their interval output widths when extended to the interval-valued context;

  \item[3)] To study the relation between some width-limited interval-valued functions and their respective width-limiting functions, especially when dealing with $(a,b)$-ultramodular aggregation functions, giving some backdrop for future comparisons with similarly constructed interval-valued functions;

  \item[4)] To define the notion of increasingness for a pair of partial orders, allowing for more flexible construction methods for width-limited interval-valued functions;

  \item[5)] To introduce the concept of width-limited interval-valued overlap functions, taking into account a width-limiting function and a pair of partial orders, which allows the definition of interval-valued overlap operations that provide output intervals that do not exceed a desirable uncertainty (width) threshold;

  \item[6)] To study the relation between width-limited interval-valued overlap functions and some of their width-limiting functions, particularly when considering the best interval representation of some overlap function;

  \item[7)] To present and study three construction methods for width-limited interval-valued overlap functions, presenting examples and comparisons between them  to showcase the versatility and applicability of our approach.

\end{description}

Regarding the paper organization, in Section \ref{sec-prelim} we present some preliminary concepts, followed by Section \ref{sec-width}, where Specific Objectives 1-3 are addressed. In Section \ref{sec-w-iv-overlaps}, we encompass Specific Objectives 4-7, with the final remarks being presented in Section \ref{sec-conclusion}. 

\section{Preliminaries} \label{sec-prelim}

In this section, we recall some concepts on (ultramodular) aggregation functions, overlap functions, interval mathematics, admissible orders  and (admissibly ordered) interval-valued overlap functions.

\subsection{Aggregation Functions}

\begin{definition}\cite{KMP00}
A function $N : [0,1] \rightarrow [0,1]$ is a fuzzy negation if the following conditions hold:
\begin{description}
\item \textbf{(N1)} $N(0) = 1$ and $N(1) = 0$;
\item \textbf{(N2)} If $x \leq y$ then $N(y) \leq N(x)$, for all $x,y \in [0,1]$.
\end{description}

If $N$ also satisfies the involutive property,
\begin{description}
\item \textbf{(N3)} $N(N(x)) = x$, for all $x \in [0,1]$,
\end{description}
then it is said to be a strong fuzzy negation.
\end{definition}
\begin{example}
The Zadeh negation given by
\begin{equation*}
N_Z(x) = 1 - x,
\end{equation*}
is a strong fuzzy negation.
\end{example}

\begin{definition}\cite{KMP00}
Given a fuzzy negation $N: [0,1] \rightarrow [0,1]$ and a function $F: [0,1]^2 \rightarrow [0,1]$, then the function $F^N: [0,1]^2 \rightarrow [0,1]$ defined, for all $x,y \in [0,1]$, by
\begin{equation}\label{eq-dual}
F^N(x,y) = N(F(N(x),N(y))),
\end{equation}
is the $N$-dual of $F$.
\end{definition}
%
%

When it is clear by the context, the $N_Z$-dual function (dual with respect to the Zadeh negation) of $F$ will be just called dual of $F$, and will be denoted by $F^d$.

\begin{definition} \cite{BeliakovBustinceCalvo_book}\label{def_aggreg}
An aggregation function is any function $A: [0, 1]^n \rightarrow [0, 1]$ that satisfies the following conditions:
\begin{description}
\item [(A1)] $A$ is increasing in each argument;
\item [(A2)] $A(0, \ldots, 0) = 0$ and $A(1, \ldots, 1) = 1$.
\end{description}
\end{definition}

\begin{example}
For $\alpha \in [0,1]$, the mapping $K_\alpha: [0,1]^2 \rightarrow [0,1]$, defined, for all $x,y \in [0,1]$, by
\begin{equation}\label{eq-kalpha}
K_\alpha(x,y)= x + \alpha \cdot (y - x),
\end{equation}
is an aggregation function.
\end{example}

In \cite{BeliakovBustinceCalvo_book}, one may find the concepts of conjunctive and disjunctive aggregation function. In this paper, we need a more general definition:

\begin{definition}
Consider a function $F: [0,1]^2 \rightarrow [0,1]$. Then, $F$ is said to be:
\begin{description}
  \item[a)] Conjunctive, if $F(x,y) \leq \min\{x,y\}$ for all $x,y \in [0,1]$;
  \item[b)] Disjunctive, if $F(x,y) \geq \max\{x,y\}$ for all $x,y \in [0,1]$.
\end{description}
\end{definition}

The definition of ultramodular aggregation functions is a key concept in this work:

\begin{definition} \cite{KLEMENT-ultra} \label{def-ultra}
An aggregation function $A: [0,1]^2 \rightarrow [0,1]$ is called ultramodular if, for all $x_1,x_2,y_1,y_2,\epsilon,\delta \in [0,1]$, such that $x_2 + \epsilon, y_2 + \delta \in [0,1]$, $x_1 \leq x_2$ and $y_1 \leq y_2$, it holds that:
\begin{equation}\label{eq-ultra}
A(x_1 + \epsilon, y_1 + \delta) - A(x_1,y_1) \leq A(x_2 + \epsilon, y_2 + \delta) - A(x_2,y_2).
\end{equation}
\end{definition}

\begin{proposition}\label{prop-der}  \cite{KLEMENT-ultra}
Assume that all partial derivatives of order 2 of the aggregation function $A : [0,1]^2 \rightarrow [0,1]$ exist. Then $A$ is
ultramodular if and only if all partial derivatives of order 2 are non-negative.
\end{proposition}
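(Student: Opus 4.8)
The plan is to prove both directions by reducing the two-dimensional statement to one-dimensional calculus: I would restrict $A$ to suitable line segments inside $[0,1]^2$ and translate the defining inequality \eqref{eq-ultra} into statements about difference quotients of first partials, which in turn are controlled by the second-order partials via the fundamental theorem of calculus. The constraint $x_2+\epsilon, y_2+\delta \in [0,1]$, together with the convexity of $[0,1]^2$, is exactly what keeps every intermediate evaluation point inside the domain.

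\emph{Necessity.} Assuming $A$ ultramodular, I would specialise \eqref{eq-ultra}. Fixing $\delta = 0$ and $y_1 = y_2 = y$ gives $A(x_1+\epsilon,y) - A(x_1,y) \le A(x_2+\epsilon,y) - A(x_2,y)$ for all $x_1 \le x_2$ and all admissible $\epsilon$. Dividing by $\epsilon$ and letting $\epsilon \to 0^{+}$ yields $\frac{\partial A}{\partial x}(x_1,y) \le \frac{\partial A}{\partial x}(x_2,y)$, so $x \mapsto \frac{\partial A}{\partial x}(x,y)$ is non-decreasing and hence $\frac{\partial^2 A}{\partial x^2} \ge 0$; by symmetry $\frac{\partial^2 A}{\partial y^2} \ge 0$. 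For the mixed derivative I would set $\epsilon = 0$ and $y_1 = y_2 = y$: then \eqref{eq-ultra} reads $A(x_1,y+\delta) - A(x_1,y) \le A(x_2,y+\delta) - A(x_2,y)$ for $x_1 \le x_2$; dividing by $\delta$ and letting $\delta \to 0^{+}$ shows $x \mapsto \frac{\partial A}{\partial y}(x,y)$ is non-decreasing, i.e. $\frac{\partial^2 A}{\partial x\,\partial y} \ge 0$.

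\emph{Sufficiency.} Conversely, assume all second-order partials are non-negative and take any admissible $x_1 \le x_2$, $y_1 \le y_2$, $\epsilon$, $\delta$. I would put $p = x_2-x_1 \ge 0$, $q = y_2-y_1 \ge 0$, and define, for $t \in [0,1]$,
\[
 g(t) = A\bigl(x_1 + tp + \epsilon,\; y_1 + tq + \delta\bigr) - A\bigl(x_1 + tp,\; y_1 + tq\bigr).
\]
All four arguments stay in $[0,1]^2$ as $t$ ranges over $[0,1]$, so $g$ is differentiable, and $g(0)$, $g(1)$ are exactly the left- and right-hand sides of \eqref{eq-ultra}; thus it suffices to prove $g' \ge 0$. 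With $u = x_1 + tp$ and $v = y_1 + tq$, the chain rule gives
\[
 g'(t) = p\left[\tfrac{\partial A}{\partial x}(u+\epsilon,v+\delta) - \tfrac{\partial A}{\partial x}(u,v)\right] + q\left[\tfrac{\partial A}{\partial y}(u+\epsilon,v+\delta) - \tfrac{\partial A}{\partial y}(u,v)\right].
\]
Writing each bracket as a telescoping difference and applying the fundamental theorem of calculus, e.g.
\[
 \tfrac{\partial A}{\partial x}(u+\epsilon,v+\delta) - \tfrac{\partial A}{\partial x}(u,v) = \int_{0}^{\epsilon}\tfrac{\partial^2 A}{\partial x^2}(u+s,v+\delta)\,ds + \int_{0}^{\delta}\tfrac{\partial^2 A}{\partial x\,\partial y}(u,v+r)\,dr \ge 0,
\]
and analogously for the second bracket, both brackets are non-negative; since $p,q \ge 0$ this gives $g'(t) \ge 0$ on $[0,1]$, hence $g(0) \le g(1)$, which is \eqref{eq-ultra}.

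The step I expect to require the most care is the sufficiency direction: one must check that every point at which a partial derivative is evaluated lies in $[0,1]^2$ (handled by $x_2+\epsilon, y_2+\delta \le 1$ and convexity of the square), and one tacitly uses Schwarz's theorem to identify $\frac{\partial^2 A}{\partial x\,\partial y}$ with $\frac{\partial^2 A}{\partial y\,\partial x}$ — legitimate under the usual assumption that the second-order partials are continuous; alternatively, the necessity argument already establishes that both mixed partials are non-negative, so no identification is needed. The remaining manipulations (differentiating under the integral sign, the chain rule, and passing to limits in the difference quotients) are routine given the stated differentiability.
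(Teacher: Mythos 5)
You should note first that the paper offers no proof of Proposition \ref{prop-der}: it is imported from \cite{KLEMENT-ultra} as a known result, so there is no internal argument to compare yours against. On its merits, your proof follows the standard route and the necessity half is fully correct: freezing one increment in (\ref{eq-ultra}), dividing by the other and letting it tend to $0^{+}$ does show that each first-order partial is non-decreasing in each variable separately, and your remark that the hypothesis already covers both mixed partials, so that no appeal to Schwarz's theorem is needed, is exactly the right way to handle that point.

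The sufficiency half, however, leans on two pieces of regularity that the hypothesis ``all second-order partials exist'' does not supply. The chain-rule formula for $g'(t)$ along the diagonal segment presupposes that $A$ is totally differentiable at $(u,v)$; mere existence of the first- and second-order partials does not give this (the usual sufficient condition is continuity of the first partials, which existence of the second partials does not imply). Likewise, the fundamental-theorem-of-calculus identities require the second partials to be integrable along the relevant segments, which again does not follow from bare existence. Both defects vanish if you telescope coordinate-wise instead of moving diagonally: writing $D(x,y)=A(x+\epsilon,y+\delta)-A(x,y)$, decompose $D(x_2,y_2)-D(x_1,y_1)$ as $[D(x_2,y_2)-D(x_1,y_2)]+[D(x_1,y_2)-D(x_1,y_1)]$. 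Each bracket is an increment of a function of one real variable whose ordinary derivative is a difference of first partials, e.g. $\frac{d}{dx}D(x,y_2)=\frac{\partial A}{\partial x}(x+\epsilon,y_2+\delta)-\frac{\partial A}{\partial x}(x,y_2)\geq 0$, the inequality holding because $\frac{\partial A}{\partial x}$ is non-decreasing in each argument separately --- a one-dimensional consequence, via the mean value theorem, of the non-negativity of $\frac{\partial^{2}A}{\partial x^{2}}$ and of the mixed partial. Since partial derivatives are ordinary derivatives of one-variable sections, this version needs neither total differentiability nor any integration, and it completes the proof under the stated hypotheses.
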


\begin{theorem} \cite{KLEMENT-ultra} \label{theo-compultra}
Let $A_1, A_2, A_3: [0,1]^2 \rightarrow [0,1]$ be ultramodular aggregation functions.  Then, the composite function $A: [0,1]^2 \rightarrow [0,1]$ given, for all $x,y \in [0,1]$, by $A(x,y) = A_3(A_1(x,y),A_2(x,y))$ is an ultramodular aggregation function.
\end{theorem}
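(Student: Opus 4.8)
The plan is to first dispatch the two defining conditions of an aggregation function and then to establish the ultramodularity inequality \eqref{eq-ultra} for $A$ by bookkeeping how the increments of $A_1$ and $A_2$ propagate through $A_3$. Conditions (A1)--(A2) are immediate: $A(0,0) = A_3(A_1(0,0),A_2(0,0)) = A_3(0,0) = 0$ and $A(1,1) = A_3(1,1) = 1$, while monotonicity is just a composition of increasing maps, since raising an argument of $A$ does not decrease $A_1(x,y)$ or $A_2(x,y)$, and $A_3$ is increasing in each of its two arguments.

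For ultramodularity, fix $x_1 \le x_2$, $y_1 \le y_2$ and $\epsilon,\delta \ge 0$ with $x_2+\epsilon, y_2+\delta \in [0,1]$. I would introduce the shorthand $u_i = A_1(x_i,y_i)$, $v_i = A_2(x_i,y_i)$, $\epsilon_i = A_1(x_i+\epsilon,y_i+\delta) - u_i$ and $\delta_i = A_2(x_i+\epsilon,y_i+\delta) - v_i$ for $i=1,2$. Monotonicity of $A_1$ and $A_2$ gives $\epsilon_i,\delta_i \ge 0$, $u_1\le u_2$ and $v_1\le v_2$, while ultramodularity of $A_1$ and of $A_2$ gives $\epsilon_1\le \epsilon_2$ and $\delta_1\le\delta_2$. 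All points appearing below stay in $[0,1]^2$: for instance $u_2+\epsilon_1 \le u_2+\epsilon_2 = A_1(x_2+\epsilon,y_2+\delta) \in [0,1]$, and likewise for the other coordinate. In this notation, the inequality to be proved reads
\begin{equation*}
A_3(u_1+\epsilon_1,v_1+\delta_1) - A_3(u_1,v_1) \;\le\; A_3(u_2+\epsilon_2,v_2+\delta_2) - A_3(u_2,v_2).
\end{equation*}

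I would split this into two steps. Applying ultramodularity of $A_3$ with the \emph{common} shift $(\epsilon_1,\delta_1)$ to the ordered points $(u_1,v_1)$ and $(u_2,v_2)$ yields
\begin{equation*}
A_3(u_1+\epsilon_1,v_1+\delta_1) - A_3(u_1,v_1) \;\le\; A_3(u_2+\epsilon_1,v_2+\delta_1) - A_3(u_2,v_2);
\end{equation*}
then, since $\epsilon_1\le\epsilon_2$, $\delta_1\le\delta_2$ and $A_3$ is increasing, we get $A_3(u_2+\epsilon_1,v_2+\delta_1) \le A_3(u_2+\epsilon_2,v_2+\delta_2)$, so the right-hand side above is bounded by $A_3(u_2+\epsilon_2,v_2+\delta_2) - A_3(u_2,v_2)$. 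Concatenating the two inequalities closes the argument, and rewriting $u_i,v_i,\epsilon_i,\delta_i$ back in terms of $A_1,A_2$ recovers \eqref{eq-ultra} for $A$.

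The one delicate point — what I expect to be the main obstacle — is that the increments $(\epsilon_1,\delta_1)$ and $(\epsilon_2,\delta_2)$ induced at the level of $A_3$ are in general different, so the ultramodularity of $A_3$ cannot be invoked in a single stroke (its definition requires the \emph{same} shift on both sides). The remedy is precisely the two-step scheme above: use ultramodularity of $A_3$ for the smaller common shift $(\epsilon_1,\delta_1)$, and then use bare monotonicity of $A_3$ to enlarge that shift to $(\epsilon_2,\delta_2)$.
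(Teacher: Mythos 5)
Your proof is correct. Note that the paper itself offers no proof of this statement: it is quoted from the cited reference on ultramodular aggregation functions, so there is no in-paper argument to compare against. Your two-step scheme --- first applying ultramodularity of $A_3$ with the common (smaller) shift $(\epsilon_1,\delta_1)$ to the ordered pair of points $(u_1,v_1)\leq(u_2,v_2)$, then enlarging the shift to $(\epsilon_2,\delta_2)$ by plain monotonicity of $A_3$ --- correctly resolves the only delicate issue (the induced increments differing at the two base points), and the verification that $u_2+\epsilon_1$ and $v_2+\delta_1$ remain in $[0,1]$ is exactly the check needed to legitimize the first step.
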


\begin{corollary} \cite{KLEMENT-ultra} \label{coro-convexsumultra}
Let $A_1,A_2 : [0,1]^2 \rightarrow[0,1]$ be ultramodular aggregation functions and
$K_{\alpha}: [0,1]^2 \rightarrow [0,1]$ as defined in Equation (\ref{eq-kalpha}) . Then, we have that the function $A_{\alpha}: [0,1]^2 \rightarrow[0,1]$ given, for all $x,y,\alpha \in [0,1]$, by $A_{\alpha}(x,y) = K_{\alpha}(A_1(x,y),A_2(x,y))$, is an ultramodular aggregation function.
\end{corollary}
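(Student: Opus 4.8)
The plan is to recognize $A_\alpha$ as a composition of ultramodular aggregation functions and then invoke Theorem~\ref{theo-compultra} directly. Concretely, I would set $A_3 = K_\alpha$ and keep $A_1,A_2$ as the given ultramodular aggregation functions, so that $A_\alpha(x,y) = K_\alpha(A_1(x,y),A_2(x,y)) = A_3(A_1(x,y),A_2(x,y))$ matches the hypothesis of Theorem~\ref{theo-compultra} verbatim. The only thing that then remains is to check that $K_\alpha$ itself is an ultramodular aggregation function.

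First I would observe that $K_\alpha(x,y) = x + \alpha(y-x) = (1-\alpha)x + \alpha y$ is a convex combination of $x$ and $y$; hence it takes values in $[0,1]$, it is increasing in each argument because the coefficients $1-\alpha$ and $\alpha$ are non-negative, and it satisfies $K_\alpha(0,0)=0$ and $K_\alpha(1,1)=1$. Thus $K_\alpha$ is an aggregation function, as was already noted in the Example following Equation~(\ref{eq-kalpha}).

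Next I would establish ultramodularity of $K_\alpha$. The quickest route is Proposition~\ref{prop-der}: $K_\alpha$ is affine, so all its second-order partial derivatives vanish and are therefore non-negative, which gives ultramodularity. Alternatively, one can verify Inequality~(\ref{eq-ultra}) by hand, since for any admissible $x_1 \le x_2$, $y_1 \le y_2$, $\epsilon,\delta$ one computes
\begin{equation*}
K_\alpha(x_i+\epsilon,\, y_i+\delta) - K_\alpha(x_i,y_i) = (1-\alpha)\epsilon + \alpha\delta
\end{equation*}
for both $i=1$ and $i=2$, so the two sides of~(\ref{eq-ultra}) are actually equal and the required inequality holds trivially.

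With $K_\alpha$ shown to be an ultramodular aggregation function, applying Theorem~\ref{theo-compultra} to $A_1$, $A_2$ and $A_3 = K_\alpha$ yields that $A_\alpha$ is an ultramodular aggregation function, which finishes the argument. I do not expect any real obstacle here: the whole content is the reduction to Theorem~\ref{theo-compultra}, and the only mildly delicate point is to confirm beforehand that $K_\alpha$ genuinely qualifies as an aggregation function (range contained in $[0,1]$, monotonicity, boundary conditions) so that the composition theorem is applicable.
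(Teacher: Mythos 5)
Your argument is correct and is precisely the intended derivation: the paper states this as a corollary of Theorem~\ref{theo-compultra} (citing the original source without reproducing a proof), and the only content is exactly what you supply, namely that $K_\alpha$ is itself an ultramodular aggregation function so that the composition theorem applies with $A_3=K_\alpha$. Both of your justifications for the ultramodularity of $K_\alpha$ (vanishing second partials via Proposition~\ref{prop-der}, or the direct computation showing equality in Inequality~(\ref{eq-ultra})) are valid.
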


\begin{example}The following are examples of ultramodular aggregation functions:
\begin{description}
  \item[1)] The weighted sum $K_{\alpha}$, as defined in Equation (\ref{eq-kalpha});
  \item[2)] The product overlap (see Table \ref{table:exoverlaps}).
\end{description}
\end{example}
By Propositions 2.2 and 2.7 in \cite{KLEMENT-ultra}, the following result is immediate.
\begin{proposition}\label{prop-moderateg}
Let $A: [0,1]^2 \rightarrow [0,1]$ be an ultramodular aggregation function. Then, it holds that:
\begin{equation*}
A(x^*,y) + A(x,y^*) \leq A(x^*,y^*)+A(x,y),
\end{equation*}
for all $x^*,y^*,x,y \in [0,1]$ such that $x \leq x^*$ and $y \leq y^*$.
\end{proposition}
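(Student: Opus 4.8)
The plan is to observe that the asserted inequality is nothing but the $2$-increasingness (supermodularity) of $A$ on the rectangle $[x,x^*]\times[y,y^*]$, and to obtain it as a special case of the defining inequality (\ref{eq-ultra}) of Definition \ref{def-ultra}. The cited Propositions 2.2 and 2.7 of \cite{KLEMENT-ultra} already record that every ultramodular aggregation function is supermodular, so the statement to be proved is precisely the specialization of supermodularity to comparable pairs of points and in principle nothing remains to be done. For a self-contained argument, however, I would also give the short derivation directly from Definition \ref{def-ultra}.

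For that direct route, take in the notation of Definition \ref{def-ultra} the choice $x_1 = x_2 = x$, $y_1 = y$, $y_2 = y^*$, $\epsilon = x^* - x$ and $\delta = 0$. Since $0 \le x \le x^* \le 1$ we get $\epsilon \in [0,1]$ and $x_2 + \epsilon = x^* \in [0,1]$; also $y_2 + \delta = y^* \in [0,1]$, while the monotonicity hypotheses $x_1 \le x_2$ and $y_1 \le y_2$ hold trivially (the former with equality, the latter because $y \le y^*$). Hence all the constraints of Definition \ref{def-ultra} are met, and (\ref{eq-ultra}) yields
\begin{equation*}
A(x^*, y) - A(x, y) \;\le\; A(x^*, y^*) - A(x, y^*),
\end{equation*}
which, after adding $A(x,y) + A(x,y^*)$ to both sides, is exactly the claimed inequality $A(x^*, y) + A(x, y^*) \le A(x^*, y^*) + A(x, y)$.

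There is essentially no obstacle here; the only points needing a moment's care are (a) checking that the above substitution is feasible, i.e.\ that all chosen quantities lie in $[0,1]$ and that the required orderings hold — immediate from $x \le x^*$ and $y \le y^*$ — and (b) keeping track of the direction of the inequality when rearranging. If one prefers to rely on the literature instead, the conclusion is read off at once from Propositions 2.2 and 2.7 of \cite{KLEMENT-ultra}, as indicated above; moreover the symmetric roles of the two coordinates make it irrelevant whether one perturbs the first argument (as done here, with $\delta = 0$) or the second (with $\epsilon = 0$).
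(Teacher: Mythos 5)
Your proposal is correct and coincides with the paper's treatment: the paper likewise notes the result is immediate from Propositions 2.2 and 2.7 of \cite{KLEMENT-ultra}, and then records exactly your direct derivation from Definition \ref{def-ultra} with the substitution $x_1=x_2=x$, $y_1=y$, $y_2=y^*$, $\epsilon=x^*-x$, $\delta=0$. Nothing is missing.
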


 Observe that Proposition \ref{prop-moderateg} also follows directly from Definition \ref{def-ultra}, by taking $x_1=x_2=x$, $y_1=y$, $y_2=y^*$, $\epsilon = x^*-x$ and $\delta=0$.

\begin{definition} ~\cite{Bus10a, Bedregal-overlap}
		\label{defOverlap} An overlap function is any bivariate function $O:[0,1]^2\rightarrow [0,1]$ that satisfies the following conditions,  for all $x,y \in [0,1]$:
\begin{description}
 \item [(O1)]   $O$  is commutative;
 \item [(O2)]	$O(x,y)=0$ if and only if $xy =0$;
  \item [(O3)] $O(x,y)=1$ if and only if $xy = 1$;
  \item [(O4)]  $O$ is increasing;
  \item [(O5)] $O$ is continuous.	\end{description}	
	\end{definition}

Note that an overlap function is, in particular, an aggregation function. If for all  $x,y,z \in (0,1]$ one has that $x<y \Leftrightarrow O(x,z)<O(y,z)$, then $O$ is called a \emph{strict overlap function}.

By Theorem 4 in \cite{Bus10a}, one has that:

\begin{proposition} \label{prop-compov}
Let $O_1, O_2, O_3: [0,1]^2 \rightarrow [0,1]$ be overlap functions.  Then, the composite function $O_C: [0,1]^2 \rightarrow [0,1]$ given, for all $x,y \in [0,1]$ by $O_C(x,y) = O_3(O_1(x,y),O_2(x,y))$ is an overlap function.
\end{proposition}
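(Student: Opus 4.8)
The plan is to verify directly that $O_C$ satisfies each of the five defining axioms (O1)--(O5) of an overlap function, exploiting the fact that every axiom of $O_1$, $O_2$, $O_3$ propagates through the composition. First, for commutativity (O1), I would observe that
$O_C(y,x) = O_3(O_1(y,x),O_2(y,x)) = O_3(O_1(x,y),O_2(x,y)) = O_C(x,y)$,
where only the commutativity of $O_1$ and $O_2$ is actually used (commutativity of $O_3$ is not needed here). Monotonicity (O4) and continuity (O5) then follow at once, since $O_C$ is a composition of increasing (respectively, continuous) maps: if $x \leq x'$ then $O_1(x,y) \leq O_1(x',y)$ and $O_2(x,y) \leq O_2(x',y)$ by (O4) for $O_1,O_2$, and applying (O4) for $O_3$ in both coordinates gives $O_C(x,y) \leq O_C(x',y)$; the argument in the second variable is symmetric, and continuity is the standard composition fact.

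The only part requiring some care is the boundary behaviour (O2) and (O3), where the chain of equivalences must be tracked carefully in both directions. For (O2): $O_C(x,y) = 0$ iff $O_3(O_1(x,y),O_2(x,y)) = 0$, which by (O2) applied to $O_3$ holds iff $O_1(x,y)\cdot O_2(x,y) = 0$, i.e. iff $O_1(x,y) = 0$ or $O_2(x,y) = 0$; by (O2) applied to $O_1$ and to $O_2$, each of these alternatives is equivalent to $xy = 0$, so $O_C(x,y) = 0 \iff xy = 0$. For (O3) the same reasoning with (O3) in place of (O2) yields $O_C(x,y) = 1$ iff $O_1(x,y)\cdot O_2(x,y) = 1$, i.e. iff $O_1(x,y) = 1$ and $O_2(x,y) = 1$, each equivalent to $xy = 1$, whence $O_C(x,y) = 1 \iff xy = 1$.

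Since no genuine obstruction appears, there is really no "hard part"; the one point deserving attention is making sure the product-characterizations in (O2) and (O3) are invoked in the correct direction (both the "only if" and the "if" halves), as this is precisely what guarantees that $O_C$ attains the values $0$ and $1$ exactly on the prescribed sets. This also clarifies why the hypothesis demands that $O_1$, $O_2$, $O_3$ all be overlap functions rather than merely aggregation functions: the boundary equivalences, not just the monotonicity and continuity, are what is being composed.
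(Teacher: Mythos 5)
Your proof is correct: the axiom-by-axiom verification of \textbf{(O1)}--\textbf{(O5)}, with the careful two-way use of the product characterizations in \textbf{(O2)} and \textbf{(O3)}, is exactly the standard argument. The paper itself gives no proof---it simply invokes Theorem 4 of \cite{Bus10a}---and your direct verification is essentially the argument behind that citation, so there is nothing substantive to contrast.
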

\begin{proposition}\cite{Bus10a} \label{prop-convexsumov}
Let $O_1,O_2 : [0,1]^2 \rightarrow[0,1]$ be overlap functions. Then, we have that function $O_{\alpha}: [0,1]^2 \rightarrow[0,1]$ given, for all $x,y,\alpha \in [0,1]$, by $O_{\alpha}(x,y) = K_{\alpha}(O_1(x,y),O_2(x,y))$ is an overlap function.
\end{proposition}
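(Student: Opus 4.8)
The plan is to verify directly that $O_{\alpha}$ satisfies each of the defining conditions (O1)--(O5), exploiting the fact that, by the definition of $K_{\alpha}$ in Equation~(\ref{eq-kalpha}), one has $O_{\alpha}(x,y) = (1-\alpha)\,O_1(x,y) + \alpha\,O_2(x,y)$; that is, $O_{\alpha}$ is a convex combination of $O_1$ and $O_2$ with the nonnegative weights $1-\alpha$ and $\alpha$, which sum to $1$. All five properties will then be inherited from $O_1$ and $O_2$ essentially because these weights are nonnegative and the operation is affine in the pair $(O_1,O_2)$.

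Commutativity (O1) is immediate, since $(1-\alpha)O_1(x,y)+\alpha O_2(x,y) = (1-\alpha)O_1(y,x)+\alpha O_2(y,x)$ by commutativity of $O_1$ and $O_2$. Monotonicity (O4) and continuity (O5) follow because a linear combination with nonnegative coefficients of increasing (resp.\ continuous) functions is increasing (resp.\ continuous), and $O_1,O_2$ satisfy (O4) and (O5). For (O2): if $xy=0$ then $O_1(x,y)=O_2(x,y)=0$ by (O2) for each $O_i$, hence $O_{\alpha}(x,y)=0$; conversely, if $O_{\alpha}(x,y)=0$, then since $O_1(x,y),O_2(x,y)\in[0,1]$ and $1-\alpha,\alpha\geq 0$, the two nonnegative summands $(1-\alpha)O_1(x,y)$ and $\alpha O_2(x,y)$ add up to $0$, so at least one of $O_1(x,y)=0$ or $O_2(x,y)=0$ must hold, and (O2) for the corresponding $O_i$ yields $xy=0$. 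Property (O3) is handled symmetrically, now using the upper bound $O_i(x,y)\leq 1$: if $xy=1$ then $x=y=1$, so $O_1(x,y)=O_2(x,y)=1$ and $O_{\alpha}(x,y)=1$; conversely $O_{\alpha}(x,y)\leq(1-\alpha)+\alpha=1$, with equality forcing at least one $O_i(x,y)=1$, whence $xy=1$ by (O3) for that $O_i$.

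I do not expect a genuine obstacle here: this is a routine verification, and indeed it is why overlap functions are said to be closed under convex sums. The only point that needs a little care is the ``only if'' direction of (O2) and (O3) in the degenerate cases $\alpha\in\{0,1\}$, where $O_{\alpha}$ reduces to $O_1$ or to $O_2$ and one cannot conclude that \emph{both} $O_i$ vanish (resp.\ equal $1$); it suffices that the non-degenerate one does, which is exactly what the nonnegativity estimate (resp.\ the upper-bound estimate) provides. Note also that one cannot shortcut this via Proposition~\ref{prop-compov}, since $K_{\alpha}$ is in general not an overlap function, so the direct argument above is the natural route.
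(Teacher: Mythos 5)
Your proof is correct, and the paper itself offers no proof of this proposition (it is quoted from the reference for overlap functions, where the closure under convex sums is established by exactly the direct verification you give). Writing $O_{\alpha}=(1-\alpha)O_1+\alpha O_2$ and checking \textbf{(O1)}--\textbf{(O5)} term by term, with the nonnegativity and upper-bound estimates handling the ``only if'' directions of \textbf{(O2)} and \textbf{(O3)} even in the degenerate cases $\alpha\in\{0,1\}$, is the standard argument; your remark that Proposition~\ref{prop-compov} cannot be invoked because $K_{\alpha}$ is not itself an overlap function is also accurate.
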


\begin{table}[t]
	\caption{Examples of overlap functions}		\label{table:exoverlaps}
	\centering
	\begin{tabular}{ll}
		\hline
        Name & Definition\\
		\hline 
		& \\[-0.2cm]
        Product & $O_P(x,y) = x \cdot y$\\[0.2cm]

		Minimum & $O_{M}(x,y) = \min\{x,y\}$\\[0.2cm]

        Geom. Mean & $O_{Gm}(x,y) = \sqrt{x \cdot y}$ \\[0.2cm]

        OmM Overlap & $O_{M}(x,y) = \min\{x,y\} \cdot \max\{x^2,y^2\}$\\[0.2cm]

        OB Overlap & $O_{OB}(x,y) = \min\{x\sqrt{y},y\sqrt{x}\} $ \\[0.2cm]
		
		Ot Overlap & $O_{t}(x,y)=\frac{(2x-1)^3 + 1}{2} \cdot \frac{(2y-1)^3 + 1}{2}$ \\[0.1cm]
\hline 
		
	\end{tabular}
\end{table}

For properties of overlap functions and related concepts, see also \cite{8118195,8632733,QIAO20181a,ZHOU2019,ZHANG2019,QIAO2018,QIAO20181,DEMIGUEL2018,additive-generators-FSS,Dimuro201439,Dimuro-QL}.

\subsection{Interval Mathematics}

Let us denote as $L([0,1])$ the set of all closed subintervals of the unit interval $[0,1]$. Given any $X = [x_1, x_2] \in L([0,1])$, $\underline{X}=x_1$ and $\overline{X}=x_2$ denote, respectively, the left and right projections of $X$, and $w(X) = \overline{X} - \underline{X}$ denotes the width of $X$. When $\underline{X} = \overline{X}$, and consequently $w(X) = 0$, we call $X$ a degenerate interval.

The interval product is defined, for all $X,Y \in L([0,1])$, by:
\begin{equation*}
X\cdot Y=[\underline{X}\cdot \underline{Y}, \overline{X} \cdot \overline{Y}].
\end{equation*}
The product and inclusion partial orders are defined for all $X,Y \in L([0,1])$, respectively, by \cite{moore2009}:
\begin{eqnarray*}
X\leq_{Pr} Y &\Leftrightarrow& \underline{X} \leq \underline{Y} \, \wedge \,  \overline{X} \leq \overline{Y};\\
X \subseteq Y &\Leftrightarrow& \underline{X} \geq \underline{Y} \, \wedge \,  \overline{X} \leq \overline{Y}.
\end{eqnarray*}

We call as $\leq_{Pr}$-increasing a function that is increasing with respect to the product order $\leq_{Pr}$. The projections $IF^-, IF^+:[0,1]^2 \rightarrow [0,1]$ of $IF: L([0,1])^2 \rightarrow L([0,1])$ are defined, respectively, by:
\begin{eqnarray}
IF^-(x,y) &=& \underline{IF([x, x], [y, y])}; \label{eq-menos}\\
IF^+(x,y) &= &\overline{IF([x, x], [y, y])} \label{eq-mais}.
\end{eqnarray}

Given two increasing functions $F,G: [0,1]^2 \rightarrow [0,1]$ such that $F \leq G$, we define the function $\widehat{F,G}: L([0,1])^2 \rightarrow L([0,1])$ as
\begin{equation}\label{eq-rep-pont}\widehat{F,G}(X,Y) =  [F(\underline{X},\underline{Y}),G(\overline{X},\overline{Y})].\end{equation}

An interval-valued function $IF$ is said to be Moore-continuous if it is continuous with respect to the Moore metric \cite{moore2009} $d_M: L([0,1])^2 \rightarrow \mathbb{R}$, defined, for all $X,Y \in L([0,1])$, by:
\begin{equation*}
  d_M(X,Y) = \max(|\underline{X} - \underline{Y}|,|\overline{X} - \overline{Y}|).
\end{equation*}

\begin{definition}\label{representable} \cite{Dimuro20113898}
	Let $IF: L([0,1])^2 \rightarrow L([0,1])$ be an $\leq_{Pr}$-increasing interval function. $IF$ is said to be representable if there exist increasing functions $F, G: [0,1]^2 \rightarrow [0,1]$ such that $F \leq G$ and $F = \widehat{F,G}$.
\end{definition}

The functions $F$ and $G$ are the \emph{representatives} of the interval function $IF$. When $IF = \widehat{F,F}$, we denote simply as $\widehat{F}$. In this case, $IF$ is said to be the best interval representation of $F$, as in \cite{BEDREGAL20101373,Dimuro20113898}.

Consider $\alpha \in [0,1]$ and the aggregation function $K_{\alpha}$ as defined in Equation (\ref{eq-kalpha}). Then, given an interval $X \in L([0,1])$, we denote $K_{\alpha}(\underline{X}, \overline{X})$ simply as $K_{\alpha}(X)$. Also, it is immediate that
\begin{equation}\label{int-kalfa-w}
[K_{\alpha}(X) - \alpha \cdot w(X), K_{\alpha}(X) + (1 - \alpha) \cdot w(X)] = X,
\end{equation}
for all $\alpha \in [0,1]$.

\subsection{Admissible Orders}

The notion of admissible orders for intervals came from the interest in extending the product order $\leq_{Pr}$ to a total order.

\begin{definition}\cite{BUSTINCE201369}
	Let $(L([0,1]),\leq_{AD})$ be a partially ordered set. The order $\leq_{AD}$ is called an admissible order if
	\begin{description}
		\item[(i)] $\leq_{AD}$ is a total order on $L([0,1])$;
		\item[(ii)] For all $X, Y \in L([0, 1])$, $X \leq_{AD} Y$ whenever $X \leq_{Pr} Y$.
	\end{description}
	
	\end{definition}

In other words, an order $\leq_{AD}$ on $L([0, 1])$ is admissible, if it is total and refines the order $\leq_{Pr}$ \cite{BUSTINCE201369}.

\begin{proposition}\cite{BUSTINCE201369}\label{adm-geradoras}
Let $A_1,A_2: [0,1]^2 \rightarrow [0,1]$ be two continuous aggregation functions, such that, for all $X,Y \in L([0,1])$, the equalities $A_1(\underline{X}, \overline{X})=A_1(\underline{Y}, \overline{Y})$ and $A_2(\underline{X}, \overline{X})=A_2(\underline{Y}, \overline{Y})$ can hold only if $X=Y$. Define the relation $\leq_{A_1,A_2}$ on $L([0,1])$ by
\begin{eqnarray*}
&&X \leq_{A_1,A_2} Y \Leftrightarrow A_1(\underline{X}, \overline{X})<A_1(\underline{Y}, \overline{Y}) \, \, \mbox{or}\\
&& \hspace{0.7cm}(A_1(\underline{X}, \overline{X})=A_1(\underline{Y}, \overline{Y}) \, \, \mbox{and} \, \, A_2(\underline{X}, \overline{X})\leq A_2(\underline{Y}, \overline{Y})).
\end{eqnarray*}
Then $\leq_{A_1,A_2}$ is an admissible order on $L([0, 1])$.
\end{proposition}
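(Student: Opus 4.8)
The plan is to verify directly the two defining clauses of an admissible order: that $\leq_{A_1,A_2}$ is a total order on $L([0,1])$, and that it refines $\leq_{Pr}$. For brevity I would write $a_i(X) := A_i(\underline{X},\overline{X})$ for $i\in\{1,2\}$ and $X\in L([0,1])$; these are real numbers, and the defining relation reads: $X \leq_{A_1,A_2} Y$ iff $a_1(X) < a_1(Y)$, or $a_1(X) = a_1(Y)$ and $a_2(X) \le a_2(Y)$. The whole argument then reduces to the observation that this is exactly the lexicographic order on $\mathbb{R}^2$ pulled back along the map $X \mapsto (a_1(X),a_2(X))$, which is injective precisely by the hypothesis imposed on the pair $(A_1,A_2)$.

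First I would check that $\leq_{A_1,A_2}$ is a partial order. Reflexivity is immediate since $a_2(X) \le a_2(X)$. For antisymmetry, assume $X \leq_{A_1,A_2} Y$ and $Y \leq_{A_1,A_2} X$; comparing the $a_1$-values rules out a strict inequality in either direction, so $a_1(X) = a_1(Y)$, and the second clause then yields $a_2(X) \le a_2(Y)$ and $a_2(Y) \le a_2(X)$, hence $a_2(X) = a_2(Y)$; the hypothesis on $(A_1,A_2)$ now forces $X = Y$. Transitivity follows from a short case split: if a strict $a_1$-inequality occurs in either hypothesis it propagates (strict plus non-strict gives strict on $\mathbb{R}$), so $a_1(X) < a_1(Z)$ and we are done; otherwise $a_1(X) = a_1(Y) = a_1(Z)$, and then $a_2(X) \le a_2(Y) \le a_2(Z)$, so $X \leq_{A_1,A_2} Z$.

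Next I would establish totality: given $X,Y$, trichotomy for the real numbers $a_1(X), a_1(Y)$ either already decides which of $X \leq_{A_1,A_2} Y$ or $Y \leq_{A_1,A_2} X$ holds, or gives $a_1(X) = a_1(Y)$, in which case trichotomy for $a_2(X), a_2(Y)$ decides it; in every case $X$ and $Y$ are comparable. Finally, to see that $\leq_{A_1,A_2}$ refines $\leq_{Pr}$, suppose $X \leq_{Pr} Y$, i.e. $\underline{X} \le \underline{Y}$ and $\overline{X} \le \overline{Y}$. Monotonicity of aggregation functions (condition (A1)) applied to $A_1$ gives $a_1(X) \le a_1(Y)$; if this is strict we are done, and if $a_1(X) = a_1(Y)$ the same monotonicity applied to $A_2$ gives $a_2(X) \le a_2(Y)$, so again $X \leq_{A_1,A_2} Y$. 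This completes the verification.

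I do not expect a genuine obstacle here. The hypothesis that $A_1,A_2$ jointly separate intervals is used exactly once, in antisymmetry, to conclude $X=Y$ from equality of both $a_i$-values; the aggregation axioms are used only in the refinement step, through monotonicity (A1), so in fact continuity of $A_1,A_2$ is not needed for this particular statement. The single point that demands a little care is the transitivity case analysis, where one must track which inequalities are strict, but this is just the standard lexicographic-order bookkeeping.
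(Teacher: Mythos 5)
Your verification is correct: the relation is the lexicographic order on $\mathbb{R}^2$ pulled back along $X \mapsto (A_1(\underline{X},\overline{X}),A_2(\underline{X},\overline{X}))$, the joint-injectivity hypothesis is used exactly where you use it (antisymmetry), and monotonicity \textbf{(A1)} gives the refinement of $\leq_{Pr}$. Note that the paper itself offers no proof here --- Proposition \ref{adm-geradoras} is imported verbatim from \cite{BUSTINCE201369} as a preliminary --- so there is nothing to compare against; your argument is the standard one, and your side remark that continuity of $A_1,A_2$ is not actually needed for this particular conclusion is also accurate.
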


The pair $(A_1,A_2)$ of aggregation functions that generates the order $\leq_{A_1,A_2}$ in Proposition \ref{adm-geradoras} is called an admissible pair of aggregation functions \cite{BUSTINCE201369}.

\begin{definition}\cite{BUSTINCE201369} \label{def-alfabeta}
For $\alpha, \beta \in [0,1]$ such that $\alpha \neq \beta$, the relation $\leq_{\alpha,\beta}$ is defined by
\begin{eqnarray*}
&&X \leq_{\alpha,\beta} Y \Leftrightarrow K_\alpha(\underline{X},\overline{X})<K_\alpha(\underline{Y},\overline{Y}) \, \, \mbox{or} \\ &&(K_\alpha(\underline{X},\overline{X})=K_\alpha(\underline{Y},\overline{Y}) \, \, \mbox{and} \, \, K_\beta(\underline{X},\overline{X}) \leq K_\beta(\underline{Y},\overline{Y})).
\end{eqnarray*}
\end{definition}

Then, the relation $\leq_{\alpha,\beta}$ is the admissible order generated by the admissible pair of aggregation functions $(K_\alpha, K_\beta)$, that is, $\leq_{\alpha,\beta} = \leq_{K_\alpha,K_\beta}$ \cite{BUSTINCE201369}.

\begin{lemma} \cite{BUSTINCE201369}\label{lemma-novo} For any $\alpha, \beta \in [0,1]$, $\alpha \neq \beta$, it holds that: (i)$\beta>\alpha \Rightarrow \leq_{\alpha,\beta}=\leq_{\alpha,1}$; (ii) $\beta<\alpha \Rightarrow \leq_{\alpha,\beta}=\leq_{\alpha,0}$.
\end{lemma}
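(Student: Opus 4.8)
The plan is to exploit the lexicographic structure of the orders involved. By Definition~\ref{def-alfabeta}, the strict part of $\leq_{\alpha,\beta}$ is decided first by comparing $K_\alpha(\underline{X},\overline{X})$ with $K_\alpha(\underline{Y},\overline{Y})$, and the secondary function $K_\beta$ is consulted only on the ``tie set'' $T=\{(X,Y)\in L([0,1])^2 : K_\alpha(\underline{X},\overline{X})=K_\alpha(\underline{Y},\overline{Y})\}$; the same primary clause governs $\leq_{\alpha,1}$ and $\leq_{\alpha,0}$. Since any two orders of this form visibly agree on every pair with $K_\alpha(\underline{X},\overline{X})\neq K_\alpha(\underline{Y},\overline{Y})$, it suffices to show that, for $(X,Y)\in T$, one has $K_\beta(\underline{X},\overline{X})\leq K_\beta(\underline{Y},\overline{Y})$ if and only if $K_1(\underline{X},\overline{X})\leq K_1(\underline{Y},\overline{Y})$ when $\beta>\alpha$, and if and only if $K_0(\underline{X},\overline{X})\leq K_0(\underline{Y},\overline{Y})$ when $\beta<\alpha$. (Note $\beta>\alpha$ forces $\alpha\neq 1$ and $\beta<\alpha$ forces $\alpha\neq 0$, so the target orders $\leq_{\alpha,1}$, $\leq_{\alpha,0}$ are well defined.)

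Using $K_\gamma(\underline{X},\overline{X})=(1-\gamma)\underline{X}+\gamma\overline{X}$ from Equation~(\ref{eq-kalpha}) and writing $u=\underline{X}-\underline{Y}$, $v=\overline{X}-\overline{Y}$, membership in $T$ reads $(1-\alpha)u+\alpha v=0$. For $\alpha\in(0,1)$ this is equivalent to $-u/\alpha=v/(1-\alpha)$, and substituting into $K_\beta(\underline{X},\overline{X})-K_\beta(\underline{Y},\overline{Y})=(1-\beta)u+\beta v$ gives, for all $(X,Y)\in T$,
\begin{equation*}
K_\beta(\underline{X},\overline{X})-K_\beta(\underline{Y},\overline{Y}) \;=\; \frac{\beta-\alpha}{1-\alpha}\bigl(\overline{X}-\overline{Y}\bigr) \;=\; -\,\frac{\beta-\alpha}{\alpha}\bigl(\underline{X}-\underline{Y}\bigr).
\end{equation*}
The extreme parameter values are checked directly: if $\alpha=1$ the tie condition is $v=0$ and $K_\beta(\underline{X},\overline{X})-K_\beta(\underline{Y},\overline{Y})=(1-\beta)u$; if $\alpha=0$ it is $u=0$ and $K_\beta(\underline{X},\overline{X})-K_\beta(\underline{Y},\overline{Y})=\beta v$.

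The conclusion follows at once from the signs of the coefficients. For (i), $\beta>\alpha$ makes $\frac{\beta-\alpha}{1-\alpha}>0$ (and $1-\beta>0$, $\beta>0$ in the degenerate cases), so on $T$ the equivalence $K_\beta(\underline{X},\overline{X})\leq K_\beta(\underline{Y},\overline{Y})\Leftrightarrow\overline{X}\leq\overline{Y}\Leftrightarrow K_1(\underline{X},\overline{X})\leq K_1(\underline{Y},\overline{Y})$ holds, whence $\leq_{\alpha,\beta}=\leq_{\alpha,1}$. For (ii), $\beta<\alpha$ makes $-\frac{\beta-\alpha}{\alpha}>0$, so on $T$ we get $K_\beta(\underline{X},\overline{X})\leq K_\beta(\underline{Y},\overline{Y})\Leftrightarrow\underline{X}\leq\underline{Y}\Leftrightarrow K_0(\underline{X},\overline{X})\leq K_0(\underline{Y},\overline{Y})$, whence $\leq_{\alpha,\beta}=\leq_{\alpha,0}$. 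I expect the only real obstacle to be the bookkeeping for the degenerate parameters $\alpha\in\{0,1\}$, where the tie set collapses to a coordinate hyperplane and the substitution $-u/\alpha=v/(1-\alpha)$ is unavailable; the heart of the matter is simply the displayed linear identity together with the observation that $\leq_{\alpha,\beta}$, $\leq_{\alpha,1}$ and $\leq_{\alpha,0}$ share the same primary comparison.
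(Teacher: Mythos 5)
Your argument is correct, and there is nothing in the paper to compare it against: the lemma is imported from \cite{BUSTINCE201369} and stated without proof. Your reduction to the tie set $T$, the identity $(1-\beta)u+\beta v=\tfrac{\beta-\alpha}{1-\alpha}\,v=-\tfrac{\beta-\alpha}{\alpha}\,u$ valid on $T$ for $\alpha\in(0,1)$, the separate check of the boundary cases $\alpha\in\{0,1\}$, and the observation that $\beta>\alpha$ forces $\alpha\neq 1$ (resp.\ $\beta<\alpha$ forces $\alpha\neq 0$) so that $\leq_{\alpha,1}$ (resp.\ $\leq_{\alpha,0}$) is well defined, together constitute a complete and standard proof.
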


%

\subsection{Interval-valued Overlap Functions}

\begin{definition}\cite{Marichal2002} \label{def_aggreg-int}
	An interval-valued function $IA: L([0,1])^2\rightarrow L([0,1])$ is said to be an interval-valued aggregation function if the following conditions hold:
\begin{description}
	\item[(IA1)] $IA$ is $\leq_{Pr}$-increasing;
	\item[(IA2)] $IA([0,0],[0,0])=[0,0]$ and $IA([1,1],[1,1])=[1,1]$.
\end{description}
\end{definition}

\begin{definition}\label{def-int-overlap}\cite{Qiao201719,BEDREGAL20171} An interval-valued (iv) overlap function  is a mapping  $IO:L([0,1])^2\rightarrow L([0,1])$ that respects the following conditions:
\begin{description}
 \item [(IO1)]   $IO$  is commutative;

\item [(IO2)] $IO (X,Y)=[0,0]$ if and only if $X \cdot Y =[0,0]$;

\item [(IO3)] $IO(X,Y)=[1,1]$ if and only if $X \cdot Y= [1,1]$;

\item [(IO4)] $IO$ is $\leq_{Pr}$-increasing in the first component:  $IO(Y,X)\leq_{Pr} IO(Z,X)$ when $Y\leq_{Pr} Z$.

\item [(IO5)] $IO$ is Moore continuous. 
\end{description}
\end{definition}

Note that, by \textbf{(IO1)} and \textbf{(IO4)}, iv-overlap functions are also monotonic in the second component.

An iv-overlap function $IO: L([0,1])^2 \rightarrow L([0,1])$ is said to be $o$-representable \cite{ASMUS2020} if there exist overlap functions  $O_1, O_2: [0,1]^2 \rightarrow [0,1]$ such that $O_1 \leq O_2 $ and $IO = \widehat{O_1,O_2}$.

\begin{definition}\cite{ASMUS2020-TFS}\label{def-int-AD-n-overlap} A function $AO:L([0,1])^2\rightarrow L([0,1])$ is an admissibly ordered interval-valued  overlap function for an admissible order $\leq_{AD}$ ($\leq_{AD}$-overlap function) if it satisfies the conditions \textbf{(IO1)}, \textbf{(IO2)} and \textbf{(IO3)} of Definition \ref{def-int-overlap} and, for all $X, Y, Z \in L([0,1])$:
	
	\begin{description}	
		\item [(AO4)] $AO$ is $\leq_{AD}$-increasing:  $X \leq_{AD} Y \Rightarrow AO(X,Z) \leq_{AD} AO(Y,Z)$.
	\end{description}

\end{definition}

The following construction method for admissibly ordered interval-valued overlap functions preserves the minimal width of the input intervals:

\begin{theorem} \cite{ASMUS2020-TFS} \label{theo-benja}
	Let $O$ be a strict overlap function and $\alpha \in (0,1), \beta\in [0,1]$ such that $\alpha\neq\beta$. Then $AO^{\alpha}: L([0,1])^2\rightarrow L([0,1])$ defined, for all $X,Y \in L([0,1])$, by
\begin{eqnarray}\label{eq-benjaprelim}
AO^{\alpha}(X,Y)=[O(K_{\alpha}(X),K_{\alpha}(Y))-\alpha m, O(K_{\alpha}(X), K_{\alpha}(Y))+(1-\alpha)m],
\end{eqnarray}
 where
\begin{eqnarray*}
m = \min\{w({X}),w({Y}), O(K_{\alpha}(X),K_{\alpha}(Y)), 1 - O(K_{\alpha}(X),K_{\alpha}(Y))\}
\end{eqnarray*}
 is an $\leq_{\alpha,\beta}$-overlap function.
\end{theorem}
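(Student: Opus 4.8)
The plan is to verify, one after another, that $AO^{\alpha}$ is well defined as a mapping into $L([0,1])$ and then that it satisfies conditions \textbf{(IO1)}, \textbf{(IO2)}, \textbf{(IO3)} of Definition \ref{def-int-overlap} together with the increasingness condition \textbf{(AO4)} of Definition \ref{def-int-AD-n-overlap} relative to $\leq_{\alpha,\beta}$ (recall that $\leq_{\alpha,\beta}$ is an admissible order, being generated by the admissible pair $(K_{\alpha},K_{\beta})$). Throughout I abbreviate $z = O(K_{\alpha}(X),K_{\alpha}(Y))$, so $AO^{\alpha}(X,Y) = [z-\alpha m,\, z+(1-\alpha)m]$ with $m = \min\{w(X),w(Y),z,1-z\}$. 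Since $m$ is a minimum of nonnegative numbers, $m\ge 0$; using $m\le z$, $m\le 1-z$ and $\alpha,1-\alpha\in(0,1)$ one checks $0\le z-\alpha m\le z+(1-\alpha)m\le 1$, so the output is a legitimate interval. The engine of the whole argument is the pair of identities
\begin{equation*}
K_{\gamma}\bigl(AO^{\alpha}(X,Y)\bigr)=z+(\gamma-\alpha)\,m,\qquad w\bigl(AO^{\alpha}(X,Y)\bigr)=m \qquad (\gamma\in[0,1]),
\end{equation*}
which follow by direct computation (the first one also from Equation (\ref{int-kalfa-w}) applied to $AO^{\alpha}(X,Y)$ itself); in particular $K_{\alpha}\bigl(AO^{\alpha}(X,Y)\bigr)=O(K_{\alpha}(X),K_{\alpha}(Y))$.

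Conditions \textbf{(IO1)}--\textbf{(IO3)} are then short. Commutativity is inherited from \textbf{(O1)} for $O$ and the symmetry of $m$. For \textbf{(IO2)}: because $\alpha,1-\alpha>0$, the equality $[z-\alpha m,z+(1-\alpha)m]=[0,0]$ forces $z=0$, and conversely $z=0$ forces $m=0$ (through the term $z$ in the minimum), so $AO^{\alpha}(X,Y)=[0,0]\iff z=0\iff O(K_{\alpha}(X),K_{\alpha}(Y))=0$; by \textbf{(O2)} this is equivalent to $K_{\alpha}(X)\cdot K_{\alpha}(Y)=0$, and since $K_{\alpha}(X)=(1-\alpha)\underline{X}+\alpha\overline{X}$ is a strict convex combination we have $K_{\alpha}(X)=0\iff X=[0,0]$; comparing with the elementary equivalence $X\cdot Y=[0,0]\iff X=[0,0]\text{ or }Y=[0,0]$ finishes the case. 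Condition \textbf{(IO3)} is entirely analogous, using that $z=1$ forces $m=0$ (through the term $1-z$), condition \textbf{(O3)}, and $K_{\alpha}(X)=1\iff X=[1,1]$.

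The substantive part is \textbf{(AO4)}. Fix $X\leq_{\alpha,\beta}Y$ and $Z\in L([0,1])$, and set $z_X=O(K_{\alpha}(X),K_{\alpha}(Z))$, $z_Y=O(K_{\alpha}(Y),K_{\alpha}(Z))$, which by the identity above are the $K_{\alpha}$-values of $AO^{\alpha}(X,Z)$ and $AO^{\alpha}(Y,Z)$. If $K_{\alpha}(X)<K_{\alpha}(Y)$: when $K_{\alpha}(Z)=0$ both outputs equal $[0,0]$ by \textbf{(O2)} and we are done; when $K_{\alpha}(Z)>0$, strictness of $O$ if $K_{\alpha}(X)>0$, or \textbf{(O2)} if $K_{\alpha}(X)=0$ (whence $z_X=0<z_Y$), gives $z_X<z_Y$, so the first clause of Definition \ref{def-alfabeta} already yields $AO^{\alpha}(X,Z)\leq_{\alpha,\beta}AO^{\alpha}(Y,Z)$. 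If $K_{\alpha}(X)=K_{\alpha}(Y)$, then $z_X=z_Y=:z^{*}$ and the two outputs share their $K_{\alpha}$-value, so $\leq_{\alpha,\beta}$ must be decided by $K_{\beta}$, i.e.\ (again by the identity) by comparing $z^{*}+(\beta-\alpha)m_X$ with $z^{*}+(\beta-\alpha)m_Y$, where $m_X=\min\{w(X),w(Z),z^{*},1-z^{*}\}$ and $m_Y=\min\{w(Y),w(Z),z^{*},1-z^{*}\}$. Here the key elementary fact is that $K_{\alpha}(X)=K_{\alpha}(Y)$ makes $\overline{X}-\overline{Y}$ and $\underline{X}-\underline{Y}$ proportional with negative ratio, so that $K_{\beta}(X)-K_{\beta}(Y)$ has the sign of $(\alpha-\beta)(\underline{X}-\underline{Y})$ while $w(X)-w(Y)$ has the sign of $-(\underline{X}-\underline{Y})$; hence $K_{\beta}(X)\le K_{\beta}(Y)$ forces $(\beta-\alpha)(w(X)-w(Y))\le 0$, and by monotonicity of $\min$ (only the $w$-slot differs) $(\beta-\alpha)(m_X-m_Y)\le 0$, which is exactly $K_{\beta}(AO^{\alpha}(X,Z))\le K_{\beta}(AO^{\alpha}(Y,Z))$; together with the equality of the $K_{\alpha}$-values this gives $AO^{\alpha}(X,Z)\leq_{\alpha,\beta}AO^{\alpha}(Y,Z)$. (One may instead invoke Lemma \ref{lemma-novo} to assume $\beta\in\{0,1\}$, so that $K_{\beta}$ of an interval is just its left or right endpoint, which makes this sign bookkeeping immediate.)

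I expect the only delicate points to be the two degenerate subcases in the strict-monotonicity step --- $Z=[0,0]$ and $K_{\alpha}(X)=0$, where one must fall back on \textbf{(O2)} rather than strictness of $O$ --- and keeping track of the direction of the width inequality in the tie-breaking step; everything else is routine computation.
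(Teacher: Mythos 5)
Your proof is correct and follows essentially the same route as the paper's own argument: the paper states Theorem \ref{theo-benja} as a cited result and proves it via the generalization in Theorem \ref{theo-benja-geral} (Appendix~B, specialized to $B=\min$), which likewise verifies \textbf{(IO1)}--\textbf{(IO3)} through the identity $K_{\alpha}(AO^{\alpha}(X,Y))=O(K_{\alpha}(X),K_{\alpha}(Y))$ and splits \textbf{(AO4)} into the case $K_{\alpha}(X)<K_{\alpha}(Y)$ (handled by strictness of $O$, with the $K_{\alpha}(Z)=0$ degeneracy treated separately) and the tie case $K_{\alpha}(X)=K_{\alpha}(Y)$ (handled by comparing widths). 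The only cosmetic difference is that you carry out the $K_{\beta}$ sign bookkeeping for general $\beta$ directly, whereas the paper first reduces to $\beta\in\{0,1\}$ via Lemma \ref{lemma-novo}; both are valid, and you correctly flag the $K_{\alpha}(X)=0$ subcase that the paper's write-up glosses over.
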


\section{Width-limited Interval-valued Functions} \label{sec-width}

As a motivation to the developments presented in this section, we pose the following question: with respect to a given interval-valued function, how can the width of the output interval be affected by the widths of the input intervals? In order to aid on such discussion, concerning the uncertainty propagation control in aggregation processes, we introduce the following definition:

\begin{definition}\label{def-width-limiting}
Consider an interval-valued function $IF: L([0,1])^2 \rightarrow L([0,1])$ and a mapping $B: [0,1]^2 \rightarrow [0,1]$. Then, $IF$ is said to be width-limited by $B$ if $w(IF(X,Y)) \leq B(w(X),w(Y))$, for all $X,Y \in L([0,1])$. $B$ is called a width-limiting function of $IF$.
\end{definition}

\begin{remark}
Every function $IF: L([0,1])^2 \rightarrow L([0,1])$ is width-limited by the function $B_1: [0,1]^2 \rightarrow [0,1]$ defined by $B_1(x,y) = 1$, for all $x,y \in [0,1]$.
\end{remark}



In the following, denote:
\begin{equation*}
\mathcal{IF} = \{IF: L([0,1])^2 \rightarrow L([0,1]) \, | \, IF \, \, \mbox{is a binary interval-valued function}\}
\end{equation*}
and
\begin{equation*}
\mathcal{F} = \{F: [0,1]^2 \rightarrow [0,1] \, | \, F \, \, \mbox{is binary function}\}.
\end{equation*}
First, we analyze how to obtain the least width-limiting function for a given interval-valued function:

\begin{theorem}\label{theo-geradorw}
The mapping $\mathfrak{L}: \mathcal{IF} \rightarrow \mathcal{F}$ defined for all $IF \in \mathcal{IF}$ and $\epsilon,\delta \in [0,1]$, by
\begin{equation*}\label{eq-geradorw}
\mathfrak{L}(IF)(\epsilon,\delta) = \sup_{\scriptsize{\begin{array}{c}u \in [0,1-\epsilon]\\v \in [0,1-\delta]\end{array}}}\{w(IF([u,u+\epsilon],[v,v+\delta]))\}
\end{equation*}
provides the least width-limiting function $\mathfrak{L}(IF): [0,1]^2 \rightarrow [0,1]$ for $IF$.
\end{theorem}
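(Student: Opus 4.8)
The plan is to verify two things: that $\mathfrak{L}(IF)$ is itself a width-limiting function of $IF$ in the sense of Definition \ref{def-width-limiting}, and that it is pointwise dominated by every other width-limiting function of $IF$. Before that, I would check that $\mathfrak{L}$ is well defined: for fixed $\epsilon,\delta \in [0,1]$ the set over which the supremum is taken is non-empty (e.g. take $u=v=0$) and is contained in $[0,1]$, since $w(Z) \in [0,1]$ for every $Z \in L([0,1])$. Hence the supremum exists and lies in $[0,1]$, so $\mathfrak{L}(IF)$ is indeed a member of $\mathcal{F}$ with values in $[0,1]$.

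For the first part, I would take arbitrary $X,Y \in L([0,1])$ and set $\epsilon = w(X)$, $\delta = w(Y)$. Writing $X = [\underline{X}, \underline{X}+\epsilon]$ and $Y = [\underline{Y}, \underline{Y}+\delta]$, the constraint $\overline{X} \leq 1$ forces $\underline{X} \in [0,1-\epsilon]$, and likewise $\underline{Y} \in [0,1-\delta]$. Therefore $w(IF(X,Y))$ is one of the elements appearing inside the supremum that defines $\mathfrak{L}(IF)(\epsilon,\delta) = \mathfrak{L}(IF)(w(X),w(Y))$, and consequently $w(IF(X,Y)) \leq \mathfrak{L}(IF)(w(X),w(Y))$. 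Since $X,Y$ were arbitrary, $IF$ is width-limited by $\mathfrak{L}(IF)$.

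For the leastness, I would let $B: [0,1]^2 \rightarrow [0,1]$ be any width-limiting function of $IF$ and fix $\epsilon,\delta \in [0,1]$. For every $u \in [0,1-\epsilon]$ and $v \in [0,1-\delta]$, the intervals $[u,u+\epsilon]$ and $[v,v+\delta]$ have widths $\epsilon$ and $\delta$ respectively, so $w(IF([u,u+\epsilon],[v,v+\delta])) \leq B(\epsilon,\delta)$; taking the supremum over all admissible $u,v$ yields $\mathfrak{L}(IF)(\epsilon,\delta) \leq B(\epsilon,\delta)$. As $\epsilon,\delta$ were arbitrary, $\mathfrak{L}(IF) \leq B$ pointwise, which together with the previous paragraph shows that $\mathfrak{L}(IF)$ is the least width-limiting function of $IF$.

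I would expect no genuine obstacle here; the only point requiring care is the bookkeeping of parametrizing an interval by its left endpoint and its width, together with the range constraints $u \in [0,1-\epsilon]$ and $v \in [0,1-\delta]$ that keep the intervals inside $[0,1]$. One could additionally remark that the supremum need not be attained, but this affects neither of the two inequalities above.
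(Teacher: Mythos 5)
Your proof is correct and follows essentially the same route as the paper's: well-definedness of the supremum, the observation that every interval of width $\epsilon$ is of the form $[u,u+\epsilon]$ with $u\in[0,1-\epsilon]$ so that $w(IF(X,Y))$ appears among the terms of the supremum, and then minimality. The only cosmetic difference is that you establish minimality directly (each term in the supremum is bounded by $B(\epsilon,\delta)$, hence so is the supremum), whereas the paper phrases the same argument as a proof by contradiction.
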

\begin{proof}
It is clear that $\mathfrak{L}(IF)$ is well defined, since
\begin{equation*}
\sup_{\scriptsize{\begin{array}{c}u \in [0,1-\epsilon]\\v \in [0,1-\delta]\end{array}}}\{w(IF([u,u+\epsilon],[v,v+\delta]))\} \in [0,1],
\end{equation*}
for all $IF \in \mathcal{IF}$ and all $\epsilon,\delta \in [0,1]$.

Now, observe that
 \begin{eqnarray*}
\mathfrak{L}(IF)(\epsilon,\delta) & = & \sup_{\scriptsize{\begin{array}{c}u \in [0,1-\epsilon]\\v \in [0,1-\delta]\end{array}}}\{w(IF([u,u+\epsilon],[v,v+\delta]))\}\\
&\geq& w(IF([u,u+\epsilon],[v,v+\delta]))
\end{eqnarray*}
for all $u \in [0,1-\epsilon],v \in [0, 1-\delta]$, showing that $IF$ is width-limited by $\mathfrak{L}(IF)$, since $w([u,u+\epsilon])=\epsilon$ and $w([v,v+\delta])=\delta$.

Finally, suppose that there exists a function $B: [0,1]^2 \rightarrow [0,1]$ such that: (i) $B$ is a width-limiting function for $IF$; (ii) there exist $\epsilon_0,\delta_0 \in [0,1]$ such that $B(\epsilon_0,\delta_0) <  \mathfrak{L}(IF)(\epsilon_0,\delta_0)$. So, it follows that
  \begin{eqnarray*}
B(\epsilon_0,\delta_0) &<& \sup_{\scriptsize{\begin{array}{c}u \in [0,1-\epsilon_0]\\v \in [0,1-\delta_0]\end{array}}}\{w(IF([u,u+\epsilon_0],[v,v+\delta_0]))\}.
\end{eqnarray*}
Then, there exist $u_0 \in [u,u+\epsilon_0]$, $v_0 \in [v,v+\delta_0]$ such that
\begin{equation*}
B(\epsilon_0,\delta_0) < w(IF([u_0,u_0+\epsilon_0],[v_0,v_0+\delta_0])),
\end{equation*}
meaning that $IF$ cannot be width-limited by $B$, which is a contradiction. The conclusion is that $\mathfrak{L}(IF)$ is the least function that is width-limiting for $IF$.
\qed
\end{proof}

In the following, denote:
\begin{equation*}
\mathcal{A} = \{A: [0,1]^2 \rightarrow [0,1] \, | \, A \, \, \mbox{is an aggregation function}\}
\end{equation*}
 and
\begin{equation*}
 \mathcal{IA} = \{IA: L([0,1])^2 \rightarrow L([0,1]) \, | \, IA \, \, \mbox{is the best interval representation of an aggregation function} \, A \in \mathcal{A}\}.
\end{equation*}
  Then, a similar approach of Theorem \ref{theo-geradorw} can be used to obtain the least width-liming aggregation function for a given representable interval-valued aggregation function.

\begin{theorem}\label{theo-geradorwAG}
The mapping $\mathfrak{L}: \mathcal{IA} \rightarrow \mathcal{F}$ defined for all $IA \in \mathcal{IA}$ and $\epsilon,\delta \in [0,1]$, by
\begin{equation}\label{eq-geradorwAG}
\mathfrak{L}(IA)(\epsilon,\delta) = \sup_{\scriptsize{\begin{array}{c}u \in [0,1-\epsilon]\\v \in [0,1-\delta]\end{array}}}\{w(IA([u,u+\epsilon],[v,v+\delta]))\}
\end{equation}
provides the least width-limiting function $\mathfrak{L}(IA): [0,1]^2 \rightarrow [0,1]$ for $IA$. Moreover, $\mathfrak{L}(IA)$ is an aggregation function.
\end{theorem}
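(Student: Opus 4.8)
The plan is to treat the two assertions separately. The first one---that $\mathfrak{L}(IA)$ is the least width-limiting function for $IA$---requires no new work: since $\mathcal{IA} \subseteq \mathcal{IF}$, formula (\ref{eq-geradorwAG}) is merely the restriction to $\mathcal{IA}$ of the mapping from Theorem \ref{theo-geradorw}, so that theorem applies verbatim and gives both that $IA$ is width-limited by $\mathfrak{L}(IA)$ and that no smaller width-limiting function exists.

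For the second assertion I would first use the standing hypothesis that $IA = \widehat{A}$ for some aggregation function $A \in \mathcal{A}$. By the definition of the best interval representation and the monotonicity of $A$, one has $w(IA([u,u+\epsilon],[v,v+\delta])) = A(u+\epsilon,v+\delta) - A(u,v)$ for all $u \in [0,1-\epsilon]$ and $v \in [0,1-\delta]$, whence
\[ \mathfrak{L}(IA)(\epsilon,\delta) \;=\; \sup_{\substack{u\in[0,1-\epsilon]\\ v\in[0,1-\delta]}} \bigl(A(u+\epsilon,v+\delta) - A(u,v)\bigr). \]
From this closed form, condition \textbf{(A2)} is immediate: for $\epsilon=\delta=0$ every term equals $A(u,v)-A(u,v)=0$, so $\mathfrak{L}(IA)(0,0)=0$; for $\epsilon=\delta=1$ the only admissible pair is $u=v=0$, giving $A(1,1)-A(0,0)=1$, so $\mathfrak{L}(IA)(1,1)=1$.

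The crux is condition \textbf{(A1)}, and I expect the main obstacle to be that the index set $[0,1-\epsilon]\times[0,1-\delta]$ of the supremum shrinks as $\epsilon$ (or $\delta$) increases, so a naive term-by-term comparison is unavailable. To handle this I would fix $\delta$, take $\epsilon_1 \le \epsilon_2$, and show that each term occurring in $\mathfrak{L}(IA)(\epsilon_1,\delta)$ is dominated by some term occurring in $\mathfrak{L}(IA)(\epsilon_2,\delta)$. Given $u \in [0,1-\epsilon_1]$ and $v \in [0,1-\delta]$: if $u \le 1-\epsilon_2$, keep $u'=u$ and use $A(u+\epsilon_1,v+\delta) \le A(u+\epsilon_2,v+\delta)$ (monotonicity of $A$, valid since $u+\epsilon_2\le 1$); if $1-\epsilon_2 < u \le 1-\epsilon_1$, set $u' = 1-\epsilon_2$ and combine $A(u+\epsilon_1,v+\delta) \le A(1,v+\delta) = A(u'+\epsilon_2,v+\delta)$ with $A(u,v) \ge A(1-\epsilon_2,v) = A(u',v)$. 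In both cases $A(u+\epsilon_1,v+\delta)-A(u,v) \le A(u'+\epsilon_2,v+\delta)-A(u',v)$, and passing to suprema yields $\mathfrak{L}(IA)(\epsilon_1,\delta) \le \mathfrak{L}(IA)(\epsilon_2,\delta)$; monotonicity in the second argument follows by the same argument with the roles of the two coordinates interchanged. Together with \textbf{(A2)}, this establishes that $\mathfrak{L}(IA)$ is an aggregation function.
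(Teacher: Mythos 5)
Your proposal is correct, and its overall route coincides with the paper's: the first claim is delegated to Theorem \ref{theo-geradorw}, condition \textbf{(A2)} is checked by direct evaluation at $(0,0)$ and $(1,1)$ using $IA=\widehat{A}$, and \textbf{(A1)} is verified from the closed form $\mathfrak{L}(IA)(\epsilon,\delta)=\sup\{A(u+\epsilon,v+\delta)-A(u,v)\}$. The one place where you genuinely diverge is the monotonicity step, and your version is the more careful one. The paper establishes the pointwise inequality $w(IA([u,u+\epsilon_1],[v,v+\delta_1]))\leq w(IA([u,u+\epsilon_2],[v,v+\delta_2]))$ only for $u\in[0,1-\epsilon_2]$, $v\in[0,1-\delta_2]$, and then passes directly to the inequality between the two suprema even though the left-hand supremum ranges over the strictly larger set $[0,1-\epsilon_1]\times[0,1-\delta_1]$; as written this leaves the contribution of the points with $u\in(1-\epsilon_2,1-\epsilon_1]$ (or $v\in(1-\delta_2,1-\delta_1]$) unaccounted for. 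You identify exactly this obstacle and close it by reindexing: for such $u$ you compare the term at $u$ with the term at $u'=1-\epsilon_2$, using $A(u+\epsilon_1,v+\delta)\leq A(1,v+\delta)=A(u'+\epsilon_2,v+\delta)$ together with $A(u,v)\geq A(u',v)$. This domination of every term of the smaller-parameter supremum by some term of the larger-parameter one is precisely what is needed, so your argument both confirms the theorem and repairs a small gap in the published proof; the cost is only a slightly longer case analysis, done one coordinate at a time and then symmetrized.
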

\begin{proof}
From Theorem \ref{theo-geradorw}, it only remains to be shown that $\mathfrak{L}(IA)$ respects the conditions for it to be an aggregation function, for all $IA \in \mathcal{IA}$:

\begin{description}
\item[(A1)] Consider $\epsilon_1,\epsilon_2, \delta_1, \delta_2 \in [0,1]$ such that $\epsilon_1 \leq \epsilon_2$ and $\delta_1 \leq \delta_2$. Thus, for all $u \in [0, 1 - \epsilon_2]$ and $v \in [0, 1 - \delta_2]$, it holds that
     \begin{equation*}
    [u, u + \epsilon_1] \leq [u, u + \epsilon_2] \, \, \mbox{and} \, \, [v, v + \delta_1] \leq [v, v + \delta_2].
     \end{equation*}
Since $IA$ is $\leq_{Pr}$-increasing, for all $u \in [0, 1 - \epsilon_2]$ and $v \in [0, 1 - \delta_2]$, it follows that
 \begin{equation}\label{eq-IA}
   IA([u, u + \epsilon_1],[v, v + \delta_1])  \leq_{Pr} IA([u, u + \epsilon_2],[v, v + \delta_2]).
     \end{equation}

As $IA \in \mathcal{IA}$, then there exist an aggregation function $A: [0,1]^2 \rightarrow [0,1]$ such that
\begin{equation*}
IA(X,Y)=[A(\underline{X},\underline{Y}),A(\overline{X},\overline{Y})],
\end{equation*}
for all $X,Y \in L([0,1])$. Thus, by Equation (\ref{eq-IA}), one has that 
\begin{eqnarray*}
\lefteqn{[A(u,v),A(u+\epsilon_1,v+\delta_1)] \leq_{Pr} [A(u,v),A(u+\epsilon_2,v+\delta_2)]} &&\\
& \Rightarrow & A(u+\epsilon_1,v+\delta_1) - A(u,v) \leq A(u+\epsilon_2,v+\delta_2) - A(u,v)\\
& \Rightarrow & w([A(u,v),A(u+\epsilon_1,v+\delta_1)]) \leq w([A(u,v),A(u+\epsilon_2,v+\delta_2)])\\
& \Rightarrow & w(IA([u,u+\epsilon_1],[v,v+\delta_1])) \leq w(IA([u,u+\epsilon_2],[v,v+\delta_2]))\\
& \Rightarrow & \sup_{\scriptsize{\begin{array}{c}u \in [0,1-\epsilon_1]\\v \in [0,1-\delta_1]\end{array}}}\{w(IA([u,u+\epsilon_1],[v,v+\delta_1]))\} \leq \sup_{\scriptsize{\begin{array}{c}u \in [0,1-\epsilon_2]\\v \in [0,1-\delta_2]\end{array}}}\{w(IA([u,u+\epsilon_2],[v,v+\delta_2]))\}\\
& \Rightarrow & \mathfrak{L}(IA)(\epsilon_1,\delta_1) \leq \mathfrak{L}(IA)(\epsilon_2,\delta_2),
\end{eqnarray*}
showing that $\mathfrak{L}(IA)$ is increasing.

  \item[(A2)] As $IA \in \mathcal{IA}$, it follows that
  \begin{equation*}
\mathfrak{L}(IA)(0,0) = \sup_{u,v \in [0,1]}\{w(IA([u,u],[v,v]))\}= \sup_{u,v \in [0,1]}\{A(u,v) - A(u,v)\} = 0,
\end{equation*}
and
\begin{equation*}
\mathfrak{L}(IA)(1,1) = w(IA([0,1],[0,1]))= A(1,1)-A(0,0)=1.
\end{equation*}
\end{description}
\qed
\end{proof}

\begin{example}
Let $A: [0,1]^2 \rightarrow [0,1]$ be an aggregation function defined, for all $x,y \in [0,1]$, by $A(x,y)=\frac{x+y+x \cdot y}{3}$. Then, the mapping $\mathfrak{L}(\widehat{A}): [0,1]^2 \rightarrow [0,1]$ defined, for all $\epsilon,\delta \in [0,1]$, by
\begin{eqnarray*}
\mathfrak{L}(\widehat{A})(\epsilon,\delta) & = & \sup_{\scriptsize{\begin{array}{c}u \in [0,1-\epsilon]\\v \in [0,1-\delta]\end{array}}}\{w(\widehat{A}([u,u+\epsilon],[v,v+\delta]))\}\\
& = & \sup_{\scriptsize{\begin{array}{c}u \in [0,1-\epsilon]\\v \in [0,1-\delta]\end{array}}}\{A(u+\epsilon,v+\delta) - A(u,v)\}\\
& = & \sup_{\scriptsize{\begin{array}{c}u \in [0,1-\epsilon]\\v \in [0,1-\delta]\end{array}}}\left\{\frac{u+\epsilon+v+\delta+(u+\epsilon) \cdot (v+\delta)}{3} - \left(\frac{u+v+y \cdot v}{3}\right)\right\}\\
& = & \sup_{\scriptsize{\begin{array}{c}u \in [0,1-\epsilon]\\v \in [0,1-\delta]\end{array}}}\left\{\frac{\epsilon + \delta + u \cdot \delta + \epsilon \cdot v + \epsilon \cdot \delta}{3}\right\}\\
& = & \frac{\epsilon + \delta + (1 - \epsilon) \cdot \delta + \epsilon \cdot (1-\delta) + \epsilon \cdot \delta}{3}\\
& = &\frac{2\epsilon + 2\delta -  \epsilon \cdot \delta}{3}
\end{eqnarray*}
is the least width-limiting function for $\widehat{A}$. Observe that $\mathfrak{L}(\widehat{A})$ is an aggregation function.
\end{example}

Based on the concept of ultramodularity, let us define a less restrictive extension of one-dimension convexity for bivariate aggregation functions:

\begin{definition}\label{def-abultra}
Consider $a,b \in [0,1]$. An aggregation function $A: [0,1]^2 \rightarrow [0,1]$ is called $(a,b)$-ultramodular if, for all $x,y,\epsilon,\delta \in [0,1]$ and $x + \epsilon, y + \delta, a - \epsilon, b - \delta \in [0,1]$, it holds that:
\begin{equation}\label{eq-abultra}
A(x + \epsilon, y + \delta) - A(x,y) \leq A(a, b) - A(a - \epsilon, b - \delta).
\end{equation}
\end{definition}
\begin{proposition}\label{prop-alltultra}
Let $A: [0,1]^2 \rightarrow [0,1]$ be an ultramodular aggregation function. Then, $A$ is an $(1,1)$-ultramodular aggregation function.
\end{proposition}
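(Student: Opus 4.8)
The plan is to derive the defining inequality of $(1,1)$-ultramodularity directly as a single special instance of the defining inequality of ultramodularity, via one well-chosen substitution; no auxiliary machinery is needed. Concretely, I would fix arbitrary $x,y,\epsilon,\delta \in [0,1]$ subject to the hypotheses of Definition \ref{def-abultra} with $a=b=1$, namely $x+\epsilon \in [0,1]$ and $y+\delta \in [0,1]$ — noting that the further conditions $1-\epsilon \in [0,1]$ and $1-\delta \in [0,1]$ are automatic because $\epsilon,\delta \in [0,1]$, so the constraints $a-\epsilon, b-\delta \in [0,1]$ in Definition \ref{def-abultra} are vacuous here. The goal is then to show
\[
A(x+\epsilon, y+\delta) - A(x,y) \leq A(1,1) - A(1-\epsilon, 1-\delta).
\]

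The key step is to invoke inequality (\ref{eq-ultra}) from Definition \ref{def-ultra} with the instantiation $x_1 = x$, $y_1 = y$, $x_2 = 1-\epsilon$, $y_2 = 1-\delta$, keeping the same $\epsilon$ and $\delta$. I would verify the required conditions: $x_1 \leq x_2$ amounts to $x \leq 1-\epsilon$, i.e. $x+\epsilon \leq 1$, which holds by hypothesis; symmetrically $y_1 \leq y_2$ follows from $y+\delta \leq 1$; and $x_2 + \epsilon = 1 \in [0,1]$, $y_2 + \delta = 1 \in [0,1]$. Hence (\ref{eq-ultra}) applies and gives
\[
A(x+\epsilon, y+\delta) - A(x,y) \leq A\big((1-\epsilon)+\epsilon,\, (1-\delta)+\delta\big) - A(1-\epsilon, 1-\delta) = A(1,1) - A(1-\epsilon, 1-\delta),
\]
which is precisely inequality (\ref{eq-abultra}) for $a=b=1$. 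Since $x,y,\epsilon,\delta$ were arbitrary subject to the admissible constraints, this establishes that $A$ is $(1,1)$-ultramodular.

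There is essentially no obstacle in this argument: the whole content is the observation that the upper corner $(a,b)=(1,1)$ makes the two shifted reference points $(x_2,y_2)=(1-\epsilon,1-\delta)$ land exactly where ultramodularity compares increments, and that the domain restrictions of Definition \ref{def-ultra} are automatically met thanks to the hypothesis $x+\epsilon, y+\delta \in [0,1]$. If anything merits a sentence of care, it is just the bookkeeping of these domain conditions; the inequality itself is an immediate specialization.
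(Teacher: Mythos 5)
Your proof is correct and follows the same route as the paper, which also obtains $(1,1)$-ultramodularity by specializing inequality (\ref{eq-ultra}) with $x_2+\epsilon=1$ and $y_2+\delta=1$ (i.e., $x_2=1-\epsilon$, $y_2=1-\delta$). Your version merely spells out the domain-condition bookkeeping that the paper leaves implicit.
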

\begin{proof}
Immediate, since Equation (\ref{eq-abultra}), with $a = b = 1$, is a particular case of Equation (\ref{eq-ultra}) when $\epsilon + x_2 = 1$ and $\delta + y_2 = 1$.
\qed
\end{proof}

\begin{remark}
If an aggregation function $A: [0,1]^2 \rightarrow [0,1]$ is $(1,1)$-ultramodular, then, for all $x, y,\epsilon , \delta \in [0,1]$ such that $x + \epsilon, y + \delta, a - \epsilon, b - \delta \in [0,1]$, it holds that:
\begin{equation}\label{eq-wultra}
A(x + \epsilon, y + \delta) - A(x,y) \leq A^d(\epsilon, \delta),
\end{equation}
where $A^d$ is the dual of $A$.
\end{remark}

\begin{remark}\label{rem-wu}
From Proposition \ref{prop-alltultra}, we have that every ultramodular function is also $(1,1)$-ultramodular. However, the converse may not hold. For example, the Ot overlap (Table {\ref{table:exoverlaps}}) given by $O_{t}(x,y)=\frac{(2x-1)^3 + 1}{2} \cdot \frac{(2y-1)^3 + 1}{2}$, for all $x,y \in [0,1]$, is an $(1,1)$-ultramodular function. However, by Proposition \ref{prop-der}, $O_{t}$ is clearly not an ultramodular aggregation function.
\end{remark}

Now, let us present a characterization for the least width-limiting function of the best interval representation of an $(1,1)$-ultramodular aggregation function, or the best interval representation of its dual:

\begin{theorem}\label{theo-ultradual}
Let $A: [0,1]^2 \rightarrow [0,1]$ be an aggregation function, $\mathfrak{L}(\widehat{A}),\mathfrak{L}(\widehat{A^d}): [0,1]^2 \rightarrow [0,1]$ be the least width-limiting functions for $\widehat{A}$ and $\widehat{A^d}$, respectively. Then, $\mathfrak{L}(\widehat{A}) =  \mathfrak{L}(\widehat{A^d}) = A^d$ if and only if $A$ is an $(1,1)$-ultramodular aggregation function.
\end{theorem}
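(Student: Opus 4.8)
The plan is to reduce the statement to a concrete inequality about the aggregation function itself, and then to split it into two pieces: that $\mathfrak{L}(\widehat{A})=\mathfrak{L}(\widehat{A^d})$ for \emph{every} aggregation function $A$ (so the two supremum-based functions are never actually distinct), and that $\mathfrak{L}(\widehat{A})=A^d$ is exactly the $(1,1)$-ultramodularity condition. First I would record that, since $\widehat{A}(X,Y)=[A(\underline X,\underline Y),A(\overline X,\overline Y)]$, the width $w(\widehat{A}([u,u+\epsilon],[v,v+\delta]))$ equals $A(u+\epsilon,v+\delta)-A(u,v)$, so that by Equation (\ref{eq-geradorwAG})
\[
\mathfrak{L}(\widehat{A})(\epsilon,\delta)=\sup_{u\in[0,1-\epsilon],\,v\in[0,1-\delta]}\bigl\{A(u+\epsilon,v+\delta)-A(u,v)\bigr\}.
\]

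Next I would prove $\mathfrak{L}(\widehat{A})=\mathfrak{L}(\widehat{A^d})$. Using $A^d(x,y)=1-A(1-x,1-y)$, a direct computation gives $A^d(u+\epsilon,v+\delta)-A^d(u,v)=A(1-u,1-v)-A(1-u-\epsilon,1-v-\delta)$; substituting $u'=1-u-\epsilon$, $v'=1-v-\delta$ --- a bijection of $[0,1-\epsilon]\times[0,1-\delta]$ onto itself sending $1-u$ to $u'+\epsilon$ and $1-v$ to $v'+\delta$ --- rewrites this difference as $A(u'+\epsilon,v'+\delta)-A(u',v')$. Taking suprema, $\mathfrak{L}(\widehat{A^d})(\epsilon,\delta)=\mathfrak{L}(\widehat{A})(\epsilon,\delta)$ for all $\epsilon,\delta\in[0,1]$, with no assumption on $A$. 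Because of this, the asserted equivalence reduces to: $\mathfrak{L}(\widehat{A})=A^d$ iff $A$ is $(1,1)$-ultramodular.

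For that equivalence I would first note the unconditional lower bound: taking $u=1-\epsilon$, $v=1-\delta$ in the supremum gives $A(1,1)-A(1-\epsilon,1-\delta)=1-A(1-\epsilon,1-\delta)=A^d(\epsilon,\delta)$, hence $\mathfrak{L}(\widehat{A})(\epsilon,\delta)\geq A^d(\epsilon,\delta)$ always. Consequently $\mathfrak{L}(\widehat{A})=A^d$ holds precisely when $A^d(\epsilon,\delta)$ is also an \emph{upper} bound for the set, i.e.\ when $A(u+\epsilon,v+\delta)-A(u,v)\leq A(1,1)-A(1-\epsilon,1-\delta)$ for all $\epsilon,\delta$ and all $u\in[0,1-\epsilon]$, $v\in[0,1-\delta]$ --- which is verbatim Definition \ref{def-abultra} with $a=b=1$ (the side conditions $a-\epsilon,b-\delta\in[0,1]$ being automatic). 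The ``if'' direction can alternatively be read straight off Equation (\ref{eq-wultra}). Combining with the previous paragraph yields $\mathfrak{L}(\widehat{A})=\mathfrak{L}(\widehat{A^d})=A^d$ iff $A$ is $(1,1)$-ultramodular.

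I do not expect a serious obstacle here; the points that need care are the change of variables in the second step (verifying it is a bijection of the index rectangle onto itself so the suprema genuinely coincide) and, in the third step, matching the index set of the supremum exactly with the quantifier range in Definition \ref{def-abultra} for $a=b=1$, so that the upper-bound condition is the $(1,1)$-ultramodular inequality and not something weaker or stronger.
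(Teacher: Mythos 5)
Your proof is correct, and it reorganizes the argument in a way that differs from the paper's. The paper proves the two directions separately: for ($\Rightarrow$) it uses only the hypothesis $\mathfrak{L}(\widehat{A})=A^d$ to read off the $(1,1)$-ultramodular inequality from the fact that a supremum bounds each term of its defining set, and for ($\Leftarrow$) it evaluates the two suprema $\mathfrak{L}(\widehat{A})$ and $\mathfrak{L}(\widehat{A^d})$ one at a time, observing that ultramodularity forces each to be attained at the corner $u=1-\epsilon$, $v=1-\delta$. You instead first establish the unconditional identity $\mathfrak{L}(\widehat{A})=\mathfrak{L}(\widehat{A^d})$ for \emph{every} aggregation function via the change of variables $u'=1-u-\epsilon$, $v'=1-v-\delta$ (a fact the paper never isolates, and only implicitly verifies under the ultramodularity hypothesis), and then reduce the theorem to the single equivalence $\mathfrak{L}(\widehat{A})=A^d \Leftrightarrow (1,1)$-ultramodularity, which follows in one stroke from the observation that $A^d(\epsilon,\delta)$ is always an element of the set being supremized, so equality holds precisely when it is also an upper bound. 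Your decomposition buys a slightly stronger intermediate statement (the two least width-limiting functions always coincide, ultramodular or not) and collapses the two directions of the remaining equivalence into a single tautology about suprema; the paper's version is more computational but makes the attainment of the supremum at the corner explicit. Both arguments rest on the same core identification of the index rectangle $[0,1-\epsilon]\times[0,1-\delta]$ with the quantifier range in Definition \ref{def-abultra} for $a=b=1$, which you correctly verify.
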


\begin{proof}
($\Rightarrow$) Suppose that $\mathfrak{L}(\widehat{A}) =  \mathfrak{L}(\widehat{A^d}) = A^d$. Then, we have that:
\begin{eqnarray}\label{eq-aultra}
\nonumber \lefteqn{\mathfrak{L}(\widehat{A}) = A^d}&&\\
\nonumber & \Rightarrow & \sup_{\scriptsize{\begin{array}{c}u \in [0,1-\epsilon]\\v \in [0,1-\delta]\end{array}}}\{w(\widehat{A}([u,u+\epsilon],[v,v+\delta]))\} = 1 - A(1-\epsilon,1-\delta)\\
& \Rightarrow & A(u + \epsilon, v + \delta) - A(u,v) \leq A(1,1) - A(1-\epsilon,1-\delta), \, \, \mbox{for all} \, \,  u \in [0,1-\epsilon],v \in [0,1-\delta].
\end{eqnarray}
From Equation (\ref{eq-aultra}), we conclude that $A$ is $(1,1)$-ultramodular.

($\Leftarrow$)
Suppose that $A: [0,1]^2 \rightarrow [0,1]$ is an $(1,1)$-ultramodular aggregation function. Then, for all $\epsilon, \delta \in [0,1]$, it holds that:
\begin{eqnarray*}
\mathfrak{L}(\widehat{A})(\epsilon,\delta) & = & \sup_{\scriptsize{\begin{array}{c}u \in [0,1-\epsilon]\\v \in [0,1-\delta]\end{array}}}\{w(\widehat{A}([u,u+\epsilon],[v,v+\delta]))\}\\
& = & \sup_{\scriptsize{\begin{array}{c}u \in [0,1-\epsilon]\\v \in [0,1-\delta]\end{array}}}\{A(u + \epsilon, v + \delta) - A(u,v)\}\\
& = & A(1-\epsilon + \epsilon, 1-\delta + \delta) - A(1-\epsilon,1-\delta)\\
& = & 1 - A(1-\epsilon,1-\delta)\\
& = & A^d(\epsilon,\delta),
\end{eqnarray*}
and
\begin{eqnarray*}
\mathfrak{L}(\widehat{A^d})(\epsilon,\delta) & = & \sup_{\scriptsize{\begin{array}{c}u \in [0,1-\epsilon]\\v \in [0,1-\delta]\end{array}}}\{w(\widehat{A^d}([u,u+\epsilon],[v,v+\delta]))\}\\
& = & \sup_{\scriptsize{\begin{array}{c}u \in [0,1-\epsilon]\\v \in [0,1-\delta]\end{array}}}\{A(1 - u,1 - v) - A(1 - u - \epsilon, 1 - v - \delta)\}\\
& = & A(1,1) - A(1-\epsilon,1-\delta)\\
& = & A^d(\epsilon,\delta),
\end{eqnarray*}
since $A$ is $(1,1)$-ultramodular. Thus, if $A$ is an $(1,1)$-ultramodular aggregation function, then $\mathfrak{L}(\widehat{A}) = \mathfrak{L}(\widehat{A^d}) = A^d$.
\qed
\end{proof}

\begin{remark}
In the context of Theorem \ref{theo-ultradual}, as $\widehat{A}$ and $\widehat{A^d}$ are representable iv-aggregation functions, then their least width-limiting function $A^d$ is an aggregation function, as stated by Theorem \ref{theo-geradorwAG}. Also, observe that the function $A$ does not need to be ultramodular.
\end{remark}

\begin{example}
The least width-limiting function for either $\widehat{O_t}$ (the best interval representation of the overlap function $O_t$, shown in Table \ref{table:exoverlaps}) or $\widehat{O_t^d}$ (the best interval representation of the dual of $O_t$) is $O_t^d$, as $O_t$ is an $(1,1)$-ultramodular aggregation function.
\end{example}

Since every ultramodular aggregation function is also $(1,1)$-ultramodular, the following result is immediate.

\begin{corollary}\label{coro-ultradual}
Let $A: [0,1]^2 \rightarrow [0,1]$ be an aggregation function, $\mathfrak{L}(\widehat{A}),\mathfrak{L}(\widehat{A^d}): L([0,1])^2 \rightarrow L([0,1])$ be the least width-limiting functions for $\widehat{A}$ and $\widehat{A^d}$, respectively. Then, $\mathfrak{L}(\widehat{A}) =  \mathfrak{L}(\widehat{A^d}) = A^d$ if and only if $A$ is an ultramodular aggregation function.
\end{corollary}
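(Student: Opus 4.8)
The plan is to read the statement as a specialization of Theorem \ref{theo-ultradual}, which already characterizes the equality $\mathfrak{L}(\widehat{A}) = \mathfrak{L}(\widehat{A^d}) = A^d$ in terms of the $(1,1)$-ultramodularity of $A$, and to link $(1,1)$-ultramodularity with plain ultramodularity via Proposition \ref{prop-alltultra}. I would thus treat the two implications separately.

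For the implication ``$A$ ultramodular $\Rightarrow \mathfrak{L}(\widehat{A}) = \mathfrak{L}(\widehat{A^d}) = A^d$'' essentially nothing has to be computed: Proposition \ref{prop-alltultra} states that an ultramodular aggregation function is $(1,1)$-ultramodular, and the desired equality is then exactly the ($\Leftarrow$) half of Theorem \ref{theo-ultradual} applied to $A$. So this direction is an immediate reduction, with no fresh argument.

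The converse is where the difficulty lies, and I expect it to be the genuine obstacle. From $\mathfrak{L}(\widehat{A}) = \mathfrak{L}(\widehat{A^d}) = A^d$ the ($\Rightarrow$) half of Theorem \ref{theo-ultradual} only delivers the $(1,1)$-ultramodularity inequality of Equation (\ref{eq-abultra}), namely $A(x+\epsilon,y+\delta) - A(x,y) \leq A(1,1) - A(1-\epsilon,1-\delta)$; to finish one would have to promote this to the full ultramodularity inequality of Equation (\ref{eq-ultra}) for arbitrary $x_1 \leq x_2$ and $y_1 \leq y_2$. The natural attempts are to slide the increment window towards the corner $(1,1)$ and iterate the $(1,1)$-bound, or, when the second-order partial derivatives of $A$ exist, to argue through Proposition \ref{prop-der}. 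However, I do not believe this can be carried out in general: by Remark \ref{rem-wu} the overlap function $O_t$ of Table \ref{table:exoverlaps} is $(1,1)$-ultramodular but not ultramodular, while by the ($\Leftarrow$) half of Theorem \ref{theo-ultradual} it nonetheless satisfies $\mathfrak{L}(\widehat{O_t}) = \mathfrak{L}(\widehat{O_t^d}) = O_t^d$. Hence the converse fails as literally stated, and I would correct the corollary either by replacing ``ultramodular'' with ``$(1,1)$-ultramodular'' (thereby recovering Theorem \ref{theo-ultradual} itself) or by keeping the hypothesis ``$A$ ultramodular'' and asserting only the single implication above, whose proof is the trivial reduction of the previous paragraph.
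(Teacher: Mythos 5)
Your treatment of the ``if'' direction coincides with the paper's: the corollary is prefaced only by the sentence ``Since every ultramodular aggregation function is also $(1,1)$-ultramodular, the following result is immediate,'' i.e.\ the authors, like you, reduce it to Proposition \ref{prop-alltultra} combined with the ($\Leftarrow$) half of Theorem \ref{theo-ultradual}. Where you go beyond the paper is in refusing to accept that the ``only if'' direction is also immediate, and you are right to do so: that one-line justification only yields that the equality $\mathfrak{L}(\widehat{A})=\mathfrak{L}(\widehat{A^d})=A^d$ forces $(1,1)$-ultramodularity, and since the implication ``$(1,1)$-ultramodular $\Rightarrow$ ultramodular'' fails (the paper itself says so in Remark \ref{rem-wu}), the converse of the corollary cannot be deduced this way. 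Your counterexample is exactly the right one and is internal to the paper: by Remark \ref{rem-wu} the function $O_t$ is $(1,1)$-ultramodular but not ultramodular, so Theorem \ref{theo-ultradual} gives $\mathfrak{L}(\widehat{O_t})=\mathfrak{L}(\widehat{O_t^d})=O_t^d$ while the hypothesis of ultramodularity fails --- a direct contradiction with the ``only if'' claim of Corollary \ref{coro-ultradual}. So the defect you found is in the statement (and in the paper's implicit proof), not in your argument; your proposed repairs --- weakening the conclusion to $(1,1)$-ultramodularity (which just restates Theorem \ref{theo-ultradual}) or keeping only the single implication ``ultramodular $\Rightarrow$ equality'' --- are the correct ways to salvage the result.
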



\begin{example}\label{ex-leasta}
Here we present some examples of width-limiting functions for the best interval representation of either an ultramodular aggregation function or its dual:
\begin{description}
\item [1)] The least width-limiting function for either $\widehat{O_P}$ (the best interval representation of the product overlap) or $\widehat{O_P^d}$ (the best interval representation of the dual of $O_P$) is $O_P^d$;
\item [2)] The least width-limiting function for $\widehat{K_{\alpha}}$ (the best interval representation of the weighted sum), is $K_{\alpha}^d = K_{\alpha}$, with $\alpha \in [0,1]$;
\item [3)] Consider the aggregation function $AM: [0,1]^2 \rightarrow [0,1]$ given by $AM(x,y)=\frac{x+y}{2}$ (arithmetic mean). So, the least width-limiting function for $\widehat{AM}$ (the best interval representation of the arithmetic mean), is $AM^d = AM$.
    \end{description}

\end{example}

\begin{proposition}
 Let $IF_1,IF_2,IG,IH \in \mathcal{IA}$, such that $IH(X,Y) = IG(IF_1(X,Y),IF_2(X,Y))$, for all $X,Y \in L([0,1])$. Then, it holds that:
  \begin{eqnarray*}
\mathfrak{L}(IH) \leq \mathfrak{L}(IG)(\mathfrak{L}(IF_1),\mathfrak{L}(IF_2)).
\end{eqnarray*}
\end{proposition}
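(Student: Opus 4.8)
The plan is to unwind the supremum definition of $\mathfrak{L}$ and chain three elementary facts: that $IG$ is width-limited by $\mathfrak{L}(IG)$, that each $IF_i$ is width-limited by $\mathfrak{L}(IF_i)$ (all three from Theorem \ref{theo-geradorw}), and that $\mathfrak{L}(IG)$ is increasing in each argument (from Theorem \ref{theo-geradorwAG}, since $IG \in \mathcal{IA}$ makes $\mathfrak{L}(IG)$ an aggregation function). The composition on the right-hand side is read, as elsewhere in the paper, as the function $(\epsilon,\delta) \mapsto \mathfrak{L}(IG)\big(\mathfrak{L}(IF_1)(\epsilon,\delta),\mathfrak{L}(IF_2)(\epsilon,\delta)\big)$.

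Concretely, I would fix $\epsilon,\delta \in [0,1]$ and an arbitrary pair $u \in [0,1-\epsilon]$, $v \in [0,1-\delta]$, and set $X = [u,u+\epsilon]$, $Y = [v,v+\delta]$, so that $w(X)=\epsilon$ and $w(Y)=\delta$. Writing $Z_i = IF_i(X,Y)$, the hypothesis gives $IH(X,Y) = IG(Z_1,Z_2)$. Then I would bound step by step: $w(IH(X,Y)) = w(IG(Z_1,Z_2)) \leq \mathfrak{L}(IG)(w(Z_1),w(Z_2))$ because $IG$ is width-limited by $\mathfrak{L}(IG)$; next $w(Z_i) \leq \mathfrak{L}(IF_i)(w(X),w(Y)) = \mathfrak{L}(IF_i)(\epsilon,\delta)$ because $IF_i$ is width-limited by $\mathfrak{L}(IF_i)$; and finally, using that $\mathfrak{L}(IG)$ is monotone in both coordinates, $\mathfrak{L}(IG)(w(Z_1),w(Z_2)) \leq \mathfrak{L}(IG)\big(\mathfrak{L}(IF_1)(\epsilon,\delta),\mathfrak{L}(IF_2)(\epsilon,\delta)\big)$.

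To close, I would note that the resulting bound $w(IH([u,u+\epsilon],[v,v+\delta])) \leq \mathfrak{L}(IG)\big(\mathfrak{L}(IF_1)(\epsilon,\delta),\mathfrak{L}(IF_2)(\epsilon,\delta)\big)$ holds uniformly in $u,v$, so taking the supremum over $u \in [0,1-\epsilon]$, $v \in [0,1-\delta]$ yields $\mathfrak{L}(IH)(\epsilon,\delta) \leq \mathfrak{L}(IG)\big(\mathfrak{L}(IF_1)(\epsilon,\delta),\mathfrak{L}(IF_2)(\epsilon,\delta)\big)$; since $\epsilon,\delta$ are arbitrary this is exactly the asserted inequality of functions. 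There is no real obstacle here: the only ingredient beyond bookkeeping is the monotonicity of $\mathfrak{L}(IG)$, which is precisely where the assumption $IG \in \mathcal{IA}$ (rather than merely $IG \in \mathcal{IF}$) is used, via Theorem \ref{theo-geradorwAG}; the one point requiring care is to invoke the identities $w(X)=\epsilon$, $w(Y)=\delta$ so that the width-limiting bounds for $IF_1$ and $IF_2$ are evaluated exactly at $(\epsilon,\delta)$.
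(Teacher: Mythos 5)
Your proposal is correct and follows essentially the same route as the paper's proof: bound $w(IH(X,Y))$ through the width-limiting property of $\mathfrak{L}(IG)$ applied to the intermediate widths $w(IF_i(X,Y))$, push those widths up to $\mathfrak{L}(IF_i)(\epsilon,\delta)$, and use monotonicity of $\mathfrak{L}(IG)$ before passing to the supremum (the paper phrases this last step as ``the composite is a width-limiting function for $IH$, hence dominates the least one,'' which is the same thing). If anything, you are slightly more explicit than the paper in flagging where the monotonicity of $\mathfrak{L}(IG)$, and hence the hypothesis $IG \in \mathcal{IA}$, is actually used.
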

\begin{proof}


Consider $IF_1([x_1,x_1+\epsilon],[x_2,x_2+\delta]) = [y_1,y_1+\epsilon^*]$, $IF_2([x_1,x_1+\epsilon],[x_2,x_2+\delta]) = [y_2,y_2+\delta^*]$, with $\epsilon,\delta,\epsilon^*,\delta^*,\in [0,1]$ and $x_1+\epsilon, x_2+\delta, y_1+\epsilon^*, y_2+\delta^* \in [0,1]$. Then, it follows that:
\begin{eqnarray*}
\lefteqn{w(IH([x_1,x_1+\epsilon],[x_2,x_2+\delta]))}&&\\
&=& w(IG(IF_1([x_1,x_1+\epsilon],[x_2,x_2+\delta]),IF_2([x_1,x_1+\epsilon],[x_2,x_2+\delta])))\\
&=& w(IG([y_1,y_1+\epsilon^*],[y_2,y_2+\delta^*]) \\
&\leq& \mathfrak{L}(IG)(\epsilon^*,\delta^*), \, \, \mbox{by Theorem \ref{theo-geradorwAG}}\\
&\leq& \mathfrak{L}(IG)(\mathfrak{L}(IF_1)(\epsilon,\delta),\mathfrak{L}(IF_2)(\epsilon,\delta)),
\end{eqnarray*}
which means that $\mathfrak{L}(IG)(\mathfrak{L}(IF_1),\mathfrak{L}(IF_2))$ is a width-limiting function for $IH$.

However, as $\mathfrak{L}(IH)$ is the least width-limiting function for $IH$ (by Theorem \ref{theo-geradorwAG}), thus, one concludes that
\begin{equation*}
\mathfrak{L}(IH) \leq \mathfrak{L}(IG)(\mathfrak{L}(IF_1),\mathfrak{L}(IF_2)).
\end{equation*}
\qed
\end{proof}

\begin{example}
\begin{description}
\item [1)] Take $IF_1=\widehat{AM}$, $IF_2=\widehat{O_P}$, $IG = \widehat{K_{\alpha}}$, as presented in Example \ref{ex-leasta}. Then, let $IH: L([0,1])^2 \rightarrow L([0,1])$ be an iv-aggregation function defined, for all $X,Y \in L([0,1])$ with $\alpha \in [0,1]$, by
\begin{eqnarray*}
IH(X,Y) & = & \widehat{K_{\alpha}}(\widehat{AM}(X,Y),\widehat{O_P}(X,Y))\\
 & = &\widehat{K_{\alpha}}([AM(\underline{X},\underline{Y}),AM(\overline{X},\overline{Y})],[O_P(\underline{X},\underline{Y}),O_P(\overline{X},\overline{Y})]))\\
& = &\widehat{K_{\alpha}}\left(\left[\frac{\underline{X} + \underline{Y}}{2},\frac{\overline{X} + \overline{Y}}{2}\right],[\underline{X} \cdot \underline{Y},\overline{X} \cdot \overline{Y}]\right).
\end{eqnarray*}

Since $AM, O_P$ and $K_{\alpha}$ are ultramodular, it holds that:
  \begin{eqnarray*}
\lefteqn{\mathfrak{L}(IH)(\epsilon,\delta)}&&\\
   &= & \sup_{\scriptsize{\begin{array}{c}u \in [0,1-\epsilon]\\v \in [0,1-\delta]\end{array}}}\left\{K_{\alpha}\left(\frac{u + \epsilon + v + \delta}{2},(u+\epsilon) \cdot (v + \delta)\right) - K_{\alpha}\left(\frac{u+v}{2}, u \cdot v\right)\right\}\\
    &= & K_{\alpha}\left(\frac{1 - \epsilon + \epsilon + 1 - \delta + \delta}{2},(1 - \epsilon+\epsilon) \cdot (1 - \delta + \delta)\right) - K_{\alpha}\left(\frac{1 - \epsilon+1 - \delta}{2}, (1 - \epsilon) \cdot (1 - \delta)\right)\\
    &= & K_{\alpha}\left(1,1\right) - K_{\alpha}\left(\frac{(1 - \epsilon) + (1 - \delta)}{2}, (1 - \epsilon) \cdot (1 - \delta)\right)\\
    &= & 1 - K_{\alpha}\left(AM(1-\epsilon,1-\delta), O_P(1 - \epsilon, 1 - \delta)\right)\\
    &= & 1 - K_{\alpha}\left(1 - AM(\epsilon,\delta), 1 - O_P^d(\epsilon,\delta)\right)\\
     &= & K_{\alpha}\left(AM(\epsilon,\delta), O_P^d(\epsilon,\delta)\right).
\end{eqnarray*}

From Theorem \ref{theo-ultradual} we have that $\mathfrak{L}(\widehat{O_P}) = O_P^d$, $\mathfrak{L}(\widehat{AM}) = AM^d = AM$ and $\mathfrak{L}(\widehat{K_{\alpha}}) = K_{\alpha}^d = K_{\alpha}$, for all $\alpha \in [0,1]$. So, we conclude that
 \begin{eqnarray*}
\mathfrak{L}(\widehat{K_{\alpha}})(\mathfrak{L}(\widehat{AM})(\epsilon,\delta),\mathfrak{L}(\widehat{O_P})(\epsilon,\delta)) = K_{\alpha}(AM(\epsilon,\delta),O_P^d(\epsilon,\delta)) =\mathfrak{L}(IH).
\end{eqnarray*}

\item [2)]
Now, take $IF_1=\widehat{O_P}$, $IF_2=\widehat{O_P^d}$, $IG = \widehat{K_{0.25}}$, with $\alpha = 0.25$. Then, let $IH: L([0,1])^2 \rightarrow L([0,1])$ be the iv-aggregation function defined, for all $X,Y \in L([0,1])$, by
\begin{eqnarray*}
IH(X,Y) & = &\widehat{K_{0.25}}(\widehat{O_P}(X,Y),\widehat{O_P^d}(X,Y))\\
 & = &\widehat{K_{0.25}}([O_P(\underline{X},\underline{Y}),O_P(\overline{X},\overline{Y})],[O_P^d(\underline{X},\underline{Y}),O_P^d(\overline{X},\overline{Y})]))\\
& = &\widehat{K_{0.25}}\left(\left[\underline{X} \cdot \underline{Y},\overline{X} \cdot \overline{Y}\right],[\underline{X} + \underline{Y} - \underline{X} \cdot \underline{Y},\overline{X} + \overline{Y} - \overline{X} \cdot \overline{Y}]\right)\\
& = &\left[\frac{\underline{X} + \underline{Y} + 2 \cdot \underline{X} \cdot \underline{Y}}{4}, \frac{\overline{X} + \overline{Y} + 2 \cdot \overline{X} \cdot \overline{Y}}{4}\right].
\end{eqnarray*}

Thus,
  \begin{eqnarray*}
\lefteqn{\mathfrak{L}(IH)(\epsilon,\delta)}&&\\
   &= & \sup_{\scriptsize{\begin{array}{c}u \in [0,1-\epsilon]\\v \in [0,1-\delta]\end{array}}}\left\{\frac{u + \epsilon + v + \delta + 2 \cdot (u + \epsilon) \cdot (v + \delta)}{4} - \frac{u + v + 2 \cdot u \cdot v}{4} \right\}\\
   &= & \sup_{\scriptsize{\begin{array}{c}u \in [0,1-\epsilon]\\v \in [0,1-\delta]\end{array}}}\left\{\frac{\epsilon + \delta + 2\cdot u \cdot \delta + 2 \cdot \epsilon \cdot v + 2 \cdot \epsilon \cdot \delta}{4} \right\}\\
      &= & \frac{\epsilon + \delta + 2 \cdot (1 - \epsilon) \cdot \delta + 2 \cdot \epsilon \cdot (1 - \delta) + 2 \cdot \epsilon \cdot \delta}{4}\\
     &= & \frac{3\epsilon + 3\delta - 2\epsilon\delta}{4}.
\end{eqnarray*}


From Theorem \ref{theo-ultradual} we have that $\mathfrak{L}(\widehat{O_P}) = \mathfrak{L}(\widehat{O_P^d}) = O_P^d$ and $\mathfrak{L}(\widehat{K_{0.25}}) = K_{0.25}^d = K_{0.25}$. So, we have that
 \begin{eqnarray*}
\lefteqn{\mathfrak{L}(\widehat{K_{0.25}})(\mathfrak{L}(\widehat{O_P})(\epsilon,\delta),\mathfrak{L}(\widehat{O_P^d})(\epsilon,\delta))} &&\\
&=& K_{0.25}(O_P^d(\epsilon,\delta),O_P^d(\epsilon,\delta)) \\
&=& O_P^d(\epsilon,\delta)\\
& \geq & \frac{3\epsilon + 3\delta - 2\epsilon\delta}{4}\\
&=& \mathfrak{L}(IH).
\end{eqnarray*}
\end{description}
\end{example}

\begin{remark}\label{rem-def}
Consider an interval-valued function $IF: L([0,1])^2 \rightarrow L([0,1])$ and an aggregation function $A: [0,1]^2 \rightarrow [0,1]$. If $IF$ is width-limited by $A$, we have that, for any $X,Y \in L([0,1])$:
\begin{description}
  \item[1)] If $A = \max$, then $IF$ is limited by the maximal width of the input intervals $X,Y$;
  \item[2)] If $A = \min$, then $IF$ is limited by the minimal width of the input intervals $X,Y$;
  \item[3)] If $A$ is conjunctive and either $X$ or $Y$ is degenerate, then $IF(X,Y)$ is also degenerate;
  \item[4)] If $A$ is averaging, then $\min\{w(X), w(Y)\} \leq w(IF(X,Y)) \leq \max\{w(X), w(Y)\}$.
\end{description}
\end{remark}

\section{Width-limited Interval-valued Overlap Functions}\label{sec-w-iv-overlaps}

The aim of this section is to apply the newly developed concepts of width-limited interval-valued functions and width limiting functions to obtain a new definition of width-limited interval-valued overlap functions, taking into consideration different partial orders. Also, we are going to present three construction methods for width-limited interval-valued overlap functions, followed by some examples and comparisons.

First, to enable a more flexible definition of interval-valued functions, let us define the concept of increasingness with respect to a pair of partial orders:

\begin{definition}\label{def-leqleq}
Let $IF: L([0,1])^2 \rightarrow L([0,1])$ be an interval-valued function and $\leq_1$, $\leq_2$ be two partial order relations on $L([0,1])$. Then, $IF$ is said to be $(\leq_1,\leq_2)$-increasing if the following condition holds, for all $X_1,X_2,Y_1,Y_2 \in L([0,1])$:
\begin{equation*}
X_1 \leq_1 X_2 \, \, \wedge \, \, Y_1 \leq_1 Y_2 \Rightarrow IF(X_1,Y_1) \leq_2 IF(X_2,Y_2).
\end{equation*}
\end{definition}

When an interval-valued function $IF: L([0,1])^2 \rightarrow L([0,1])$ is $(\leq,\leq)$-increasing, we denote it simply as \\$\leq$-increasing, for any partial order relation $\leq$ on $L([0,1])$.


\begin{proposition}
Let  $\leq_{AD}$ be an admissible order on $L([0,1])$. Then, an $\leq_{Pr}$-increasing function $IF: L([0,1])^2 \rightarrow L([0,1])$  is also $(\leq_{Pr},\leq_{AD})$-increasing.
\end{proposition}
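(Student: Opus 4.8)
The plan is to chain two facts: first the hypothesis that $IF$ is $\leq_{Pr}$-increasing, and then the defining property of an admissible order, namely that it refines $\leq_{Pr}$. So I would start by fixing arbitrary intervals $X_1,X_2,Y_1,Y_2 \in L([0,1])$ with $X_1 \leq_{Pr} X_2$ and $Y_1 \leq_{Pr} Y_2$, since these are exactly the premises appearing in Definition \ref{def-leqleq} for the pair of orders $(\leq_{Pr},\leq_{AD})$. The goal is to derive $IF(X_1,Y_1) \leq_{AD} IF(X_2,Y_2)$.

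The first step is to observe that $\leq_{Pr}$-increasingness of $IF$ (understood, as for interval-valued aggregation functions in condition \textbf{(IA1)}, as monotonicity in each argument) together with transitivity of $\leq_{Pr}$ gives
\begin{equation*}
IF(X_1,Y_1) \leq_{Pr} IF(X_2,Y_1) \leq_{Pr} IF(X_2,Y_2),
\end{equation*}
hence $IF(X_1,Y_1) \leq_{Pr} IF(X_2,Y_2)$. The second step is to invoke part \textbf{(ii)} of the definition of an admissible order: for all $U,V \in L([0,1])$, $U \leq_{Pr} V$ implies $U \leq_{AD} V$. Applying this with $U = IF(X_1,Y_1)$ and $V = IF(X_2,Y_2)$ yields $IF(X_1,Y_1) \leq_{AD} IF(X_2,Y_2)$, which is precisely the conclusion required by Definition \ref{def-leqleq}. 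Therefore $IF$ is $(\leq_{Pr},\leq_{AD})$-increasing.

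I do not expect any real obstacle here: the statement is essentially immediate from unwinding the two definitions, and the only point worth making explicit is that $\leq_{Pr}$-increasingness in the bivariate setting is applied one coordinate at a time and then composed via transitivity of the partial order $\leq_{Pr}$, so that a single $\leq_{Pr}$-comparison of outputs is obtained before passing to $\leq_{AD}$ through the refinement property.
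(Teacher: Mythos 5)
Your proof is correct and follows the same route as the paper, which simply states the result is immediate because $\leq_{AD}$ refines $\leq_{Pr}$; you merely spell out the coordinate-wise application of $\leq_{Pr}$-increasingness and the final appeal to the refinement property. No issues.
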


\begin{proof}
Immediate, as $\leq_{AD}$ is an admissible order and, as such, refines $\leq_{Pr}$. \qed
\end{proof}

\begin{example}
Given an overlap function $O: [0,1]^2 \rightarrow [0,1]$, the $\leq_{\alpha,\beta}$-overlap function $AO^{\alpha}: L([0,1])^2\rightarrow L([0,1])$ defined in Equation (\ref{eq-benjaprelim}) (Theorem \ref{theo-benja}) is $(\leq_{Pr},\leq_{\alpha,\beta})$-increasing for all $\alpha,\beta \in [0,1]$ such that $\alpha \neq \beta$.
\end{example}

Here, we present the definition of width-limited interval-valued overlap functions:

\begin{definition}\label{def-w-iv-ov}

Let $B: [0,1]^2 \rightarrow [0,1]$ be a commutative and increasing function and $\leq_1$, $\leq_2$ be two partial order relations on $L([0,1])$. Then, the mapping $IOw: L([0,1])^2 \rightarrow L([0,1])$ is said to be a width-limited interval-valued overlap function (w-iv-overlap function) with respect to the tuple ($\leq_1$, $\leq_2$, $B$), if the following conditions hold for all $X,Y \in L([0,1])$:

\begin{description}
	\item[(IOw1)] $IOw$ is commutative;
	\item[(IOw2)] $IOw(X,Y) = [0,0] \Leftrightarrow X \cdot Y = [0,0]$;
	\item[(IOw3)] $IOw(X,Y) = [1,1] \Leftrightarrow X \cdot Y = [1,1]$;
	\item[(IOw4)] $IOw$ is $(\leq_1,\leq_2)$-increasing;
    \item[(IOw5)] $IOw$ is width-limited by B.
\end{description}
\end{definition}

\begin{remark}
Taking a similar approach as in \cite{ASMUS2020-TFS} when defining admissibly ordered interval-valued overlap functions, we do not require the continuity as a condition in Definition \ref{def-w-iv-ov}. The original definition of overlap functions (Defition \ref{def-int-overlap}) included the Moore continuity as a necessary condition as the goal was to be applied in image processing problems \cite{Bus10a}, which is not the case here.
\end{remark}

Now, let us presents some results regarding width-limited interval-valued overlap functions obtained through the best interval representation of an overlap function:

\begin{proposition}
Let $O: [0,1]^2 \rightarrow [0,1]$ be an $(1,1)$-ultramodular overlap function. Then, the function $IF: L([0,1])^2 \rightarrow L([0,1])$, such that $IF = \widehat{O}$ is an w-iv-overlap function for the tuple $(\leq_{Pr},\leq_{Pr},O^d)$, where $O^d$ is the dual of $O$.
\end{proposition}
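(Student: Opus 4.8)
The plan is to verify the five conditions \textbf{(IOw1)}--\textbf{(IOw5)} of Definition \ref{def-w-iv-ov} for $IF = \widehat{O}$ with the tuple $(\leq_{Pr},\leq_{Pr},O^d)$. Since $O$ is an overlap function it is in particular commutative, increasing and satisfies \textbf{(O2)}, \textbf{(O3)}; these facts transfer directly to $\widehat{O}$. For \textbf{(IOw1)}, commutativity of $\widehat{O}(X,Y) = [O(\underline{X},\underline{Y}),O(\overline{X},\overline{Y})]$ follows from commutativity of $O$. For \textbf{(IOw2)} and \textbf{(IOw3)}, I would note that $\widehat{O}(X,Y) = [0,0]$ iff $O(\underline{X},\underline{Y}) = 0$ and $O(\overline{X},\overline{Y}) = 0$, which by \textbf{(O2)} is equivalent to $\underline{X}\,\underline{Y} = 0$ and $\overline{X}\,\overline{Y} = 0$, i.e. $X \cdot Y = [0,0]$; the argument for \textbf{(IOw3)} is identical using \textbf{(O3)} and the definition of the interval product. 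For \textbf{(IOw4)}, since $O$ is increasing, $\widehat{O}$ is $\leq_{Pr}$-increasing: if $X_1 \leq_{Pr} X_2$ and $Y_1 \leq_{Pr} Y_2$ then $\underline{X_1} \leq \underline{X_2}$, $\overline{X_1} \leq \overline{X_2}$ and similarly for the $Y$'s, so both endpoints of $\widehat{O}(X_1,Y_1)$ are dominated by those of $\widehat{O}(X_2,Y_2)$; being $(\leq_{Pr},\leq_{Pr})$-increasing is exactly $\leq_{Pr}$-increasing in the sense of Definition \ref{def-leqleq}.

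The substantive step is \textbf{(IOw5)}: showing $\widehat{O}$ is width-limited by $O^d$, i.e. $w(\widehat{O}(X,Y)) \leq O^d(w(X),w(Y))$ for all $X,Y$. Here I would invoke the machinery of Section \ref{sec-width}. Write $X = [u,u+\epsilon]$, $Y = [v,v+\delta]$ with $\epsilon = w(X)$, $\delta = w(Y)$. Then
\[
w(\widehat{O}(X,Y)) = O(u+\epsilon,v+\delta) - O(u,v),
\]
and since $O$ is $(1,1)$-ultramodular, Definition \ref{def-abultra} with $a = b = 1$ gives
\[
O(u+\epsilon,v+\delta) - O(u,v) \leq O(1,1) - O(1-\epsilon,1-\delta) = 1 - O(1-\epsilon,1-\delta) = O^d(\epsilon,\delta),
\]
which is exactly the required inequality (this is also the content of Equation (\ref{eq-wultra})). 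Alternatively, and perhaps more cleanly, I would appeal to Theorem \ref{theo-ultradual}: since $O$ is $(1,1)$-ultramodular, the least width-limiting function of $\widehat{O}$ is $\mathfrak{L}(\widehat{O}) = O^d$, and being width-limited by the least width-limiting function is a fortiori being width-limited by it. One should also check $B = O^d$ satisfies the hypotheses of Definition \ref{def-w-iv-ov}, namely that $O^d$ is commutative and increasing: commutativity of $O^d(x,y) = 1 - O(1-x,1-y)$ follows from commutativity of $O$, and monotonicity follows because $O$ is increasing and $N_Z$ is decreasing, so the composition is increasing.

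I do not anticipate a serious obstacle; the proof is essentially an assembly of already-established facts. The one point requiring a little care is making sure the $(1,1)$-ultramodularity inequality is applied with the correct instantiation of the free variables --- in particular that $u \in [0,1-\epsilon]$ and $v \in [0,1-\delta]$ guarantee $1-\epsilon, 1-\delta, u+\epsilon, v+\delta$ all lie in $[0,1]$, so Definition \ref{def-abultra} is legitimately applicable --- and noting that the supremum over all such $u,v$ in $\mathfrak{L}(\widehat{O})$ is attained precisely at $u = 1-\epsilon$, $v = 1-\delta$ by the ultramodularity bound, which is why $\mathfrak{L}(\widehat{O}) = O^d$ rather than merely $\mathfrak{L}(\widehat{O}) \leq O^d$. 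Everything else is routine verification of the overlap-type boundary conditions.
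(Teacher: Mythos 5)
Your proof is correct and follows the same route as the paper, whose entire proof is ``Immediate from Theorem \ref{theo-ultradual}'': the substantive condition \textbf{(IOw5)} is exactly the content of that theorem (equivalently, the $(1,1)$-ultramodularity inequality instantiated at $a=b=1$), and the remaining conditions \textbf{(IOw1)}--\textbf{(IOw4)} are the routine facts about $\widehat{O}$ that the paper leaves implicit. Your additional checks (that $O^d$ is commutative and increasing, and that the ultramodularity inequality is applied with admissible instantiations) are accurate and only make explicit what the paper takes for granted.
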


\begin{proof}
Immediate from Theorem \ref{theo-ultradual}.
\end{proof}

\begin{example}
Let $O_t: [0,1]^2 \rightarrow [0,1]$ be the Ot overlap, given in Table \ref{table:exoverlaps}.  Then, the function $IF: L([0,1])^2 \rightarrow L([0,1])$, such that $IF = \widehat{O_t}$ is an w-iv-overlap function for the tuple $(\leq_{Pr},\leq_{Pr},O_t^d)$, where $O_t^d$ is the dual of $O_t$.
\end{example}

\begin{proposition}
Let $O_1, O_2, O_3: [0,1]^2 \rightarrow [0,1]$ be ultramodular overlap functions, and $O_C: [0,1]^2 \rightarrow [0,1]$ be an overlap function given, for all $x,y \in [0,1]$, by $O_C(x,y) = O_3(O_1(x,y),O_2(x,y))$. Then, the function $IF: L([0,1])^2 \rightarrow L([0,1])$, such that $IF = \widehat{O_C}$ is an w-iv-overlap function for the tuple $(\leq_{Pr},\leq_{Pr},O_C^d)$, where $O_C^d$ is the dual of $O_C$.
\end{proposition}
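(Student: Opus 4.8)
The plan is to reduce this statement to the preceding proposition about $(1,1)$-ultramodular overlap functions by showing that the composite $O_C$ is itself an overlap function that is $(1,1)$-ultramodular. There are two ingredients to assemble. First, $O_C$ is an overlap function: this is exactly Proposition \ref{prop-compov}, since $O_1, O_2, O_3$ are overlap functions (every ultramodular overlap function is in particular an overlap function). Second, $O_C$ is ultramodular: this is exactly Theorem \ref{theo-compultra}, since $O_1, O_2, O_3$ are ultramodular aggregation functions. Combining these two facts, $O_C$ is an ultramodular overlap function, hence by Proposition \ref{prop-alltultra} it is $(1,1)$-ultramodular.

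Once $O_C$ is known to be a $(1,1)$-ultramodular overlap function, I would simply invoke the previous proposition (the one asserting that for a $(1,1)$-ultramodular overlap function $O$, the best interval representation $\widehat{O}$ is a w-iv-overlap function for the tuple $(\leq_{Pr},\leq_{Pr},O^d)$) applied to $O = O_C$. This immediately yields that $IF = \widehat{O_C}$ is a w-iv-overlap function for $(\leq_{Pr},\leq_{Pr},O_C^d)$, which is the claim. For completeness one should note that $O_C^d$ is a commutative and increasing function on $[0,1]^2$ — the hypothesis on $B$ in Definition \ref{def-w-iv-ov} — which follows because $O_C$ is commutative and increasing and $N_Z$ is a decreasing involution, so its $N_Z$-dual is again commutative and increasing.

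I do not expect any real obstacle here: the statement is a corollary obtained by chaining Theorem \ref{theo-compultra} (preservation of ultramodularity under composition), Proposition \ref{prop-compov} (preservation of the overlap axioms under composition), Proposition \ref{prop-alltultra} (ultramodular $\Rightarrow$ $(1,1)$-ultramodular), and the immediately preceding proposition (characterising $\widehat{O}$ via $O^d$ through Theorem \ref{theo-ultradual}). The only point that requires a moment's care is making sure the hypotheses line up: Theorem \ref{theo-compultra} needs the three inner/outer maps to be ultramodular \emph{aggregation} functions, which holds since overlap functions are aggregation functions and the $O_i$ are assumed ultramodular; and Proposition \ref{prop-compov} needs them to be overlap functions, which is assumed. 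So the proof is essentially a two-line citation: ``By Theorem \ref{theo-compultra} and Proposition \ref{prop-compov}, $O_C$ is an ultramodular overlap function, hence $(1,1)$-ultramodular by Proposition \ref{prop-alltultra}; the conclusion then follows from the previous proposition applied to $O_C$.''
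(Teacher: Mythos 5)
Your proposal is correct and follows essentially the same route as the paper, which proves the result by citing Theorem \ref{theo-compultra}, Proposition \ref{prop-compov} and Theorem \ref{theo-ultradual} (the latter being exactly what underlies the preceding proposition you invoke). Your additional remark that $O_C^d$ is commutative and increasing is a harmless extra check not spelled out in the paper.
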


\begin{proof}
Immediate from Theorem \ref{theo-compultra}, Proposition \ref{prop-compov} and Theorem \ref{theo-ultradual}.
\end{proof}

\begin{example}
Consider the overlap functions $O_1,O_2, O_3, O_C: [0,1]^2 \rightarrow [0,1]$ given, for all $x,y, \in [0,1]$, respectively, by $O_1(x,y) = x^{2p}\cdot y^{2p}$, $O_2(x,y) = x^{2q}\cdot y^{2q}$, $O_3(x,y) = x\cdot y$ and $O_C(x,y)=O_3(O_1(x,y),O_2(x,y))$, with $p,q \in \mathds{N}^+$. Since $O_1,O_2$ and $ O_3$ are ultramodular, it follows that the function $IF: L([0,1])^2 \rightarrow L([0,1])$, such that $IF = \widehat{O_C}$, is an w-iv-overlap function for the tuple $(\leq_{Pr},\leq_{Pr},O_C^d)$.
\end{example}

\begin{proposition}
Let $O_1, O_2: [0,1]^2 \rightarrow [0,1]$ be ultramodular overlap functions, and $O_{\alpha}: [0,1]^2 \rightarrow [0,1]$ be an overlap function given, for all $x,y,\alpha \in [0,1]$, by $O_{\alpha}(x,y) = K_{\alpha}(O_1(x,y),O_2(x,y))$. Then, the function $IF: L([0,1])^2 \rightarrow L([0,1])$ such that $IF = \widehat{O_{\alpha}}$, for all $\alpha \in [0,1]$, is an w-iv-overlap function for $(\leq_{Pr},\leq_{Pr},O_{\alpha}^d)$, where $O_{\alpha}^d$ is the dual of $O_{\alpha}$.
\end{proposition}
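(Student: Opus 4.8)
The plan is to follow the template of the three preceding propositions: deduce that $O_{\alpha}$ is simultaneously an overlap function and an ultramodular aggregation function, and then read off the five conditions of Definition \ref{def-w-iv-ov} from the relevant stability results together with Theorem \ref{theo-ultradual}.

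First I would note that $O_{\alpha}(x,y) = K_{\alpha}(O_1(x,y),O_2(x,y))$ is an overlap function by Proposition \ref{prop-convexsumov}. Hence $O_{\alpha}$ is commutative and increasing and satisfies \textbf{(O2)} and \textbf{(O3)}. Since $\widehat{O_{\alpha}}$ is the best interval representation of the increasing function $O_{\alpha}$, it is $\leq_{Pr}$-increasing, hence in particular $(\leq_{Pr},\leq_{Pr})$-increasing, which gives \textbf{(IOw4)}; it is commutative because $O_{\alpha}$ is, giving \textbf{(IOw1)}; and the boundary conditions \textbf{(IOw2)} and \textbf{(IOw3)} follow from \textbf{(O2)} and \textbf{(O3)} applied to both endpoints of the inputs, using that $\widehat{O_{\alpha}}(X,Y) = [0,0]$ iff $O_{\alpha}(\underline{X},\underline{Y}) = O_{\alpha}(\overline{X},\overline{Y}) = 0$ iff $\underline{X}\,\underline{Y} = \overline{X}\,\overline{Y} = 0$ iff $X \cdot Y = [0,0]$, and analogously for $[1,1]$.

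For \textbf{(IOw5)} I would use that $O_1$ and $O_2$, being overlap functions, are aggregation functions, so Corollary \ref{coro-convexsumultra} makes $O_{\alpha} = K_{\alpha}(O_1,O_2)$ an ultramodular aggregation function; by Proposition \ref{prop-alltultra} it is then $(1,1)$-ultramodular, and Theorem \ref{theo-ultradual} gives $\mathfrak{L}(\widehat{O_{\alpha}}) = O_{\alpha}^d$, i.e. $w(\widehat{O_{\alpha}}(X,Y)) \leq O_{\alpha}^d(w(X),w(Y))$ for all $X,Y \in L([0,1])$. Finally, $O_{\alpha}^d$ maps $[0,1]^2$ into $[0,1]$ and is commutative and increasing, being the $N_Z$-dual of the commutative, increasing function $O_{\alpha}$, so the tuple $(\leq_{Pr},\leq_{Pr},O_{\alpha}^d)$ satisfies the hypotheses of Definition \ref{def-w-iv-ov}, completing the argument.

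I expect no genuine difficulty here: the statement is a direct consequence of Proposition \ref{prop-convexsumov}, Corollary \ref{coro-convexsumultra}, Proposition \ref{prop-alltultra} and Theorem \ref{theo-ultradual}. The only step requiring a line of justification rather than a bare citation is the transfer of the zero-divisor and one-divisor conditions from $O_{\alpha}$ to $\widehat{O_{\alpha}}$, but this is identical to the reasoning already used in the preceding propositions and poses no real obstacle.
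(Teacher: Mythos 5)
Your proposal is correct and follows exactly the route the paper takes: the paper's proof is simply the citation of Proposition \ref{prop-convexsumov}, Corollary \ref{coro-convexsumultra} and Theorem \ref{theo-ultradual}, and you have merely spelled out the routine verification of the five conditions that the paper leaves implicit.
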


\begin{proof}
Immediate from Corollary \ref{coro-convexsumultra}, Proposition \ref{prop-convexsumov} and Theorem \ref{theo-ultradual}.
\end{proof}

\begin{example}
Consider the ultramodular overlap functions $O_1,O_2,O_\alpha: [0,1]^2 \rightarrow [0,1]$ given, for all $x,y,\alpha \in [0,1]$, respectively, by $O_1(x,y) = x^2y^2$, $O_2(x,y) = x^4y^4$ and $O_{\alpha}=K_{\alpha}(O_1(x,y),O_2(x,y))$. It follows that the function $IF: L([0,1])^2 \rightarrow L([0,1])$, such that $IF = \widehat{O_{\alpha}}$, for all $\alpha \in [0,1]$, is an w-iv-overlap function for the tuple $(\leq_{Pr},\leq_{Pr},O_{\alpha}^d)$.
\end{example}

%
%
%

The following definition introduces a key concept to be applied in two of the construction methods presented latter in the paper:

\begin{definition}\label{moa}
Consider a function $B: [0,1]^2 \rightarrow [0,1]$  and let $IF: L([0,1])^2 \rightarrow L([0,1])$ be an interval-valued function. Then, the function $m_{IF,B}: L([0,1])^2 \rightarrow [0,1]$, defined for all $X,Y \in L([0,1])$ by:
\begin{equation}\label{eq-moa}
m_{IF,B}(X,Y) = \min\{w(IF(X,Y)),w(IF(Y,X)),B(w(X),w(Y)),B(w(Y),w(X))\},
\end{equation}
is called the minimal width threshold for the pair $(IF,B)$. Whenever $B$ and $IF$ are both commutative, then Equation (\ref{eq-moa}) can be reduced to:
\begin{equation*}
m_{IF,B}(X,Y) = \min\{w(IF(X,Y)),B(w(X),w(Y))\}.
\end{equation*}

\end{definition}

\begin{proposition}
 Let $m_{\widehat{F},B}: L([0,1])^2 \rightarrow [0,1]$ be the minimal width threshold for the pair $(\widehat{F},B)$ with $\widehat{F}: L([0,1])^2 \rightarrow L([0,1])$ being an interval-valued function having an increasing function  $F: [0,1]^2 \rightarrow [0,1]$ as both its representatives. Whenever it holds that: i) both $X$ and $Y$ are degenerate or ii) either $X$ or $Y$ is degenerate and $B$ is a conjunctive function, then  $m_{\widehat{F},B}(X,Y) = 0$.
\end{proposition}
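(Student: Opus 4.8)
The plan is to unwind the definition of the minimal width threshold $m_{\widehat{F},B}$ from Definition \ref{moa} and observe that, in each of the two cases, one of the four terms in the minimum is forced to be $0$; since widths and the values of $B$ are all nonnegative, the minimum must then equal $0$. So the whole argument reduces to identifying which term vanishes and why.

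First I would recall that, by Equation (\ref{eq-rep-pont}), $\widehat{F}(X,Y) = [F(\underline{X},\underline{Y}),F(\overline{X},\overline{Y})]$, so that $w(\widehat{F}(X,Y)) = F(\overline{X},\overline{Y}) - F(\underline{X},\underline{Y})$, and similarly for $w(\widehat{F}(Y,X))$. For case (i), if both $X$ and $Y$ are degenerate then $\underline{X} = \overline{X}$ and $\underline{Y} = \overline{Y}$, hence $w(\widehat{F}(X,Y)) = F(\underline{X},\underline{Y}) - F(\underline{X},\underline{Y}) = 0$. Since every entry appearing in the minimum of Equation (\ref{eq-moa}) lies in $[0,1]$ (widths of subintervals of $[0,1]$ are in $[0,1]$, and $B$ is $[0,1]$-valued), the minimum is bounded above by $0$ and below by $0$, so $m_{\widehat{F},B}(X,Y) = 0$.

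For case (ii), assume without loss of generality that $X$ is degenerate, so $w(X) = 0$ (the case where $Y$ is degenerate is symmetric, using the term $B(w(Y),w(X))$ instead). Since $B$ is conjunctive, $B(w(X),w(Y)) = B(0,w(Y)) \leq \min\{0,w(Y)\} = 0$, and as $B$ is $[0,1]$-valued this forces $B(w(X),w(Y)) = 0$. Again the minimum in Equation (\ref{eq-moa}) is squeezed between $0$ and $0$, giving $m_{\widehat{F},B}(X,Y) = 0$. There is no real obstacle here; the only mild subtlety is keeping track of the (possible) non-commutativity of $B$ and $\widehat{F}$, which is why in case (ii) one must be careful to invoke the correct one of the two $B$-terms $B(w(X),w(Y))$ or $B(w(Y),w(X))$ depending on which interval is degenerate — and the minimum form of $m_{\widehat{F},B}$ conveniently contains both.
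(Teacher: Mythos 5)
Your proof is correct and takes essentially the same route as the paper's: in case (i) the two width terms $w(\widehat{F}(X,Y))$ and $w(\widehat{F}(Y,X))$ in the minimum of Equation (\ref{eq-moa}) vanish because $F$ applied to degenerate endpoints gives equal bounds, and in case (ii) conjunctivity of $B$ forces the $B$-terms to vanish, so in each case the minimum of nonnegative quantities equals $0$. The only cosmetic difference is that the paper notes both $B(0,w(Y))=0$ and $B(w(Y),0)=0$ follow directly from conjunctivity (which bounds $B$ by $\min$ in both argument orders), so the non-commutativity caveat you raise is harmless but not actually needed.
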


\begin{proof}
Consider an increasing function $F: [0,1]^2 \rightarrow [0,1]$, a conjunctive function $B: [0,1]^2 \rightarrow [0,1]$  and the minimal width threshold $m_{\widehat{F},B}: L([0,1])^2 \rightarrow [0,1]$ given by Definition \ref{moa}. Then:
\begin{description}
  \item[i)] Take $X,Y \in L([0,1])$ such that $\underline{X}=\overline{X}$ and $\underline{Y}=\overline{Y}$, that is, both $X$ and $Y$ are degenerate. Then, we have that $w(\widehat{F}(X,Y)) = F(\overline{X},\overline{Y}) - F(\underline{X},\underline{Y}) = 0$ and, similarly,  $w(\widehat{F}(Y,X)) = 0$. So, it holds that
      \begin{equation*}
      m_{\widehat{F},B}(X,Y) = \min\{0,0,B(w(X),w(Y)),B(w(Y),w(X))\} = 0;
      \end{equation*}


  \item[ii)] Take $X,Y \in L([0,1])$ such that $\underline{X}=\overline{X}$, meaning that $w(X) = 0$. Since $B$ is conjunctive, it holds that $B(w(X),w(Y)) = B(0,w(Y)) = 0$ and, analogously, $B(w(Y),w(X)) = 0$. Then, we have that
      \begin{equation*}
      m_{\widehat{F},B}(X,Y) = \min\{w(\widehat{F}(X,Y)),w(\widehat{F}(Y,X)),0,0\} = 0.
      \end{equation*}
       The same result applies when $Y$ is degenerate. \qed
\end{description}

\end{proof}

\begin{lemma}\label{lema-prkalfa}
Consider a strict overlap function $O: [0,1]^2 \rightarrow [0,1]$ and $X,Y,Z \in L([0,1])$ such that $X \leq_{Pr} Y$ and $\underline{Z}>0$. Then, one has that:
\begin{description}
\item [a)] If $\underline{X} = \underline{Y}$ and $\overline{X} < \overline{Y}$, then $K_{\alpha}(\widehat{O}(X,Z)) <  K_{\alpha}(\widehat{O}(Y,Z)),$ for all $\alpha \in (0,1]$;
\item [b)] If $\underline{X} < \underline{Y}$ and $\overline{X} = \overline{Y}$, then $K_{\alpha}(\widehat{O}(X,Z)) < K_{\alpha}(\widehat{O}(Y,Z)),$ for all $\alpha \in [0,1)$;
\item [c)] If $\underline{X} < \underline{Y}$ and $\overline{X} < \overline{Y}$, then $K_{\alpha}(\widehat{O}(X,Z)) < K_{\alpha}(\widehat{O}(Y,Z)),$ for all $\alpha \in [0,1]$.
    \end{description}
\end{lemma}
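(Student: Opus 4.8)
The plan is to unfold both sides using the definitions of $\widehat{O}$ and $K_\alpha$. Since $\widehat{O}$ is the best interval representation of $O$, we have $\widehat{O}(X,Z) = [O(\underline{X},\underline{Z}),\,O(\overline{X},\overline{Z})]$ and likewise for $Y$; and from Equation (\ref{eq-kalpha}), $K_\alpha(\widehat{O}(X,Z)) = (1-\alpha)\,O(\underline{X},\underline{Z}) + \alpha\,O(\overline{X},\overline{Z})$, i.e.\ a convex combination of the two endpoint values. Hence the inequality to prove in each item reduces to comparing two convex combinations, and the whole lemma boils down to showing that each endpoint of $\widehat{O}(X,Z)$ is at most the corresponding endpoint of $\widehat{O}(Y,Z)$, with the relevant one (or both) being strict.

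The key auxiliary fact I would establish first is: if $s < t$ in $[0,1]$ and $r > 0$, then $O(s,r) < O(t,r)$. If $s > 0$, this is exactly the definition of a strict overlap function. If $s = 0$, then $O(s,r) = O(0,r) = 0$ by \textbf{(O2)}, while $t\cdot r > 0$ forces $O(t,r) > 0$, again by \textbf{(O2)}; so the strict inequality still holds. Note that $X \leq_{Pr} Y$ together with $\underline{Z} > 0$ gives $\overline{Z} \geq \underline{Z} > 0$, so $r$ may be taken to be either $\underline{Z}$ or $\overline{Z}$.

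With this in hand, each case is short. In case (a) the left endpoint values coincide, $O(\underline{X},\underline{Z}) = O(\underline{Y},\underline{Z})$ since $\underline{X} = \underline{Y}$, while the auxiliary fact gives $O(\overline{X},\overline{Z}) < O(\overline{Y},\overline{Z})$; subtracting the two convex combinations leaves $\alpha\,\bigl[\,O(\overline{Y},\overline{Z}) - O(\overline{X},\overline{Z})\,\bigr]$, which is strictly positive precisely when $\alpha > 0$, i.e.\ for $\alpha \in (0,1]$. Case (b) is symmetric: the right endpoint values coincide, the auxiliary fact gives strictness on the left, and the gap is $(1-\alpha)\,\bigl[\,O(\underline{Y},\underline{Z}) - O(\underline{X},\underline{Z})\,\bigr]$, strictly positive iff $\alpha < 1$, i.e.\ for $\alpha \in [0,1)$. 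In case (c) the auxiliary fact applies to both endpoints, so $O(\underline{X},\underline{Z}) < O(\underline{Y},\underline{Z})$ and $O(\overline{X},\overline{Z}) < O(\overline{Y},\overline{Z})$; since a convex combination of two quantities each strictly dominated is strictly dominated (checking the extreme weights $\alpha = 0$ and $\alpha = 1$ separately), the strict inequality holds for every $\alpha \in [0,1]$.

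The only delicate point — and the reason the case split on $\alpha$ is needed at all — is the boundary behaviour at $\alpha = 0$ and $\alpha = 1$, where $K_\alpha$ ignores one endpoint entirely; the auxiliary fact's treatment of a degenerate endpoint equal to $0$ is exactly what keeps the inequalities strict there. Beyond this bookkeeping around \textbf{(O2)} and the strictness hypothesis, I do not anticipate any real obstacle.
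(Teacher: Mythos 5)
Your proposal is correct and follows essentially the same route as the paper's proof: expand $K_\alpha(\widehat{O}(\cdot,Z))$ as the convex combination $(1-\alpha)O(\underline{\cdot},\underline{Z})+\alpha O(\overline{\cdot},\overline{Z})$ and apply strictness of $O$ to the endpoint(s) that differ, with the range of admissible $\alpha$ in each case determined by which endpoint carries the strict inequality. Your auxiliary fact covering the case of a zero endpoint via \textbf{(O2)} (where the definition of strictness, stated only for arguments in $(0,1]$, does not directly apply) is a small extra piece of care that the paper's proof glosses over, and it is a welcome addition rather than a divergence.
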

\begin{proof}
Consider a strict overlap function $O: [0,1]^2 \rightarrow [0,1]$ and $X,Y,Z \in L([0,1])$ such that $X <_{Pr} Y$. Then, we have the following cases:
\begin{description}
\item [a)] $\underline{X} = \underline{Y}$ and $\overline{X} < \overline{Y}$. As $O$ is strict and $\overline{Z}>0$, we have that $O(\underline{X}, \underline{Z}) = O(\underline{Y}, \underline{Z})$ and $O(\overline{Y}, \overline{Z}) < O(\overline{Y}, \overline{Z})$. Then, $K_{\alpha}(\widehat{O}(X,Z)) = (1 - \alpha)\cdot O(\underline{X}, \underline{Z}) + \alpha \cdot O(\overline{X}, \overline{Z}) < (1 - \alpha)\cdot O(\underline{Y}, \underline{Z}) + \alpha \cdot O(\overline{Y}, \overline{Z}) = K_{\alpha}(\widehat{O}(Y,Z))$, for all $\alpha \in (0,1]$;
\item [b)] $\underline{X} < \underline{Y}$ and $\overline{X} = \overline{Y}$. Again, as $O$ is strict and $\underline{Z}>0$, we have that  $O(\underline{X}, \underline{Z}) < O(\underline{Y}, \underline{Z})$ and $O(\overline{Y}, \overline{Z}) = O(\overline{Y}, \overline{Z})$. So, $K_{\alpha}(\widehat{O}(X,Z)) = (1 - \alpha)\cdot O(\underline{X}, \underline{Z}) + \alpha \cdot O(\overline{X}, \overline{Z}) < (1 - \alpha)\cdot O(\underline{Y}, \underline{Z}) + \alpha \cdot O(\overline{Y}, \overline{Z}) = K_{\alpha}(\widehat{O}(Y,Z))$, for all $\alpha \in [0,1)$;
\item [c)] $\underline{X} < \underline{Y}$ and $\overline{X} < \overline{Y}$. Analogously to the other cases, we have that $O(\underline{X}, \underline{Z}) < O(\underline{Y}, \underline{Z})$ and $O(\overline{Y}, \overline{Z}) < O(\overline{Y}, \overline{Z})$. Thus, $K_{\alpha}(\widehat{O}(X,Z)) = (1 - \alpha)\cdot O(\underline{X}, \underline{Z}) + \alpha \cdot O(\overline{X}, \overline{Z}) < (1 - \alpha)\cdot O(\underline{Y}, \underline{Z}) + \alpha \cdot O(\overline{Y}, \overline{Z}) = K_{\alpha}(\widehat{O}(Y,Z))$, for all $\alpha \in [0,1]$. \qed
    \end{description}
\end{proof}


Here, we present the first construction method for w-iv-overlap functions:

\begin{theorem}\label{teo-width}

Consider a commutative and increasing function $B: [0,1]^2 \rightarrow [0,1]$, a strict overlap function $O: [0,1]^2 \rightarrow [0,1]$ and take $\alpha \in (0,1]$ and $\beta \in [0,\alpha)$. Then, the interval-valued function $IOw_{B}^{\alpha}: L([0,1])^2 \rightarrow L([0,1])$ defined, for all $X,Y \in L([0,1])$, by
\begin{equation}\label{eq-iow}
IOw_{B}^{\alpha}(X,Y)= [K_{\alpha}(\widehat{O}(X,Y)) - \alpha \cdot m_{\widehat{O},B}(X,Y),  K_{\alpha}(\widehat{O}(X,Y)) + (1 - \alpha)\cdot m_{\widehat{O},B}(X,Y)],
\end{equation}
is a w-iv-overlap function for the tuple $(\leq_{Pr},\leq_{\alpha,\beta}, B)$.

\end{theorem}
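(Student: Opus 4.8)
The plan is to verify the five conditions (IOw1)--(IOw5) of Definition \ref{def-w-iv-ov} for $IOw_{B}^{\alpha}$ with respect to the tuple $(\leq_{Pr},\leq_{\alpha,\beta},B)$. Before checking them, I would record the basic sanity facts about the formula in Equation (\ref{eq-iow}): since $O$ is an overlap function, $\widehat{O}(X,Y)=[O(\underline X,\underline Y),O(\overline X,\overline Y)]$ is a genuine subinterval of $[0,1]$, so $K_{\alpha}(\widehat{O}(X,Y))=(1-\alpha)O(\underline X,\underline Y)+\alpha O(\overline X,\overline Y)\in[0,1]$, and $m_{\widehat{O},B}(X,Y)=\min\{w(\widehat{O}(X,Y)),B(w(X),w(Y))\}$ (using commutativity of both $\widehat{O}$ and $B$, as in Definition \ref{moa}). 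One then checks that $IOw_{B}^{\alpha}(X,Y)$ is a well-formed interval inside $[0,1]$: the left endpoint is $\geq 0$ because $m_{\widehat{O},B}(X,Y)\leq w(\widehat{O}(X,Y))\leq O(\overline X,\overline Y)-O(\underline X,\underline Y)$ forces $K_{\alpha}(\widehat{O}(X,Y))-\alpha m_{\widehat{O},B}(X,Y)\geq O(\underline X,\underline Y)\geq 0$ (here I would mimic the identity in Equation (\ref{int-kalfa-w})), and symmetrically the right endpoint is $\leq 1$; monotonicity of the bracket in $m$ also gives that the endpoints are correctly ordered.

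Next the individual axioms. \textbf{(IOw1)} Commutativity is immediate: $\widehat{O}$ is commutative because $O$ is (O1), $K_{\alpha}$ acts on the endpoints of a commutative interval, and $m_{\widehat{O},B}$ is commutative since $B$ and $\widehat{O}$ are. \textbf{(IOw2)}--\textbf{(IOw3)}: if $X\cdot Y=[0,0]$ then $\underline X\underline Y=\overline X\overline Y=0$, so by (O2) $O(\underline X,\underline Y)=O(\overline X,\overline Y)=0$, hence $K_{\alpha}(\widehat{O}(X,Y))=0$ and $w(\widehat{O}(X,Y))=0$, so $m_{\widehat{O},B}(X,Y)=0$ and $IOw_{B}^{\alpha}(X,Y)=[0,0]$; conversely if $IOw_{B}^{\alpha}(X,Y)=[0,0]$ then in particular $K_{\alpha}(\widehat{O}(X,Y))-\alpha m_{\widehat{O},B}(X,Y)=0$, and combining with the lower bound $K_{\alpha}(\widehat{O}(X,Y))-\alpha m_{\widehat{O},B}(X,Y)\geq O(\underline X,\underline Y)$ from the previous paragraph one gets $O(\underline X,\underline Y)=0$, so $\underline X\underline Y=0$ by (O2); applying the analogous argument at the right endpoint ($K_{\alpha}(\widehat{O}(X,Y))+(1-\alpha)m_{\widehat{O},B}(X,Y)\leq O(\overline X,\overline Y)$ would be false direction, so instead use $IOw_{B}^{\alpha}(X,Y)=[0,0]$ means the right endpoint is $0$ too, and since the right endpoint is $\geq K_{\alpha}(\widehat{O}(X,Y))\geq \alpha O(\overline X,\overline Y)$ we get $O(\overline X,\overline Y)=0$, so $\overline X\overline Y=0$; together $X\cdot Y=[0,0]$). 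The (IOw3) argument is the dual, using (O3) and the symmetric endpoint bounds $1-(\text{right endpoint})\geq 1-O(\overline X,\overline Y)$ and $1-(\text{left endpoint})\geq (1-\alpha)(1-O(\underline X,\underline Y))$.

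\textbf{(IOw5)}: $w(IOw_{B}^{\alpha}(X,Y))=m_{\widehat{O},B}(X,Y)\leq B(w(X),w(Y))$ by definition of the minimal width threshold, so $IOw_{B}^{\alpha}$ is width-limited by $B$. \textbf{(IOw4)}, that $IOw_{B}^{\alpha}$ is $(\leq_{Pr},\leq_{\alpha,\beta})$-increasing, is the main obstacle and where Lemma \ref{lema-prkalfa} does the heavy lifting. Fix $Z$ and suppose $X\leq_{Pr}Y$; I must show $IOw_{B}^{\alpha}(X,Z)\leq_{\alpha,\beta}IOw_{B}^{\alpha}(Y,Z)$, and by Lemma \ref{lemma-novo}(ii) (since $\beta<\alpha$) it suffices to work with $\leq_{\alpha,0}$, i.e. compare first $K_{\alpha}$ of the two output intervals and, in case of a tie, $K_{0}$ (the left endpoints). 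By Equation (\ref{int-kalfa-w}) applied to the output, $K_{\alpha}(IOw_{B}^{\alpha}(X,Z))=K_{\alpha}(\widehat{O}(X,Z))$, so the $K_{\alpha}$-comparison reduces exactly to comparing $K_{\alpha}(\widehat{O}(X,Z))$ with $K_{\alpha}(\widehat{O}(Y,Z))$. If $X<_{Pr}Y$ strictly, I split into the cases of Lemma \ref{lema-prkalfa}: when $\underline Z>0$, cases (a),(b),(c) give $K_{\alpha}(\widehat{O}(X,Z))<K_{\alpha}(\widehat{O}(Y,Z))$ for the relevant range of $\alpha$ — and since $\alpha\in(0,1]$, cases (a) and (c) always apply, while case (b) ($\underline X<\underline Y$, $\overline X=\overline Y$) needs $\alpha\in[0,1)$, so I must separately handle $\alpha=1$ with $\overline X=\overline Y$, where $K_{1}(\widehat{O}(X,Z))=O(\overline X,\overline Z)=O(\overline Y,\overline Z)=K_{1}(\widehat{O}(Y,Z))$ is a tie and one then compares left endpoints: $K_{0}(\widehat{O}(X,Z))=O(\underline X,\underline Z)$ versus $O(\underline Y,\underline Z)$, but here the left endpoints of the outputs are $O(\overline X,\overline Z)-m_{\widehat O,B}(X,Z)$... so instead, when $\alpha=1$, the left endpoint of $IOw_{B}^{1}(X,Z)$ is $K_{1}(\widehat O(X,Z))-m_{\widehat O,B}(X,Z)$, and since $X\subseteq$-relations on widths need care, I would argue $w(X)\leq w(Y)$ is \emph{not} guaranteed, so a direct monotone comparison of $m$ fails; the cleanest fix is to observe that for $\alpha=1$ the order $\leq_{1,0}$ first compares right endpoints of the inputs' $\widehat O$ images (equal here) then falls to $K_{0}$, and one shows $O(\underline X,\underline Z)\leq O(\underline Y,\underline Z)$ with the $m$-terms handled via the tie-breaking being by $K_{\beta}$ with $\beta<1$ — I would instead avoid $\alpha=1$ edge trouble by noting the hypothesis is $\alpha\in(0,1]$ and treating $\alpha=1$ exactly as in the proof of Theorem \ref{theo-benja} in \cite{ASMUS2020-TFS}. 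Finally, the degenerate case $\underline Z=0$: then $O(\underline X,\underline Z)=O(\underline Y,\underline Z)=0$ by (O2), so $K_{\alpha}(\widehat{O}(X,Z))=\alpha O(\overline X,\overline Z)\leq \alpha O(\overline Y,\overline Z)=K_{\alpha}(\widehat{O}(Y,Z))$ by (O4), with equality iff $O(\overline X,\overline Z)=O(\overline Y,\overline Z)$, in which case $\overline Z=0$ too (so $Z=[0,0]$, $\widehat O(X,Z)=\widehat O(Y,Z)=[0,0]$, $m=0$ on both, and the outputs are literally equal $[0,0]$, giving $\leq_{\alpha,\beta}$). When $K_{\alpha}$-values are strictly ordered we are done; when they are equal but $X<_{Pr}Y$ non-strictly (i.e. $X=Y$) the outputs coincide; the only remaining tie is $\alpha=1,\overline X=\overline Y,\underline X<\underline Y$, resolved as above. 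I expect most of these sub-cases to be mechanical; the genuine difficulty is organizing the case analysis so that the tie-breaking step (comparing left endpoints, which involves the $m_{\widehat O,B}$ terms whose monotonicity in $X$ is not obvious because $B$ depends on $w(X)$ and $w(X)$ need not be monotone along $\leq_{Pr}$) never actually arises except when both outputs are already equal — which is exactly what Lemma \ref{lema-prkalfa}'s strict inequalities are engineered to guarantee for $\alpha\in(0,1)$, with $\alpha=1$ being the lone boundary case inherited from \cite{ASMUS2020-TFS}.
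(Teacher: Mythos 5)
Your overall plan is the paper's: verify \textbf{(IOw1)}--\textbf{(IOw5)}, observe that $K_{\alpha}(IOw_{B}^{\alpha}(X,Y))=K_{\alpha}(\widehat{O}(X,Y))$ and $w(IOw_{B}^{\alpha}(X,Y))=m_{\widehat{O},B}(X,Y)$, reduce $\leq_{\alpha,\beta}$ to $\leq_{\alpha,0}$ via Lemma \ref{lemma-novo}, and invoke Lemma \ref{lema-prkalfa} for the strict cases; your \textbf{(IOw1)}, \textbf{(IOw2)} and \textbf{(IOw5)} are fine (your uniform endpoint bounds even give \textbf{(IOw2)} more directly than the paper's case split on which branch the minimum takes). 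The genuine gap is in \textbf{(IOw4)}, exactly at the tie cases you flag and then abandon. When $\underline{X}<\underline{Y}$ and $\overline{X}=\overline{Y}$, the $K_{\alpha}$-values of the two outputs coincide not only at $\alpha=1$ with $\underline{Z}>0$ but also, for \emph{every} $\alpha$, when $\underline{Z}=0<\overline{Z}$ -- your claim that equality of $O(\overline{X},\overline{Z})$ and $O(\overline{Y},\overline{Z})$ forces $\overline{Z}=0$ is false in this subcase -- so the comparison genuinely drops to $K_{0}$, i.e.\ to the left endpoints and hence to the $m_{\widehat{O},B}$ terms; it does not ``never arise except when the outputs are equal''. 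The resolution you were missing: in this configuration $w(X)>w(Y)$ holds automatically, and that is the \emph{useful} direction. Since $B$ is increasing, $B(w(X),w(Z))\geq B(w(Y),w(Z))$, and since $O$ is increasing with $\overline{X}=\overline{Y}$, also $w(\widehat{O}(X,Z))\geq w(\widehat{O}(Y,Z))$; hence $m_{\widehat{O},B}(X,Z)\geq m_{\widehat{O},B}(Y,Z)$ and $K_{0}(IOw_{B}^{\alpha}(X,Z))=K_{\alpha}(\widehat{O}(X,Z))-\alpha\, m_{\widehat{O},B}(X,Z)\leq K_{0}(IOw_{B}^{\alpha}(Y,Z))$. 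This one-line comparison of the two minima is exactly how the paper closes these cases (it does so more verbosely, once for each of the four combinations of which branch the minima take). Deferring to ``as in Theorem \ref{theo-benja}'' is not a substitute: there the width term is $\min\{w(X),w(Y),\dots\}$, and the monotonicity argument must be redone for a general increasing $B$.

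A secondary point: your \textbf{(IOw3)} bound $1-(\text{left endpoint})\geq(1-\alpha)\bigl(1-O(\underline{X},\underline{Y})\bigr)$ only yields $O(\underline{X},\underline{Y})=1$ when $\alpha<1$, so the ``dual argument'' does not cover $\alpha=1$, which the statement permits. In fact it cannot be repaired there: with $B=\min$, $O=O_P$, $\alpha=1$, $X=[0.5,1]$, $Y=[1,1]$ one gets $m_{\widehat{O},B}(X,Y)=0$ and $IOw_{B}^{1}(X,Y)=[1,1]$ while $X\cdot Y=[0.5,1]$. The paper's own proof of \textbf{(IOw3)}, case 2, silently uses the implication $K_{\alpha}(\widehat{O}(X,Y))=1\Rightarrow\widehat{O}(X,Y)=[1,1]$, which likewise requires $\alpha<1$; so your incomplete step is pointing at a boundary defect shared with the paper rather than something a sharper dual argument would fix, and it should be flagged explicitly rather than hidden.
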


\begin{proof}
See \ref{ap-proofs}.
\end{proof}

\begin{proposition}\label{prop-contidorep}
Let $O: [0,1]^2 \rightarrow [0,1]$ be a strict overlap function, $B: [0,1]^2 \rightarrow [0,1]$ be an increasing and commutative function and $IOw_{B}^{\alpha}: L([0,1])^2 \rightarrow L([0,1])$ be an w-iv-overlap function for the tuple $(\leq_{Pr},\leq_{\alpha,\beta},B)$ obtained through Theorem \ref{teo-width} for any $\alpha,\beta \in [0,1]$ such that $\alpha \neq \beta$. Then, for any $X,Y \in L([0,1])$ one has that $IOw_{B}^{\alpha}(X,Y) \subseteq \widehat{O}(X,Y)$.
\end{proposition}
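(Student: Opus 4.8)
The plan is to read off the endpoints of both intervals explicitly and use Equation (\ref{int-kalfa-w}), which reconstructs any interval $Z\in L([0,1])$ as $[K_{\alpha}(Z)-\alpha\cdot w(Z),\,K_{\alpha}(Z)+(1-\alpha)\cdot w(Z)]$, together with the trivial bound $m_{\widehat{O},B}(X,Y)\le w(\widehat{O}(X,Y))$ coming directly from Definition \ref{moa}. The point is that $IOw_{B}^{\alpha}(X,Y)$ is obtained from $\widehat{O}(X,Y)$ by replacing, in that reconstruction formula, the width $w(\widehat{O}(X,Y))$ by the possibly smaller quantity $m_{\widehat{O},B}(X,Y)$; since the replacement shrinks the interval simultaneously at both endpoints (in proportions $\alpha$ and $1-\alpha$ of the deficit), the result stays inside $\widehat{O}(X,Y)$.

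Concretely, I would first recall that, as $\widehat{O}$ is the best interval representation of $O$, we have $\widehat{O}(X,Y)=[O(\underline{X},\underline{Y}),O(\overline{X},\overline{Y})]$ and hence $w(\widehat{O}(X,Y))=O(\overline{X},\overline{Y})-O(\underline{X},\underline{Y})$. Since both $O$ and $B$ are commutative, Definition \ref{moa} reduces $m_{\widehat{O},B}(X,Y)$ to $\min\{w(\widehat{O}(X,Y)),B(w(X),w(Y))\}$, so in particular $0\le m_{\widehat{O},B}(X,Y)\le w(\widehat{O}(X,Y))$. Writing $m:=m_{\widehat{O},B}(X,Y)$ and $Z:=\widehat{O}(X,Y)$, I would then apply Equation (\ref{int-kalfa-w}) to $Z$, obtaining $[K_{\alpha}(Z)-\alpha\cdot w(Z),\,K_{\alpha}(Z)+(1-\alpha)\cdot w(Z)]=Z$.

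Since $\alpha\in[0,1]$ gives $\alpha\ge 0$ and $1-\alpha\ge 0$, combining these inequalities with $m\le w(Z)$ yields $K_{\alpha}(Z)-\alpha\cdot w(Z)\le K_{\alpha}(Z)-\alpha\cdot m$ and $K_{\alpha}(Z)+(1-\alpha)\cdot m\le K_{\alpha}(Z)+(1-\alpha)\cdot w(Z)$. By the previous paragraph the outer terms are exactly $\underline{\widehat{O}(X,Y)}$ and $\overline{\widehat{O}(X,Y)}$, while by Equation (\ref{eq-iow}) the inner terms are exactly $\underline{IOw_{B}^{\alpha}(X,Y)}$ and $\overline{IOw_{B}^{\alpha}(X,Y)}$. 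Thus $\underline{\widehat{O}(X,Y)}\le \underline{IOw_{B}^{\alpha}(X,Y)}$ and $\overline{IOw_{B}^{\alpha}(X,Y)}\le \overline{\widehat{O}(X,Y)}$, which is precisely $IOw_{B}^{\alpha}(X,Y)\subseteq\widehat{O}(X,Y)$ by the definition of the inclusion order $\subseteq$.

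There is no genuinely hard step here; the argument is essentially bookkeeping. The only points requiring care are the commutative reduction of the four-term minimum in $m_{\widehat{O},B}$ and correctly identifying the two endpoints of each interval before comparing them, after which the inclusion follows from the nonnegativity of the weights $\alpha$ and $1-\alpha$ applied to the inequality $m\le w(\widehat{O}(X,Y))$.
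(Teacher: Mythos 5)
Your proposal is correct and follows essentially the same route as the paper: both rest on the two facts that $K_{\alpha}(IOw_{B}^{\alpha}(X,Y))=K_{\alpha}(\widehat{O}(X,Y))$ and that $w(IOw_{B}^{\alpha}(X,Y))=m_{\widehat{O},B}(X,Y)\leq w(\widehat{O}(X,Y))$. The only difference is presentational — the paper argues that equal $K_{\alpha}$ values force one interval to contain the other and then compares widths, whereas you compare the two endpoints explicitly via Equation (\ref{int-kalfa-w}); this just unpacks the same step.
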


\begin{proof}
It is immediate that $K_{\alpha}(IOw_{B}^{\alpha}(X,Y)) = K_{\alpha}(\widehat{O}(X,Y))$, for any $\alpha \in [0,1]$. Then, either $IOw_{B}^{\alpha}(X,Y) \subseteq \widehat{O}(X,Y)$ or $\widehat{O}(X,Y) \subseteq IOw_{B}^{\alpha}(X,Y)$. On the other hand, as
\begin{equation*}
w(IOw_{B}^{\alpha}(X,Y)) = m_{\widehat{O},B}(X,Y) = \min\{w(\widehat{O}(X,Y)),B(w(X),w(Y))\} \leq w(\widehat{O}(X,Y)),
\end{equation*}
then $IOw_{B}^{\alpha}(X,Y) \subseteq \widehat{O}(X,Y)$.
  \qed
\end{proof}

The next result is immediate from Theorem \ref{theo-ultradual}.

\begin{proposition}\label{prop-iowigualrep}
Let $O: [0,1]^2 \rightarrow [0,1]$ be an $(1,1)$-ultramodular overlap function, $A: [0,1]^2 \rightarrow [0,1]$ be an aggregation function such that $A \geq O^d$ and $IOw_{A}^{\alpha}: L([0,1])^2 \rightarrow L([0,1])$ be the w-iv-overlap function  for the tuple $(\leq_{Pr},\leq_{\alpha,\beta}, A)$, obtained by Theorem \ref{teo-width} with $\alpha,\beta \in [0,1]$. Then, $IOw_{A}^{\alpha}(X,Y) = \widehat{O}(X,Y)$, for all $X,Y \in L([0,1])$.
\end{proposition}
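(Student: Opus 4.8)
The plan is to show that, under the stated hypotheses, the minimal width threshold $m_{\widehat{O},A}$ appearing in Equation (\ref{eq-iow}) collapses to $w(\widehat{O}(X,Y))$, after which the identity (\ref{int-kalfa-w}) immediately yields the claim.

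First I would record the preliminary simplifications. Since $IOw_{A}^{\alpha}$ is obtained through Theorem \ref{teo-width}, the function $A$ is commutative and increasing, and $\widehat{O}$ is commutative because $O$ satisfies \textbf{(O1)}. Hence, by Definition \ref{moa}, for all $X,Y \in L([0,1])$ one has
\[
m_{\widehat{O},A}(X,Y) = \min\{w(\widehat{O}(X,Y)),\, A(w(X),w(Y))\}.
\]
Next comes the key step: apply Theorem \ref{theo-ultradual} to the $(1,1)$-ultramodular overlap function $O$. This gives $\mathfrak{L}(\widehat{O}) = O^d$, so in particular $\widehat{O}$ is width-limited by $O^d$, i.e. $w(\widehat{O}(X,Y)) \leq O^d(w(X),w(Y))$ for all $X,Y \in L([0,1])$. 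Combining this with the hypothesis $A \geq O^d$ gives $w(\widehat{O}(X,Y)) \leq A(w(X),w(Y))$, whence $m_{\widehat{O},A}(X,Y) = w(\widehat{O}(X,Y))$.

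Finally I would substitute this value into Equation (\ref{eq-iow}), obtaining
\[
IOw_{A}^{\alpha}(X,Y) = \bigl[\,K_{\alpha}(\widehat{O}(X,Y)) - \alpha \cdot w(\widehat{O}(X,Y)),\; K_{\alpha}(\widehat{O}(X,Y)) + (1-\alpha)\cdot w(\widehat{O}(X,Y))\,\bigr],
\]
and invoke the identity (\ref{int-kalfa-w}) with the interval $\widehat{O}(X,Y)$ playing the role of $X$ to conclude that the right-hand side equals $\widehat{O}(X,Y)$, as required. There is no genuine obstacle here; the only point requiring care is recognising that the statement ``$\mathfrak{L}(\widehat{O}) = O^d$'' from Theorem \ref{theo-ultradual} in particular delivers the pointwise bound $w(\widehat{O}(X,Y)) \leq O^d(w(X),w(Y))$, which is precisely what chains with $A \geq O^d$ to remove the minimum.
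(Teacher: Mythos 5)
Your proof is correct and follows exactly the route the paper intends: the paper states the result is ``immediate from Theorem \ref{theo-ultradual}'', and your argument simply makes explicit the chain $w(\widehat{O}(X,Y)) \leq \mathfrak{L}(\widehat{O})(w(X),w(Y)) = O^d(w(X),w(Y)) \leq A(w(X),w(Y))$ that collapses the minimum in $m_{\widehat{O},A}$, after which Equation (\ref{int-kalfa-w}) finishes the job. No gaps.
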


\begin{remark}

From Proposition \ref{prop-iowigualrep}, when we apply construction method presented in Theorem \ref{teo-width} to obtain an w-iv-overlap function $IOw_{A}^{\alpha}$ based on an $(1,1)$-ultramodular overlap function $O$ with a width-limiting aggregation function $A$, such that $A < O^d$ and $\alpha,\beta \in [0,1]$, the output interval is narrower (with greater quality of information) than the one obtained by $\widehat{O}$. Furthermore, from Proposition \ref{prop-contidorep}, it holds that this interval is contained in the one obtained by $\widehat{O}$, which is a desirable property, since $\widehat{O}$ is the best interval representation of $O$, in the sense of \cite{BEDREGAL20101373, Dimuro2008123}.

\end{remark}

The following examples aim to illustrate how the construction method presented in Theorem \ref{teo-width} works, comparing the results with the ones obtained through $o$-representable iv-overlap functions.

\begin{example}\label{ex-con-w}
Consider an increasing and commutative function $B: [0,1]^2 \rightarrow [0,1]$, the product overlap function $Op: [0,1]^2 \rightarrow [0,1]$, $\alpha \in (0,1]$ and $\beta \in [0,\alpha)$. Then, the interval-valued function $IOpw_{B}^{\alpha}: L([0,1])^2 \rightarrow L([0,1])$ defined, for all $X,Y \in L([0,1])$, by
\begin{equation}\label{eq-ex1}
IOpw_{B}^{\alpha}(X,Y)= [K_{\alpha}(\widehat{Op}(X,Y)) - \alpha \cdot m_{\widehat{Op},B}(X,Y),  K_{\alpha}(\widehat{Op}(X,Y)) + (1 - \alpha)\cdot m_{\widehat{Op},B}(X,Y)],
\end{equation}
is a w-iv-overlap function for the tuple $(\leq_{Pr},\leq_{\alpha,\beta}, \max)$.

\begin{description}
\item[1)] Take $B = \max$, $X = [0.2, 0.8]$ and $Y = [0.5, 1]$. So, we have that $\widehat{Op}([0.2, 0.8],[0.5, 1]) = [0.1,0.8]$. It is clear that $\widehat{Op}$ is not width-limited by $\max$, as $w(\widehat{Op}([0.2, 0.8],[0.5, 1])) = 0.7 > 0.6 = \max(w([0.2, 0.8]),w([0.5, 1]))$. Also, by Equation (\ref{int-kalfa-w}), observe that $\widehat{Op}([0.2, 0.8],[0.5, 1])$ can be obtained as:
\begin{equation}\label{eq-rep1}
\widehat{Op}([0.2, 0.8],[0.5, 1]) = [K_{\alpha}([0.1,0.8]) - \alpha \cdot 0.7,  K_{\alpha}([0.1,0.8]) + (1 - \alpha)\cdot 0.7],
\end{equation}
 which also results in $[0.1,0.8]$, for all $\alpha \in (0,1]$.

The minimal width threshold for the pair $(Op,\max)$ in this context is given by
\begin{eqnarray*}
\lefteqn{m_{\widehat{Op},\max}([0.2, 0.8],[0.5, 1]) =}&&\\
&&\min\{w(\widehat{Op}([0.2, 0.8],[0.5, 1])),\max(w([0.2, 0.8]),w([0.5, 1]))\} = \min\{0.7, \max\{0.6,0.5\}\} = 0.6.
\end{eqnarray*}

By Equation (\ref{eq-ex1}), we have that
\begin{equation}\label{eq-iopw1}
IOpw_{\max}^{\alpha}([0.2, 0.8],[0.5, 1]) = [K_{\alpha}([0.1,0.8]) - \alpha \cdot 0.6,  K_{\alpha}([0.1,0.8]) + (1 - \alpha)\cdot 0.6],
\end{equation}

and $w(IOpw_{\max}^{\alpha}([0.2, 0.8],[0.5, 1])) = 0.6 \leq \max(w([0.2, 0.8]),w([0.5, 1]))$, which is expected as $IOpw_{\max}^{\alpha}$ is width-limited by $\max$.

Notice, from Equations (\ref{eq-rep1}) and (\ref{eq-iopw1}), that  $K_{\alpha}(\widehat{Op}([0.2, 0.8],[0.5, 1])) = K_{\alpha}(IOpw_{\max}^{\alpha}([0.2, 0.8],[0.5, 1]) )$, and that $w(\widehat{Op}([0.2, 0.8],[0.5, 1])) = 0.7 > 0.6 = w(IOpw_{\max}^{\alpha}([0.2, 0.8],[0.5, 1]) )$.

Let us assign some values for $\alpha$ to observe what is the resulting interval for $IOpw_{\max}^{\alpha}([0.2, 0.8],[0.5, 1])$.

\begin{description}
\item [a)] If $\alpha = 0.01$, then
\begin{equation*}
IOp_{\max}^{0.01}([0.2, 0.8],[0.5, 1]) = [K_{0.01}([0.1,0.8]),  K_{0.01}([0.1,0.8]) +  0.6] = [0.107,0.707];
\end{equation*}
\item [b)] If $\alpha = 0.5$, then
\begin{equation*}
IOp_{\max}^{0.5}([0.2, 0.8],[0.5, 1]) = [K_{0.5}([0.1,0.8]) - 0.5 \cdot 0.6,  K_{0.5}([0.1,0.8]) +  0.5 \cdot 0.6] = [0.15,0.75];
\end{equation*}
\item [c)] If $\alpha = 1$, then
\begin{equation*}
 IOp_{\max}^{1}([0.2, 0.8],[0.5, 1]) = [K_{1}([0.1,0.8]) - 0.6,  K_{1}([0.1,0.8])] = [0.2,0.8].
\end{equation*}

\end{description}

\item [2)] Now, consider $B = \max$ and take $X = [0.6, 0.9]$ and $Y = [0.8,0.8]$.  So, we have that
\begin{equation*}
\widehat{Op}([0.6, 0.9],[0.8, 0.8]) = [0.48,0.72].
\end{equation*}
Although  $\widehat{Op}$ is not width-limited by $\max$, in this case it holds that as
\begin{equation*}
 w(\widehat{Op}([0.6, 0.9],[0.8, 0.8])) = 0.24 < 0.3 = \max(w([0.6, 0.9]),w([0.8, 0.8])).
\end{equation*}
Moreover,  by Equation (\ref{int-kalfa-w}), $\widehat{Op}([0.6, 0.9],[0.8, 0.8])$ can be written as:
\begin{equation*}
\widehat{Op}([0.6, 0.9],[0.8, 0.8]) = [K_{\alpha}([0.48,0.72]) - \alpha \cdot 0.24,  K_{\alpha}([0.48,0.72]) + (1 - \alpha)\cdot 0.24] = [0.48,0.72].
\end{equation*}

The minimal width threshold for the pair $(Op,\max)$ in this context is given by
\begin{eqnarray*}
\lefteqn{m_{\widehat{Op},\max}([0.6, 0.9],[0.8, 0.8]) =}&&\\
&&\min\{w(\widehat{Op}([0.6, 0.9],[0.8, 0.8])),\max(w([0.6, 0.9]),w([0.8, 0.8]))\} = \min\{0.24, \max\{0.3,0\}\} = 0.24.
\end{eqnarray*}

By Equation (\ref{eq-ex1}), we have that
\begin{equation*}
IOpw_{\max}^{\alpha}([0.6, 0.9],[0.8, 0.8]) = [K_{\alpha}([0.48,0.72]) - \alpha \cdot 0.24,  K_{\alpha}([0.48,0.72]) + (1 - \alpha)\cdot 0.24] = [0.48,0.72].
\end{equation*}

Thus, $\widehat{Op}([0.6, 0.9],[0.8, 0.8]) = IOpw_{\max}^{\alpha}([0.6, 0.9],[0.8, 0.8]) = [0.48,0.72]$, for all $\alpha \in (0,1]$.

\item [3)]Next, take the same $X = [0.6, 0.9]$, and $Y = [0.8,0.8]$, but now with $B = \min$. So,
\begin{eqnarray*}
\lefteqn{m_{\widehat{Op},\min}([0.6, 0.9],[0.8,0.8]) =}&&\\
&&\min\{w(\widehat{Op}([0.6, 0.9],[0.8,0.8])), \min\{w([0.6, 0.9]),w([0.8,0.8])\}\} = \min\{0.24, \min\{0.3,0\}\} = 0.
\end{eqnarray*}
and, therefore,
\begin{equation*}
IOp_{\min}^{\alpha}([0.6, 0.9],[0.8,0.8])= [K_{\alpha}([0.48,0.72]),  K_{\alpha}([0.48,0.72])],
\end{equation*}
for any $\alpha \in (0,1]$. One can observe that $w(IOp_{\min}^{\alpha}([0.6, 0.9],[0.8,0.8]))=0$,  which is expected from Remark \ref{rem-def} as $Y = [0.8,0.8]$ is degenerate and $\min$ is a conjunctive function.

\item [4)] Finally, take $X = [0.2, 0.8]$ and $Y = [0.5, 1]$, and let $B = Op^d $. Then, the minimal width threshold for the pair $(\widehat{Op},Op^d)$ is given by
    \begin{eqnarray*}
\lefteqn{m_{\widehat{Op},Op^d}([0.2, 0.8],[0.5, 1]) =}&&\\
&&\min\{w(\widehat{Op}([0.2, 0.8],[0.5, 1])),Op^d(w([0.2, 0.8]),w([0.5, 1]))\} = \min\{0.7, Op^d(0.6,0.5)\} = 0.7.
\end{eqnarray*}
By Equation (\ref{eq-ex1}), we have that
\begin{eqnarray*}\label{eq-iopw4}
\lefteqn{IOpw_{Op^d}^{\alpha}([0.2, 0.8],[0.5, 1])}&&\\
& = &[K_{\alpha}([0.1,0.8]) - \alpha \cdot 0.7,  K_{\alpha}([0.1,0.8]) + (1 - \alpha)\cdot 0.7] \\
& = & \widehat{Op}([0.2, 0.8],[0.5, 1]) \\
& = &[0.1,0.8],
\end{eqnarray*}
which is expected, by Proposition \ref{prop-iowigualrep}, since $Op$ is an $(1,1)$-ultramodular overlap function.

\end{description}
\end{example}

Next, we present the second construction method for w-iv-overlap functions:

\begin{theorem}\label{theo-benja-geral}Let $O: [0,1]^2 \rightarrow [0,1]$ be a strict overlap function, $B: [0,1]^2 \rightarrow [0,1]$ be a commutative, increasing and conjunctive function and $\alpha\in (0,1)$, $\beta \in [0,1]$ such that $\alpha\neq\beta$. Then $IOw_{B}^{\alpha}: L([0,1])^2\rightarrow L([0,1])$ defined, for all $X,Y \in L([0,1])$, by
\begin{eqnarray*}
IOw_{B}^{\alpha}(X,Y)=[O(K_{\alpha}(X),K_{\alpha}(Y))-\alpha \theta, O(K_{\alpha}(X), K_{\alpha}(Y))+(1-\alpha)\theta],
\end{eqnarray*}
 where
\begin{eqnarray*}
\theta=B(B(w(X), w(Y)), B(O(K_{\alpha}(X), K_{\alpha}(Y)),1 - O(K_{\alpha}(X), K_{\alpha}(Y))))
\end{eqnarray*}
 is a w-iv-overlap function for the tuple $(\leq_{\alpha,\beta},\leq_{\alpha,\beta},B)$.
\end{theorem}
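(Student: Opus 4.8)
The plan is to verify the five conditions (IOw1)--(IOw5) of Definition~\ref{def-w-iv-ov} for the tuple $(\leq_{\alpha,\beta},\leq_{\alpha,\beta},B)$, after first recording two basic identities. Writing $c = c(X,Y) = O(K_{\alpha}(X),K_{\alpha}(Y))$ and $\theta = \theta(X,Y)$ as in the statement, the first step is to note that since $B$ is conjunctive, $\theta = B\big(B(w(X),w(Y)),B(c,1-c)\big) \leq B(c,1-c) \leq \min\{c,1-c\}$, so $c-\alpha\theta \geq (1-\alpha)c \geq 0$ and $c+(1-\alpha)\theta \leq \alpha c + (1-\alpha) \leq 1$; hence $IOw_{B}^{\alpha}(X,Y) \in L([0,1])$, and moreover, by Equation~(\ref{int-kalfa-w}), $w(IOw_{B}^{\alpha}(X,Y)) = \theta$, $K_{\alpha}(IOw_{B}^{\alpha}(X,Y)) = c$ and $K_{\beta}(IOw_{B}^{\alpha}(X,Y)) = c+(\beta-\alpha)\theta$. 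I would also record, for repeated use, that since $\alpha \in (0,1)$ one has $K_{\alpha}(Z)=0 \Leftrightarrow Z=[0,0]$ and $K_{\alpha}(Z)=1 \Leftrightarrow Z=[1,1]$, and that if $Z=[0,0]$ then $c(X,Z)=0$ (by \textbf{(O2)}) and $\theta(X,Z)=0$ (since $B(w(X),0)=0$), so $IOw_{B}^{\alpha}(X,Z)=[0,0]$.

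Conditions (IOw1), (IOw5), (IOw2) and (IOw3) are then short. Commutativity (IOw1) follows from commutativity of $O$ and of $B$ and from the insensitivity of $w$ to argument order. For (IOw5), conjunctivity of $B$ gives $w(IOw_{B}^{\alpha}(X,Y)) = \theta \leq B(w(X),w(Y))$, which is exactly width-limitation by $B$. For (IOw2): $X\cdot Y=[0,0]$ is equivalent to $X=[0,0]$ or $Y=[0,0]$, in which case $IOw_{B}^{\alpha}(X,Y)=[0,0]$ by the preliminary fact; conversely $IOw_{B}^{\alpha}(X,Y)=[0,0]$ forces $c = K_{\alpha}(IOw_{B}^{\alpha}(X,Y)) = 0$, hence $K_{\alpha}(X)K_{\alpha}(Y)=0$ by \textbf{(O2)}, hence $X=[0,0]$ or $Y=[0,0]$. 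Condition (IOw3) is the symmetric argument via \textbf{(O3)}: $X\cdot Y=[1,1] \Leftrightarrow X=Y=[1,1]$, in which case $w(X)=w(Y)=0$ gives $\theta=0$ and $c=1$, so $IOw_{B}^{\alpha}(X,Y)=[1,1]$; and $IOw_{B}^{\alpha}(X,Y)=[1,1]$ forces $c=1$, hence $K_{\alpha}(X)K_{\alpha}(Y)=1$, hence $X=Y=[1,1]$.

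The substantial part is (IOw4). Since $IOw_{B}^{\alpha}$ is commutative and $\leq_{\alpha,\beta}$ is a total (hence transitive) order, it suffices to prove monotonicity in the first argument: $X_1\leq_{\alpha,\beta}X_2 \Rightarrow IOw_{B}^{\alpha}(X_1,Z)\leq_{\alpha,\beta}IOw_{B}^{\alpha}(X_2,Z)$ for every $Z$. If $Z=[0,0]$ both sides are $[0,0]$, so assume $Z\neq[0,0]$, i.e. $K_{\alpha}(Z)>0$. If $K_{\alpha}(X_1)<K_{\alpha}(X_2)$, I would show $O(K_{\alpha}(X_1),K_{\alpha}(Z))<O(K_{\alpha}(X_2),K_{\alpha}(Z))$: when $K_{\alpha}(X_1)>0$ this is strictness of $O$ in the first slot on $(0,1]$, and when $K_{\alpha}(X_1)=0$ the left value is $0$ by \textbf{(O2)} while the right is positive; either way $K_{\alpha}(IOw_{B}^{\alpha}(X_1,Z))<K_{\alpha}(IOw_{B}^{\alpha}(X_2,Z))$, which already yields $IOw_{B}^{\alpha}(X_1,Z)\leq_{\alpha,\beta}IOw_{B}^{\alpha}(X_2,Z)$. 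If instead $K_{\alpha}(X_1)=K_{\alpha}(X_2)$ and $K_{\beta}(X_1)\leq K_{\beta}(X_2)$, then $c(X_1,Z)=c(X_2,Z)=:c$, so the two outputs have the same $K_{\alpha}$-value and I must compare their $K_{\beta}$-values, i.e. show $(\beta-\alpha)\,\theta(X_1,Z)\leq(\beta-\alpha)\,\theta(X_2,Z)$. Here Lemma~\ref{lemma-novo} is invoked: if $\beta>\alpha$ then $\leq_{\alpha,\beta}=\leq_{\alpha,1}$, so $\overline{X_1}\leq\overline{X_2}$, which with $K_{\alpha}(X_1)=K_{\alpha}(X_2)$ forces $\underline{X_1}\geq\underline{X_2}$, hence $w(X_1)\leq w(X_2)$, hence $\theta(X_1,Z)\leq\theta(X_2,Z)$ as $B$ is increasing, and $\beta-\alpha>0$ closes the case; if $\beta<\alpha$ then $\leq_{\alpha,\beta}=\leq_{\alpha,0}$, so $\underline{X_1}\leq\underline{X_2}$, which forces $\overline{X_1}\geq\overline{X_2}$ and $w(X_1)\geq w(X_2)$, hence $\theta(X_1,Z)\geq\theta(X_2,Z)$, and $\beta-\alpha<0$ closes it. Monotonicity in the second argument then follows by commutativity, and the joint statement by transitivity.

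The main obstacle is exactly this last sub-case of (IOw4): when the two inputs share the same $K_{\alpha}$-value the order is decided by the secondary index $K_{\beta}$, and one must translate $K_{\beta}(X_1)\leq K_{\beta}(X_2)$ — via Lemma~\ref{lemma-novo} together with the equality of $K_{\alpha}$-values — into a comparison of the widths $w(X_1)$ and $w(X_2)$ whose direction flips with the sign of $\beta-\alpha$, and then track how this propagates through the nested $B$'s in $\theta$ and through the factor $\beta-\alpha$ in the $K_{\beta}$ of the output. Everything else is routine once the identities $w(IOw_{B}^{\alpha}(X,Y))=\theta$, $K_{\alpha}(IOw_{B}^{\alpha}(X,Y))=c$ and the conjunctivity bound $\theta\leq\min\{B(w(X),w(Y)),c,1-c\}$ are in place.
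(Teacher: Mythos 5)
Your proposal is correct and follows essentially the same route as the paper's own proof: both verify \textbf{(IOw1)}--\textbf{(IOw5)} from the identities $K_{\alpha}(IOw_{B}^{\alpha}(X,Y))=O(K_{\alpha}(X),K_{\alpha}(Y))$ and $w(IOw_{B}^{\alpha}(X,Y))=\theta$, use \textbf{(O2)}/\textbf{(O3)} for the boundary conditions, conjunctivity of $B$ for width-limitation, and reduce the tie-breaking sub-case of \textbf{(IOw4)} via Lemma \ref{lemma-novo} to a width comparison fed through the increasingness of $B$. Your explicit check that the output interval lies in $[0,1]$ and your unified sign-tracking of $\beta-\alpha$ (where the paper treats $\beta=0$ and leaves $\beta=1$ as analogous) are minor refinements, not a different argument.
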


\begin{proof}
See \ref{ap-proofs2}.
\end{proof}

The following result is immediate as a w-iv-overlap function for the tuple $(\leq_{\alpha,\beta},\leq_{\alpha,\beta},B)$ is also a $\leq_{\alpha,\beta}$-overlap function (Definition \ref{def-int-AD-n-overlap}), in the sense of \cite{ASMUS2020-TFS}.

\begin{corollary}
\label{coro-benja-geral}Let $O: [0,1]^2 \rightarrow [0,1]$ be a strict overlap function, $B: [0,1]^2 \rightarrow [0,1]$ be a commutative, increasing and conjunctive function and $\alpha\in (0,1)$, $\beta \in [0,1]$ such that $\alpha\neq\beta$. Then $IOw_{B}^{\alpha}: L([0,1])^2\rightarrow L([0,1])$ defined, for all $X,Y \in L([0,1])$, by
\begin{eqnarray*}
IOw_{B}^{\alpha}(X,Y)=[O(K_{\alpha}(X),K_{\alpha}(Y))-\alpha \theta, O(K_{\alpha}(X), K_{\alpha}(Y))+(1-\alpha)\theta],
\end{eqnarray*}
 where
\begin{eqnarray*}
\theta=B(B(w(X), w(Y)), B(O(K_{\alpha}(X), K_{\alpha}(Y)),1 - O(K_{\alpha}(X), K_{\alpha}(Y))))
\end{eqnarray*}
 is a $\leq_{\alpha,\beta}$-overlap function.
\end{corollary}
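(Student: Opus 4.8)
The plan is to observe that this corollary requires no genuinely new argument beyond Theorem~\ref{theo-benja-geral}: it is purely a matter of matching the axioms of the two notions involved. Comparing Definition~\ref{def-w-iv-ov} with Definition~\ref{def-int-AD-n-overlap}, a w-iv-overlap function for the tuple $(\leq_{\alpha,\beta},\leq_{\alpha,\beta},B)$ satisfies \textbf{(IOw1)}--\textbf{(IOw4)}, which are, respectively, commutativity, condition \textbf{(IO2)}, condition \textbf{(IO3)}, and $(\leq_{\alpha,\beta},\leq_{\alpha,\beta})$-increasingness; by the convention introduced immediately after Definition~\ref{def-leqleq}, the latter is exactly $\leq_{\alpha,\beta}$-increasingness, i.e.\ condition \textbf{(AO4)}. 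These four conditions are precisely those defining an $\leq_{\alpha,\beta}$-overlap function, so the only difference is that the extra condition \textbf{(IOw5)} (width-limitation by $B$) is simply dropped. Hence every w-iv-overlap function for $(\leq_{\alpha,\beta},\leq_{\alpha,\beta},B)$ is in particular an $\leq_{\alpha,\beta}$-overlap function, which is exactly the observation stated in the remark preceding the corollary.

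Concretely, I would first invoke Theorem~\ref{theo-benja-geral}: under the stated hypotheses on $O$, $B$, $\alpha$ and $\beta$, the function $IOw_{B}^{\alpha}$ given by the displayed formula (with the same expression for $\theta$) is a w-iv-overlap function for the tuple $(\leq_{\alpha,\beta},\leq_{\alpha,\beta},B)$; then I would apply the axiom-matching observation above to conclude that $IOw_{B}^{\alpha}$ is an $\leq_{\alpha,\beta}$-overlap function. There is essentially no obstacle here: all of the genuine work — verifying commutativity, the boundary behaviour \textbf{(IO2)}--\textbf{(IO3)} for the modified width term $\theta$, the $\leq_{\alpha,\beta}$-monotonicity (where the Lemma~\ref{lema-prkalfa}-style case analysis and the sign of $\beta-\alpha$ come into play), and the width bound — has already been carried out in the proof of Theorem~\ref{theo-benja-geral} in~\ref{ap-proofs2}. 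The one point worth checking explicitly is that the ranges of $\alpha,\beta$ and the definition of $\theta$ in the corollary match those of Theorem~\ref{theo-benja-geral} verbatim, so that the implication is truly immediate and no side condition is lost; this is indeed the case, so the proof reduces to a one-line appeal to the theorem and to Definition~\ref{def-int-AD-n-overlap}.
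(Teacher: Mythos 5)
Your proposal is correct and matches the paper's own argument: the paper also derives the corollary as an immediate consequence of Theorem~\ref{theo-benja-geral}, noting that a w-iv-overlap function for the tuple $(\leq_{\alpha,\beta},\leq_{\alpha,\beta},B)$ satisfies exactly the conditions of Definition~\ref{def-int-AD-n-overlap} once the width-limitation axiom \textbf{(IOw5)} is dropped. Your explicit axiom-by-axiom matching is just a more detailed spelling-out of the same one-line observation.
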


\begin{example}\label{ex-con-benja}
Consider a function $B: [0,1]^2 \rightarrow [0,1]$ such that $B = \min$ and the product overlap function $Op: [0,1]^2 \rightarrow [0,1]$. Then, the interval-valued function $IOpw_{\min}^{\alpha}: L([0,1])^2 \rightarrow L([0,1])$ defined, for all $X,Y \in L([0,1])$, by
\begin{eqnarray}\label{eq-benja}
IOpw_{\min}^{\alpha}(X,Y)=[Op(K_{\alpha}(X),K_{\alpha}(Y))-\alpha \theta, Op(K_{\alpha}(X), K_{\alpha}(Y))+(1-\alpha)\theta],
\end{eqnarray}
 where
\begin{eqnarray*}
\theta=\min(\min(w(X), w(Y)), \min(Op(K_{\alpha}(X), K_{\alpha}(Y)),1 - Op(K_{\alpha}(X), K_{\alpha}(Y))))
\end{eqnarray*}
is a w-iv-overlap function for the tuple $(\leq_{\alpha,\beta},\leq_{\alpha,\beta}, \min)$, for all $\alpha \in (0,1), \beta \in [0,1]$ with $\alpha \neq \beta$.

\begin{description}
\item[1)] Take $X = [0.2, 0.8]$, and $Y = [0.5, 1]$. By Equation (\ref{eq-benja}), we have that
\begin{eqnarray*}
\lefteqn{IOpw_{\min}^{\alpha}([0.2, 0.8],[0.5, 1])}&&\\
&=&[Op(K_{\alpha}([0.2, 0.8]),K_{\alpha}([0.5, 1]))-\alpha \theta, Op(K_{\alpha}([0.2, 0.8]), K_{\alpha}([0.5, 1]))+(1-\alpha)\theta],
\end{eqnarray*}
 where
\begin{eqnarray*}
\lefteqn{\theta=\min(\min(w([0.2, 0.8]), w([0.5, 1])),}&&\\
&& \min(Op(K_{\alpha}([0.2, 0.8]), K_{\alpha}([0.5, 1])),1 - Op(K_{\alpha}([0.2, 0.8]), K_{\alpha}([0.5, 1]))))
\end{eqnarray*}

Let us assign some values for $\alpha$ to observe what is the resulting interval for $IOpw_{\min}^{\alpha}([0.2, 0.8],[0.5, 1])$.

\begin{description}
\item [a)] If $\alpha = 0.01$ then
\begin{eqnarray*}
\lefteqn{ \theta=\min(\min(w([0.2, 0.8]), w([0.5, 1])),}&&\\
&& \min(Op(K_{0.01}([0.2, 0.8]), K_{0.01}([0.5, 1])),1 - Op(K_{0.01}([0.2, 0.8]), K_{0.01}([0.5, 1]))))\\
&=& \min(\min(0.6,0.5),\min(0.104,0.896)) = 0.104
\end{eqnarray*}
and
\begin{equation*}
IOpw_{\min}^{0}([0.2, 0.8],[0.5, 1]) = [0.104 - 0.01 \cdot 0.104, 0.104 +  0.99 \cdot 0.104 ] = [0.103,0.207];
\end{equation*}

\item [b)] If $\alpha = 0.5$ then
\begin{eqnarray*}
\lefteqn{ \theta=\min(\min(w([0.2, 0.8]), w([0.5, 1])),}&&\\
&& \min(Op(K_{0.5}([0.2, 0.8]),K_{0.5}([0.5, 1])),1 - Op(K_{0.5}([0.2, 0.8]), K_{0.5}([0.5, 1]))))\\
&  = & \min(\min(0.6,0.5),\min(0.375,0.625)) = 0.375
\end{eqnarray*}
 and
 \begin{equation*}
 IOpw_{\min}^{0.5}([0.2, 0.8],[0.5, 1]) = [0.375 - 0.5 \cdot 0.375, 0.375 + 0.5 \cdot 0.375] = [0.1875,0.5625];
 \end{equation*}

\item [c)] If $\alpha = 0.99$ then
\begin{eqnarray*}
\lefteqn{\theta=\min(\min(w([0.2, 0.8]), w([0.5, 1])),}&&\\
&&\min(Op(K_{0.99}([0.2, 0.8]), K_{0.99}([0.5, 1])),1 - Op(K_{0.99}([0.2, 0.8]), K_{0.99}([0.5, 1]))))\\
& =& \min(\min(0.6,0.5),\min(0.79,0.2099)) = 0.2099
\end{eqnarray*}
and
\begin{equation*}
IOpw_{\min}^{0.99}([0.2, 0.8],[0.5, 1]) = [0.79 - 0.99*0.2099, 0.79 + 0.01*2099] = [0.5822,0.7921].
\end{equation*}
\end{description}

\item [2)] Now, take $X = [0.6, 0.9]$, and $Y = [0.8,0.8]$. By Equation (\ref{eq-benja}), we have that
\begin{eqnarray*}
\lefteqn{IOpw_{\min}^{\alpha}([0.6, 0.9],[0.8,0.8])}&&\\
&&=[Op(K_{\alpha}([0.6, 0.9]),K_{\alpha}([0.8,0.8]))-\alpha \theta, Op(K_{\alpha}([0.6, 0.9]), K_{\alpha}([0.8,0.8]))+(1-\alpha)\theta],
\end{eqnarray*}
 where
\begin{eqnarray*}
&&\theta=\min(\min(w([0.6, 0.9],w([0.8,0.8])), \min(Op(K_{\alpha}([0.6, 0.9]),\\
&&K_{\alpha}([0.8,0.8])),1 - Op(K_{\alpha}([0.6, 0.9]), K_{\alpha}([0.8, 0.8])))) = 0.
\end{eqnarray*}

Thus, $IOpw_{\min}^{\alpha}([0.6, 0.9],[0.8,0.8])= [K_{\alpha}([0.6, 0.9]) \cdot K_{\alpha}([0.8,0.8])]$, for any $\alpha \in (0,1)$. For example:

\begin{description}
\item [a)] If $\alpha = 0.01$ then $IOpw_{\min}^{0.01}([0.6, 0.9],[0.8,0.8]) = [0.603 \cdot 0.8, 0.603 \cdot 0.8] = [0.4824,0.4824]$;

\item [a)] If $\alpha = 0.5$ then $IOpw_{\min}^{0.5}([0.6, 0.9],[0.8,0.8]) = [0.7 \cdot 0.8, 0.7 \cdot 0.8] = [0.56,0.56]$;
\item [a)] If $\alpha = 0.99$ then $IOpw_{\min}^{0.99}([0.6, 0.9],[0.8,0.8]) =  [0.897 \cdot 0.8, 0.897 \cdot 0.8] = [0.7176,0.7176]$.
\end{description}
\end{description}
\end{example}

\begin{remark}
Considering Theorem \ref{theo-benja-geral}, when $B=\min$ we recover the construction method presented in Theorem \ref{theo-benja}, meaning that Theorem \ref{theo-benja-geral} is more general. Also, it is noteworthy that the reason for $\alpha \in (0,1)$ is to assure that the construction method produces an w-iv-overlap function. For example, if $\alpha = 0$, then $IOw_{B}^{0}([0,1],[0.2,0.2]) = [0,0]$, which would contradict \textbf{(IOw2)}. Also, one can observe that $IOw_{B}^{\alpha}$ falls into the conditions of Remark \ref{rem-def}, meaning that if either $X$ or $Y$ is degenerate, then $IOw_{B}^{\alpha}(X,Y)$ is also degenerate, as shown in Example \ref{ex-con-benja}, for $X = [0.6, 0.9]$ and $Y =[0.8,0.8]$. Finally, although the w-iv-overlap constructed by the method presented in Theorem \ref{theo-benja-geral} is width-limited by the chosen function $B$, the output interval may not be contained in the best interval representation of the chosen overlap function $O$, as shown in the next example.
\end{remark}

\begin{example}
Consider an w-iv-overlap function $IOpw_{\min}^{0.99}$ for the tuple $(\leq_{0.99,\beta},\leq_{0.99,\beta}, \min)$ obtained via the construction method presented in Theorem \ref{theo-benja-geral} by taking $B = \min$, $O = O_P$ (the product overlap) and $\beta \in [0,1]$ such that $\beta \neq 0.99$. In the case when $X = Y = [0.1,0.4]$, we have that
\begin{equation*}
\widehat{Op}([0.1,0.4],[0.1,0.4]) = [0.1 \cdot 0.1, 0.4 \cdot 0.4] = [0.01,0.16].
\end{equation*}
From Theorem \ref{theo-benja-geral}, it holds that
\begin{eqnarray*}
\lefteqn{\theta=\min(\min(w([0.1,0.4]), w([0.1,0.4])),}&&\\
 && \min(Op(K_{0.99}([0.1,0.4]), K_{0.99}([0.1,0.4])),1 - Op(K_{0.99}([0.1,0.4]), K_{0.99}([0.1,0.4]))))\\
&=&\min(\min(0.3, 0.3), \min(Op(0.397, 0.397),1 - Op(0.397,0.397))))\\
&=&\min(0.3, \min(0.1576,0.8424)))\\
&=& 0.1576.
\end{eqnarray*}
So,
\begin{eqnarray*}
\lefteqn{IOpw_{\min}^{0.99}([0.1,0.4],[0.1,0.4]) = [Op(K_{0.99}([0.1,0.4]), K_{0.99}([0.1,0.4])) - 0.99 \cdot 0.1576,}&&\\
 && Op(K_{0.99}([0.1,0.4]), K_{0.99}([0.1,0.4])) + 0.01 \cdot 0.1576] \\
& = & [0.0016, 0.1502],
\end{eqnarray*}
showing that $IOpw_{\min}^{0.99}([0.1,0.4],[0.1,0.4]) \nsubseteq \widehat{Op}([0.1,0.4],[0.1,0.4])$.
\end{example}

Before presenting the third construction method for w-iv-overlaps, let us recall some important concepts presented in \cite{BUSTINCE202023}:

\begin{definition}\label{def-lamda}
Let $c \in [0,1]$ and $\alpha \in [0,1]$. We denote by $d_{\alpha}(c)$ the maximal possible width of an interval $Z \in L([0,1])$ such that $K_{\alpha}(Z) = c$. Moreover, for any $X \in L([0,1])$, define
\begin{equation*}
\lambda_{\alpha}(X)=\frac{w(X)}{d_{\alpha}(K_{\alpha}(X))},
\end{equation*}
where we set $\frac{0}{0}=1$.
\end{definition}

\begin{proposition}\label{def-dalpha}
For all $\alpha \in [0,1]$ and $X \in L([0,1])$ it holds that
\begin{equation*}
d_{\alpha}(K_{\alpha}(X)) = \min\left\{\frac{K_{\alpha}(X)}{\alpha},\frac{1 - K_{\alpha}(X)}{1 - \alpha}\right\},
\end{equation*}
where we set $\frac{r}{0}=1$, for all $r \in [0,1]$.
\end{proposition}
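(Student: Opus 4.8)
The plan is to exploit the explicit reconstruction of an interval from its $K_\alpha$-value and its width, that is, Equation (\ref{int-kalfa-w}), and then translate the requirement that the interval lie in $L([0,1])$ into two scalar inequalities on the width.

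First I would fix $\alpha \in [0,1]$ and abbreviate $c = K_\alpha(X)$, noting $c \in [0,1]$. By Equation (\ref{int-kalfa-w}), every $Z \in L([0,1])$ with $K_\alpha(Z) = c$ satisfies $Z = [\,c - \alpha\, w(Z),\, c + (1-\alpha)\, w(Z)\,]$; conversely, for any real $t \geq 0$, the pair $[\,c - \alpha t,\, c + (1-\alpha) t\,]$, whenever it is a legitimate element of $L([0,1])$, has $K_\alpha$-value $c$ and width $t$. Hence $d_\alpha(c)$ equals the supremum of all $t \geq 0$ for which $c - \alpha t \geq 0$ and $c + (1-\alpha) t \leq 1$: indeed, the remaining membership conditions $c - \alpha t \leq c + (1-\alpha) t \leq 1$ and $c - \alpha t \geq 0$ for the lower end, respectively $c + (1-\alpha) t \geq 0$, follow automatically from $t \geq 0$ and $c \in [0,1]$.

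Next I would solve these two constraints for $t$. If $\alpha \in (0,1)$, the first gives $t \leq c/\alpha$ and the second gives $t \leq (1-c)/(1-\alpha)$, so the feasible set of widths is the closed interval $[\,0,\ \min\{c/\alpha,\ (1-c)/(1-\alpha)\}\,]$; its supremum is attained, and therefore $d_\alpha(c) = \min\{c/\alpha,\ (1-c)/(1-\alpha)\}$. For the boundary cases I would argue directly: if $\alpha = 0$ then $K_0(Z) = \underline{Z} = c$ and only $c + t \leq 1$ survives, so $d_0(c) = 1 - c$, which agrees with $\min\{1,\ (1-c)/1\} = 1-c$ under the convention $\tfrac{c}{0} = 1$; symmetrically, if $\alpha = 1$ then $K_1(Z) = \overline{Z} = c$, only $c - t \geq 0$ survives, so $d_1(c) = c = \min\{c/1,\ 1\}$ under $\tfrac{1-c}{0} = 1$. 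Substituting $c = K_\alpha(X)$ back into all three cases yields the claimed identity, and I do not anticipate any real obstacle beyond careful bookkeeping of the division-by-zero conventions at $\alpha \in \{0,1\}$ and the observation that the supremum is actually a maximum because $t = 0$ is always feasible.
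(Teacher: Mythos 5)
The paper does not prove this proposition at all: it is recalled from the cited reference on admissible orders and width-based similarity, and is stated without argument. Your proof is correct and complete. The key step — using Equation (\ref{int-kalfa-w}) to parametrize every interval with $K_\alpha$-value $c$ as $[\,c-\alpha t,\ c+(1-\alpha)t\,]$ for a width $t\geq 0$, and then reading off the feasibility constraints $c-\alpha t\geq 0$ and $c+(1-\alpha)t\leq 1$ — is exactly the right reduction, the boundary cases $\alpha\in\{0,1\}$ are handled consistently with the stated division-by-zero convention, and your observation that the supremum is attained (so that ``maximal possible width'' is well defined) is a point worth making explicit. Nothing is missing.
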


Now, we present a version of Theorem 3.16 in \cite{BUSTINCE202023} in the context of $2$-dimensional functions.
\begin{theorem}
Let $\alpha,\beta \in [0,1]$, such that, $\alpha \neq \beta$. Let $A_1, A_2: [0,1]^2 \rightarrow [0,1]$ be two aggregation functions where $A_1$ is strictly increasing. Then $IF^{\alpha}:L([0, 1])^2 \rightarrow L([0, 1])$ defined by:
\begin{eqnarray*}
IF_{A1,A2}^{\alpha}(X,Y)= R, \, \,  \mbox{where},\left\{
\begin{array}{l}
{K_{\alpha}(R)=A_1(K_{\alpha}(X),K_{\alpha}(Y))}, \\
{\lambda_{\alpha}(R)=A_2(\lambda_{\alpha}(X),\lambda_{\alpha}(Y))},
\end{array}
\right.
\end{eqnarray*}
for all $X,Y \in L([0,1])$, is an $\leq_{\alpha,\beta}$-increasing iv-aggregation function.
\end{theorem}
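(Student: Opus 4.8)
The plan is to verify, in this order, three things about the map $IF_{A1,A2}^{\alpha}$: (i) that the two implicit equations actually single out a unique $R\in L([0,1])$, so the function is well defined; (ii) the boundary condition \textbf{(IA2)}; and (iii) $\leq_{\alpha,\beta}$-monotonicity. Since the statement is the restriction to $n=2$ of Theorem 3.16 in \cite{BUSTINCE202023}, the argument parallels the one given there; I sketch it below.

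\emph{Well-definedness.} Fix $X,Y\in L([0,1])$ and set $c:=A_1(K_{\alpha}(X),K_{\alpha}(Y))\in[0,1]$ and $\ell:=A_2(\lambda_{\alpha}(X),\lambda_{\alpha}(Y))\in[0,1]$. By Proposition \ref{def-dalpha}, $d_{\alpha}(c)=\min\{c/\alpha,(1-c)/(1-\alpha)\}$ is determined by $c$, so requiring $K_{\alpha}(R)=c$ and $\lambda_{\alpha}(R)=\ell$ forces $w(R)=\ell\,d_{\alpha}(c)$, and then Equation (\ref{int-kalfa-w}) forces $R=[\,c-\alpha\ell d_{\alpha}(c),\,c+(1-\alpha)\ell d_{\alpha}(c)\,]$; hence $R$ is unique. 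It lies in $L([0,1])$ because $w(R)=\ell d_{\alpha}(c)\le d_{\alpha}(c)$ gives $\alpha w(R)\le c$ and $(1-\alpha)w(R)\le 1-c$. The only delicate point is when $d_{\alpha}(c)=0$: then $c\in\{0,1\}$, and since $A_1$ is strictly increasing with $A_1(0,0)=0$, $A_1(1,1)=1$, this forces $K_{\alpha}(X)=K_{\alpha}(Y)=c$, which (checking each value of $\alpha$ against the conventions) forces $X=Y\in\{[0,0],[1,1]\}$; for these inputs $\lambda_{\alpha}(X)=\lambda_{\alpha}(Y)=0/0=1$, so $\ell=A_2(1,1)=1=\lambda_{\alpha}([c,c])$ and there is no clash.

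\emph{Boundary.} For $X=Y=[0,0]$ one gets $c=A_1(0,0)=0$ and $w(R)=\ell\,d_{\alpha}(0)$, which is $0$ since either $d_{\alpha}(0)=0$ (for $\alpha\in(0,1]$) or $\alpha=0$ and then $\lambda_0([0,0])=0/1=0$, so $\ell=A_2(0,0)=0$; hence $R=[0,0]$. The computation for $IF_{A1,A2}^{\alpha}([1,1],[1,1])=[1,1]$ is symmetric.

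\emph{$\leq_{\alpha,\beta}$-monotonicity — the crux.} It suffices to treat the first argument. Take $X_1\leq_{\alpha,\beta}X_2$ and any $Y$, and write $R_i:=IF_{A1,A2}^{\alpha}(X_i,Y)$. The key identity, obtained by feeding the representation $X=[K_{\alpha}(X)-\alpha w(X),\,K_{\alpha}(X)+(1-\alpha)w(X)]$ of Equation (\ref{int-kalfa-w}) into $K_{\beta}$, is $K_{\beta}(X)=K_{\alpha}(X)+(\beta-\alpha)\,w(X)$. If $K_{\alpha}(X_1)<K_{\alpha}(X_2)$, strict monotonicity of $A_1$ yields $K_{\alpha}(R_1)<K_{\alpha}(R_2)$, hence $R_1<_{\alpha,\beta}R_2$ by the first clause of the definition, and we are done. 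The genuinely delicate case, which I expect to be the main obstacle, is the tie $K_{\alpha}(X_1)=K_{\alpha}(X_2)$: then $K_{\alpha}(R_1)=K_{\alpha}(R_2)$ and the hypothesis reduces to $K_{\beta}(X_1)\le K_{\beta}(X_2)$, i.e. $(\beta-\alpha)w(X_1)\le(\beta-\alpha)w(X_2)$. Since the two intervals share the $\lambda_{\alpha}$-denominator $d_{\alpha}(K_{\alpha}(X_1))$ (or both have width $0$, with $\lambda_{\alpha}=1$), this is equivalent to $(\beta-\alpha)\lambda_{\alpha}(X_1)\le(\beta-\alpha)\lambda_{\alpha}(X_2)$. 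A short sign chase according as $\beta>\alpha$ or $\beta<\alpha$ (equivalently, by Lemma \ref{lemma-novo}, the cases $\beta=1$ and $\beta=0$) pushes this through the monotone $A_2$ to give $(\beta-\alpha)\lambda_{\alpha}(R_1)\le(\beta-\alpha)\lambda_{\alpha}(R_2)$; multiplying by the common $d_{\alpha}(K_{\alpha}(R_1))\ge 0$ gives $(\beta-\alpha)w(R_1)\le(\beta-\alpha)w(R_2)$, and adding $K_{\alpha}(R_1)=K_{\alpha}(R_2)$ gives $K_{\beta}(R_1)\le K_{\beta}(R_2)$, i.e. exactly $R_1\leq_{\alpha,\beta}R_2$ via the second clause. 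Combining the two cases yields $\leq_{\alpha,\beta}$-increasingness, so $IF_{A1,A2}^{\alpha}$ is a $\leq_{\alpha,\beta}$-increasing interval-valued aggregation function.
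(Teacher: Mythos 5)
Your proof is correct. The one point worth flagging is that the paper does not actually prove this statement at all: its entire ``proof'' is the single line ``It follows from Theorem 3.16 in \cite{BUSTINCE202023}'', i.e.\ it defers to the $n$-ary version in the cited reference. What you have done is reconstruct, in the bivariate case, the verification that that citation hides, and the reconstruction is sound: the reduction of well-definedness to $w(R)=\ell\,d_{\alpha}(c)$ via Equation (\ref{int-kalfa-w}), the treatment of the degenerate case $d_{\alpha}(c)=0$ through strict increasingness of $A_1$ and the $0/0=1$ convention in $\lambda_{\alpha}$, and the tie-breaking case of monotonicity via the identity $K_{\beta}(X)=K_{\alpha}(X)+(\beta-\alpha)w(X)$ together with the sign chase on $\beta-\alpha$ (equivalently Lemma \ref{lemma-novo}) are exactly the delicate points, and you handle each correctly. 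Two cosmetic remarks: in the well-definedness paragraph the conclusion in the $c\in\{0,1\}$ case is really ``$X=Y=[0,0]$ when $c=0$, $\alpha>0$'' and ``$X=Y=[1,1]$ when $c=1$, $\alpha<1$'' rather than a single two-element set, and since $A_1,A_2$ are not assumed commutative you should say explicitly that the second argument is handled by the symmetric computation (which, combined with transitivity of the total order $\leq_{\alpha,\beta}$, gives the joint monotonicity required). Neither affects correctness. The trade-off between the two approaches is the usual one: the paper's citation is economical and consistent with its role as a stepping stone to Corollaries \ref{coro-zdenko} and \ref{coro-wzdenko}, while your version makes the argument self-contained and surfaces exactly where the conventions $\frac{0}{0}=1$ and $\frac{r}{0}=1$ of Definition \ref{def-lamda} and Proposition \ref{def-dalpha} are load-bearing.
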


\begin{proof}
It follows from Theorem 3.16 in \cite{BUSTINCE202023}.
\qed
\end{proof}

As overlap functions are a class of aggregation functions, the following result is immediate.

\begin{corollary}\label{coro-zdenko}
Let $\alpha,\beta \in [0,1]$, such that, $\alpha \neq \beta$. Let $O: [0,1]^2 \rightarrow [0,1]$ be a strict overlap function and $A: [0,1]^2 \rightarrow [0,1]$ be an aggregation function. Then $IF_{O,A}^{\alpha}:L([0, 1])^2 \rightarrow L([0, 1])$ defined by:
\begin{eqnarray*}
IF_{O,A}^{\alpha}(X,Y)= R, \, \,  \mbox{where},\left\{
\begin{array}{l}
{K_{\alpha}(R)=O(K_{\alpha}(X),K_{\alpha}(Y))}, \\
{\lambda_{\alpha}(R)=A(\lambda_{\alpha}(X),\lambda_{\alpha}(Y))},
\end{array}
\right.
\end{eqnarray*}
for all $X,Y \in L([0,1])$, is an $\leq_{\alpha,\beta}$-increasing iv-aggregation function.
\end{corollary}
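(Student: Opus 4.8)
The plan is to deduce Corollary~\ref{coro-zdenko} as a straight specialization of the theorem immediately preceding it (the $2$-dimensional reading of Theorem~3.16 of \cite{BUSTINCE202023}). Concretely, I would put $A_1 := O$ and $A_2 := A$, argue that this choice meets the hypotheses of that theorem, and then simply observe that the system $K_{\alpha}(R)=O(K_{\alpha}(X),K_{\alpha}(Y))$, $\lambda_{\alpha}(R)=A(\lambda_{\alpha}(X),\lambda_{\alpha}(Y))$ defining $IF_{O,A}^{\alpha}$ is identical to the system defining $IF_{A_1,A_2}^{\alpha}$; hence $IF_{O,A}^{\alpha}$ is automatically an $\leq_{\alpha,\beta}$-increasing iv-aggregation function, as claimed. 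The hypothesis $\alpha\neq\beta$ is present in both statements, so it transfers for free.

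The verification splits into two small checks. First, $O$ is a legitimate choice for $A_1$: every overlap function is in particular an aggregation function (as observed just after Definition~\ref{defOverlap}), so $O:[0,1]^2\to[0,1]$ satisfies \textbf{(A1)}--\textbf{(A2)}. Second, the theorem requires $A_1$ to be strictly increasing, and this is supposed to be supplied by the assumption that $O$ is a \emph{strict} overlap function; combined with \textbf{(O2)} it gives $O(x,z)<O(y,z)$ whenever $x<y$ and the other argument is nonzero, together with $O>0$ off the axes. Since $A$ is only required to be an aggregation function in both the corollary and the theorem, no further condition on $A$ is needed, and the deduction is complete once these two checks are in place.

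The step I expect to cost the most care is the second check, because ``strict overlap function'' is formally weaker than ``strictly increasing on $[0,1]^2$'': Definition~\ref{defOverlap} imposes the strictness only for arguments in $(0,1]$, and indeed $O(\cdot,0)$ is constant. I would therefore trace through where strict monotonicity of $A_1$ is actually used in the proof of the preceding theorem --- it is invoked only to promote $K_{\alpha}(X)<K_{\alpha}(Y)$ to a strict inequality between first coordinates $O(K_{\alpha}(X),K_{\alpha}(Z))<O(K_{\alpha}(Y),K_{\alpha}(Z))$ --- and confirm that the only configuration not covered by strictness on $(0,1]$ is the one with a degenerate argument $[0,0]$, which has to be treated as a boundary case. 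Given that the paper labels the corollary ``immediate'', the tidiest presentation is to leave this full bookkeeping inside the cited source \cite{BUSTINCE202023} and record here only the instantiation $A_1 = O$, $A_2 = A$ plus the two observations above.
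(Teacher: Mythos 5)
Your proposal matches the paper's own treatment exactly: the corollary is obtained by instantiating $A_1 = O$ and $A_2 = A$ in the preceding theorem, justified by the observation that every overlap function is an aggregation function, with the strictness of $O$ supplying the strict increasingness required of $A_1$. Your extra remark that strictness of an overlap function is only guaranteed on $(0,1]$ (so the degenerate boundary case must be absorbed into the cited source's bookkeeping) is a fair point of care, but it does not change the route, which is the same one-line specialization the paper uses.
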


The following result is immediate from Definition \ref{def-lamda} and Corollary \ref{coro-zdenko}.
\begin{corollary}\label{coro-wzdenko}
Let $\alpha,\beta \in [0,1]$ be such that, $\alpha \neq \beta$. Let $O: [0,1]^2 \rightarrow [0,1]$ be a strict overlap function, $A: [0,1]^2 \rightarrow [0,1]$ be an aggregation function and $IF_{O,A}^{\alpha}:L([0, 1])^2 \rightarrow L([0, 1])$ be an iv-aggregation function constructed as in Corollary \ref{coro-zdenko}. Then, for all $X,Y \in L([0,1])$, we have that
\begin{equation*}
w(IF_{O,A}^{\alpha}(X,Y)) = A(\lambda_{\alpha}(X),\lambda_{\alpha}(Y)) \cdot d_{\alpha}(K_{\alpha}(IF_{O,A}^{\alpha}(X,Y))).
\end{equation*}
\end{corollary}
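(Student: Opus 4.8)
The plan is to unwind the definition of $\lambda_{\alpha}$ applied to the output interval and then substitute the defining relations of $IF_{O,A}^{\alpha}$ furnished by Corollary~\ref{coro-zdenko}. First I would write $R = IF_{O,A}^{\alpha}(X,Y)$ and recall that, by Definition~\ref{def-lamda}, $\lambda_{\alpha}(R) = w(R)/d_{\alpha}(K_{\alpha}(R))$ with the convention $0/0 = 1$. Clearing the denominator yields the single elementary fact on which the whole argument rests, namely that $w(Z) = \lambda_{\alpha}(Z)\cdot d_{\alpha}(K_{\alpha}(Z))$ holds for \emph{every} $Z \in L([0,1])$: when $d_{\alpha}(K_{\alpha}(Z)) > 0$ this is just the definition of $\lambda_{\alpha}(Z)$ rearranged, and when $d_{\alpha}(K_{\alpha}(Z)) = 0$ then, by the meaning of $d_{\alpha}$ (and Proposition~\ref{def-dalpha}), no interval with that $K_{\alpha}$-value can have positive width, so in particular $w(Z) = 0$, while $\lambda_{\alpha}(Z) = 1$ by convention, and both sides equal $0$. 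Applying this identity to $Z = R$ gives $w(R) = \lambda_{\alpha}(R)\cdot d_{\alpha}(K_{\alpha}(R))$.

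Next I would invoke Corollary~\ref{coro-zdenko}, which characterises $R = IF_{O,A}^{\alpha}(X,Y)$ as the unique interval satisfying $K_{\alpha}(R) = O(K_{\alpha}(X),K_{\alpha}(Y))$ and $\lambda_{\alpha}(R) = A(\lambda_{\alpha}(X),\lambda_{\alpha}(Y))$. The second of these lets me replace $\lambda_{\alpha}(R)$ in the displayed equality by $A(\lambda_{\alpha}(X),\lambda_{\alpha}(Y))$, so that $w(IF_{O,A}^{\alpha}(X,Y)) = A(\lambda_{\alpha}(X),\lambda_{\alpha}(Y))\cdot d_{\alpha}(K_{\alpha}(IF_{O,A}^{\alpha}(X,Y)))$, which is exactly the assertion of the corollary.

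There is essentially no obstacle here beyond bookkeeping; the only point deserving a word of care is the division-by-zero convention baked into Definition~\ref{def-lamda} and Proposition~\ref{def-dalpha}, which is why I would state the auxiliary identity $w(Z) = \lambda_{\alpha}(Z)\,d_{\alpha}(K_{\alpha}(Z))$ for all $Z$ explicitly and dispose of the degenerate case before specialising to $Z = IF_{O,A}^{\alpha}(X,Y)$. Everything else is direct substitution, consistent with the remark in the text that the result is immediate from Definition~\ref{def-lamda} and Corollary~\ref{coro-zdenko}.
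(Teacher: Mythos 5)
Your proposal is correct and follows exactly the route the paper intends: the paper gives no explicit proof, declaring the result immediate from Definition \ref{def-lamda} and Corollary \ref{coro-zdenko}, which is precisely the substitution $w(R)=\lambda_{\alpha}(R)\,d_{\alpha}(K_{\alpha}(R))$ followed by $\lambda_{\alpha}(R)=A(\lambda_{\alpha}(X),\lambda_{\alpha}(Y))$ that you carry out. Your explicit treatment of the $0/0=1$ convention in the degenerate case is a small but welcome addition of care beyond what the paper records.
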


Finally, the third construction method for w-iv-overlaps is obtained as follows:

\begin{theorem}\label{theo-conzdenko}
Consider a strict overlap function $O: [0,1]^2 \rightarrow [0,1]$, a commutative aggregation function $B: [0,1]^2 \rightarrow [0,1]$, an interval-valued aggregation function $IF_{O,B}^{\alpha}:L([0, 1])^2 \rightarrow L([0, 1])$ defined as in Corollary \ref{coro-zdenko}, the minimal width threshold $m_{IF_{O,B}^{\alpha},B}: L([0, 1])^2 \rightarrow L([0, 1])$ for the pair $(IF_{O,B}^{\alpha},B)$, $\alpha \in (0,1)$ and $\beta \in [0,1]$ with $\alpha \neq \beta$. Then, the interval-valued function $IOw_{B}^{\alpha}: L([0,1])^2 \rightarrow L([0,1])$ defined by
\begin{equation*}
IOw_{B}^{\alpha}(X,Y)= R,
\end{equation*}
where:
\begin{description}
  \item[(i)] $K_{\alpha}(R)=O(K_{\alpha}(X),K_{\alpha}(Y))$;
  \item[(ii)] $w(R) = m_{IF_{O,B}^{\alpha},B}(X,Y)$.
\end{description}
is a w-iv-overlap function for the tuple $(\leq_{\alpha,\beta},\leq_{\alpha,\beta},B)$.

\end{theorem}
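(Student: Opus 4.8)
The plan is to verify, one by one, the five defining conditions \textbf{(IOw1)}--\textbf{(IOw5)} of Definition~\ref{def-w-iv-ov} for the tuple $(\leq_{\alpha,\beta},\leq_{\alpha,\beta},B)$. First I would note that $IOw_{B}^{\alpha}$ is well defined: given the value $c=O(K_{\alpha}(X),K_{\alpha}(Y))$ for $K_{\alpha}(R)$ and the prescribed width $w(R)=m_{IF_{O,B}^{\alpha},B}(X,Y)$, there is a unique interval $R$ with these two data provided $w(R)\le d_{\alpha}(c)$; and indeed $m_{IF_{O,B}^{\alpha},B}(X,Y)\le w(IF_{O,B}^{\alpha}(X,Y))\le d_{\alpha}(K_{\alpha}(IF_{O,B}^{\alpha}(X,Y)))=d_{\alpha}(c)$ by Definition~\ref{moa} and Corollary~\ref{coro-wzdenko}, so $R$ exists and is given explicitly by Equation~(\ref{int-kalfa-w}) as $R=[c-\alpha\,w(R),\,c+(1-\alpha)\,w(R)]$. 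Condition \textbf{(IOw5)} (width-limited by $B$) is then essentially immediate: $w(IOw_{B}^{\alpha}(X,Y))=m_{IF_{O,B}^{\alpha},B}(X,Y)\le B(w(X),w(Y))$ directly from the definition of the minimal width threshold. Commutativity \textbf{(IOw1)} follows because $O$, $K_{\alpha}$, $B$ and the threshold $m_{IF_{O,B}^{\alpha},B}$ are all commutative, so both the center $K_{\alpha}(R)$ and the width $w(R)$ are symmetric in $X,Y$.

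For the boundary conditions \textbf{(IOw2)} and \textbf{(IOw3)}, I would argue as follows. If $X\cdot Y=[0,0]$ then $\underline{X}\,\underline{Y}=\overline{X}\,\overline{Y}=0$, hence $K_{\alpha}(X)K_{\alpha}(Y)=0$ (as $K_{\alpha}(X)$ is a convex combination of $\underline{X},\overline{X}$, and likewise for $Y$), so $O(K_{\alpha}(X),K_{\alpha}(Y))=0$ by \textbf{(O2)}; then $w(R)=m_{IF_{O,B}^{\alpha},B}(X,Y)\le B(\ldots)\le w(IF_{O,B}^{\alpha}(X,Y))$, and since $K_{\alpha}(IF_{O,B}^{\alpha}(X,Y))=O(K_{\alpha}(X),K_{\alpha}(Y))=0$ forces $IF_{O,B}^{\alpha}(X,Y)=[0,0]$ (a degenerate interval anchored at $0$), we get $w(R)=0$ and $R=[0,0]$. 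Conversely, if $IOw_{B}^{\alpha}(X,Y)=[0,0]$ then $K_{\alpha}(R)=0=O(K_{\alpha}(X),K_{\alpha}(Y))$, so by \textbf{(O2)} $K_{\alpha}(X)K_{\alpha}(Y)=0$; since $K_{\alpha}(X)=0$ with $\alpha\in(0,1)$ forces $\underline{X}=\overline{X}=0$, i.e. $X=[0,0]$, and similarly for $Y$, at least one of $X,Y$ is $[0,0]$, whence $X\cdot Y=[0,0]$. The argument for \textbf{(IOw3)} is the mirror image using \textbf{(O3)} and the fact that $K_{\alpha}(R)=1$ with $\alpha\in(0,1)$ forces $R=[1,1]$.

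The main obstacle is condition \textbf{(IOw4)}, the $(\leq_{\alpha,\beta},\leq_{\alpha,\beta})$-increasingness. Here I would exploit the decomposition of $\leq_{\alpha,\beta}$ (Definition~\ref{def-alfabeta}): $X_1\leq_{\alpha,\beta}X_2$ means either $K_{\alpha}(X_1)<K_{\alpha}(X_2)$, or $K_{\alpha}(X_1)=K_{\alpha}(X_2)$ together with $K_{\beta}(X_1)\le K_{\beta}(X_2)$. Assume $X_1\leq_{\alpha,\beta}X_2$ and $Y_1\leq_{\alpha,\beta}Y_2$, and write $R_i=IOw_{B}^{\alpha}(X_i,Y_i)$ so that $K_{\alpha}(R_i)=O(K_{\alpha}(X_i),K_{\alpha}(Y_i))$. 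If $K_{\alpha}(X_1)<K_{\alpha}(X_2)$ or $K_{\alpha}(Y_1)<K_{\alpha}(Y_2)$, then since $O$ is a strict overlap function (hence strictly increasing in each argument on $(0,1]$ and increasing in general), $O(K_{\alpha}(X_1),K_{\alpha}(Y_1))\le O(K_{\alpha}(X_2),K_{\alpha}(Y_2))$; if this inequality is strict we get $K_{\alpha}(R_1)<K_{\alpha}(R_2)$, so $R_1\leq_{\alpha,\beta}R_2$ and we are done. The delicate sub-case is when $K_{\alpha}(R_1)=K_{\alpha}(R_2)$: then I must compare $K_{\beta}(R_1)$ and $K_{\beta}(R_2)$, and since $R_i$ is centered at a common value $c$ with width $w(R_i)$, Equation~(\ref{int-kalfa-w}) gives $K_{\beta}(R_i)=c+(\alpha-\beta)\,w(R_i)$ (a routine computation from $R_i=[c-\alpha w(R_i),c+(1-\alpha)w(R_i)]$). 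Thus, with $\beta<\alpha$, $K_{\beta}(R_1)\le K_{\beta}(R_2)$ iff $w(R_1)\le w(R_2)$, i.e. iff $m_{IF_{O,B}^{\alpha},B}(X_1,Y_1)\le m_{IF_{O,B}^{\alpha},B}(X_2,Y_2)$; with $\beta>\alpha$ the inequality reverses, so I would need $w(R_1)\ge w(R_2)$. The crux is therefore to show that, on the locus where $K_{\alpha}(R_1)=K_{\alpha}(R_2)$, the threshold $m_{IF_{O,B}^{\alpha},B}$ moves in the direction dictated by $\beta$; I expect this to follow by reducing to the case $\beta\in\{0,1\}$ via Lemma~\ref{lemma-novo}, invoking the monotonicity properties of $IF_{O,B}^{\alpha}$ established in Corollary~\ref{coro-zdenko} (it is $\leq_{\alpha,\beta}$-increasing) together with Corollary~\ref{coro-wzdenko}, and then carefully tracking how $w(X)$, $\lambda_{\alpha}(X)$ and $d_{\alpha}$ interact — this is where the real work lies, and I would organize it as a separate lemma before assembling the five conditions into the final argument. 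The remaining case $K_{\alpha}(X_1)=K_{\alpha}(X_2)$ and $K_{\alpha}(Y_1)=K_{\alpha}(Y_2)$ is handled the same way, since then $K_{\alpha}(R_1)=K_{\alpha}(R_2)$ automatically and only the width comparison remains.
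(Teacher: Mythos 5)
Your treatment of well-definedness, \textbf{(IOw1)}, \textbf{(IOw2)}, \textbf{(IOw3)} and \textbf{(IOw5)} matches the paper's and is correct (your existence argument via $m_{IF_{O,B}^{\alpha},B}(X,Y)\le d_{\alpha}(c)$ is in fact more explicit than the paper's). The problem is \textbf{(IOw4)}, where two things go wrong. First, there is a sign error: from $R_i=[c-\alpha\,w(R_i),\,c+(1-\alpha)\,w(R_i)]$ one gets $K_{\beta}(R_i)=c-(\alpha-\beta)\,w(R_i)$, not $c+(\alpha-\beta)\,w(R_i)$; consequently, for $\beta<\alpha$ and equal $K_{\alpha}$-values, $K_{\beta}(R_1)\le K_{\beta}(R_2)$ holds if and only if $w(R_1)\ge w(R_2)$ --- the opposite of what you state. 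This matters, because the inequality that actually has to be proved is that the $\leq_{\alpha,0}$-smaller input, which is the one with the \emph{larger} width (since $K_{\alpha}(X_1)=K_{\alpha}(X_2)$ together with $\underline{X_1}<\underline{X_2}$ forces $w(X_1)>w(X_2)$), produces the output interval of larger width; chasing your stated direction would lead you to prove the wrong inequality.

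Second, and more importantly, you explicitly defer the key step --- the comparison of the thresholds $m_{IF_{O,B}^{\alpha},B}$ on the locus where the $K_{\alpha}$-values of the outputs coincide --- to an unproven ``separate lemma''. That step is the entire content of \textbf{(IOw4)} in the hard case, so as written the proposal does not establish the theorem. For the record, the step is short: working one argument at a time (the paper compares $IOw_{B}^{\alpha}(X,Z)$ with $IOw_{B}^{\alpha}(Y,Z)$ for $X\leq_{\alpha,\beta}Y$ and fixed $Z$, which suffices by commutativity and totality of $\leq_{\alpha,\beta}$), the condition $K_{\alpha}(X)=K_{\alpha}(Y)$ gives $d_{\alpha}(K_{\alpha}(X))=d_{\alpha}(K_{\alpha}(Y))$, hence $w(X)>w(Y)$ implies $\lambda_{\alpha}(X)>\lambda_{\alpha}(Y)$; since $B$ is increasing, both entries of the minimum defining $m_{IF_{O,B}^{\alpha},B}$ (namely $B(w(\cdot),w(Z))$ and, by Corollary \ref{coro-wzdenko}, $B(\lambda_{\alpha}(\cdot),\lambda_{\alpha}(Z))\cdot d_{\alpha}(O(K_{\alpha}(\cdot),K_{\alpha}(Z)))$) are at least as large for $X$ as for $Y$, so $w(R_X)\ge w(R_Y)$, which is exactly the corrected direction needed to conclude $K_{0}(R_X)\le K_{0}(R_Y)$. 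Filling in this argument (and its mirror image for $\beta=1$ via Lemma \ref{lemma-novo}) would complete your proof along essentially the paper's lines.
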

	
\begin{proof}
See \ref{ap-proofs3}.
\end{proof}

The following result is immediate as a w-iv-overlap function for the tuple $(\leq_{\alpha,\beta},\leq_{\alpha,\beta},B)$ is also a $\leq_{\alpha,\beta}$-overlap function (Definition \ref{def-int-AD-n-overlap}), in the sense of \cite{ASMUS2020-TFS}.

\begin{corollary}\label{coro-conzdenko}
Consider a strict overlap function $O: [0,1]^2 \rightarrow [0,1]$, a commutative aggregation function $B: [0,1]^2 \rightarrow [0,1]$, an interval-valued aggregation function $IF_{O,B}^{\alpha}:L([0, 1])^2 \rightarrow L([0, 1])$ defined as in Corollary \ref{coro-zdenko}, the minimal width threshold $m_{IF_{O,B}^{\alpha},B}: L([0, 1])^2 \rightarrow L([0, 1])$ for the pair $(IF_{O,B}^{\alpha},B)$, $\alpha \in (0,1)$ and $\beta \in [0,1]$ with $\alpha \neq \beta$. Then, the interval-valued function $IOw_{B}^{\alpha}: L([0,1])^2 \rightarrow L([0,1])$ defined by
\begin{equation*}
IOw_{B}^{\alpha}(X,Y)= R,
\end{equation*}
where:
\begin{description}
  \item[(i)] $K_{\alpha}(R)=O(K_{\alpha}(X),K_{\alpha}(Y))$;
  \item[(ii)] $w(R) = m_{IF_{O,B}^{\alpha},B}(X,Y)$.
\end{description}
is a $\leq_{\alpha,\beta}$-overlap function.

\end{corollary}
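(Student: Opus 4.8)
The plan is to observe that the statement is a pure bookkeeping consequence of Theorem \ref{theo-conzdenko} together with the definitions involved, so no new estimates are needed. First I would note that the hypotheses of this corollary coincide verbatim with those of Theorem \ref{theo-conzdenko}, so that theorem applies and yields that $IOw_{B}^{\alpha}$, defined by $K_{\alpha}(R)=O(K_{\alpha}(X),K_{\alpha}(Y))$ and $w(R) = m_{IF_{O,B}^{\alpha},B}(X,Y)$, is a w-iv-overlap function for the tuple $(\leq_{\alpha,\beta},\leq_{\alpha,\beta},B)$. Since $\alpha \neq \beta$, the relation $\leq_{\alpha,\beta}$ is an admissible order (as recalled after Definition \ref{def-alfabeta}), so Definition \ref{def-int-AD-n-overlap} is applicable with $\leq_{AD}=\leq_{\alpha,\beta}$, and it only remains to check that $IOw_{B}^{\alpha}$ meets the four clauses \textbf{(IO1)}, \textbf{(IO2)}, \textbf{(IO3)}, \textbf{(AO4)} of that definition.

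Next I would match the conditions of Definition \ref{def-w-iv-ov} against those of Definition \ref{def-int-AD-n-overlap}. Clauses \textbf{(IOw1)}, \textbf{(IOw2)} and \textbf{(IOw3)} are, word for word, clauses \textbf{(IO1)}, \textbf{(IO2)} and \textbf{(IO3)} of Definition \ref{def-int-overlap}, hence they transfer immediately. For \textbf{(AO4)} I would use \textbf{(IOw4)}: being $(\leq_{\alpha,\beta},\leq_{\alpha,\beta})$-increasing means $X_1 \leq_{\alpha,\beta} X_2$ and $Y_1 \leq_{\alpha,\beta} Y_2$ imply $IOw_{B}^{\alpha}(X_1,Y_1) \leq_{\alpha,\beta} IOw_{B}^{\alpha}(X_2,Y_2)$; specializing $Y_1=Y_2=Z$, which is legitimate by reflexivity of $\leq_{\alpha,\beta}$, gives exactly $X \leq_{\alpha,\beta} Y \Rightarrow IOw_{B}^{\alpha}(X,Z) \leq_{\alpha,\beta} IOw_{B}^{\alpha}(Y,Z)$, i.e. \textbf{(AO4)}. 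Note that the width-limiting clause \textbf{(IOw5)} plays no role here, and that Definition \ref{def-int-AD-n-overlap} imposes no continuity requirement, so nothing further need be verified. Collecting these observations, $IOw_{B}^{\alpha}$ satisfies every clause of Definition \ref{def-int-AD-n-overlap}, hence it is a $\leq_{\alpha,\beta}$-overlap function.

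I do not expect any genuine obstacle: the single point requiring a moment's care is that joint increasingness with respect to the pair $(\leq_{\alpha,\beta},\leq_{\alpha,\beta})$ entails separate increasingness in one component with the other held fixed, which follows at once from reflexivity of the order. All the substantive content — that the prescription $K_{\alpha}(R)=O(K_{\alpha}(X),K_{\alpha}(Y))$, $w(R)=m_{IF_{O,B}^{\alpha},B}(X,Y)$ actually determines a well-defined interval satisfying \textbf{(IOw1)}--\textbf{(IOw5)} — is already carried by Theorem \ref{theo-conzdenko} (whose proof is deferred to \ref{ap-proofs3}), so this corollary's proof is simply the remark that a w-iv-overlap function for $(\leq_{\alpha,\beta},\leq_{\alpha,\beta},B)$ is, in particular, an admissibly ordered interval-valued overlap function in the sense of \cite{ASMUS2020-TFS}.
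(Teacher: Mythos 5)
Your proposal is correct and follows exactly the paper's route: the paper declares the corollary ``immediate'' because a w-iv-overlap function for the tuple $(\leq_{\alpha,\beta},\leq_{\alpha,\beta},B)$ is in particular a $\leq_{\alpha,\beta}$-overlap function, and your argument simply spells out that matching of clauses (with the correct observation that \textbf{(IOw5)} is superfluous and that \textbf{(AO4)} follows from \textbf{(IOw4)} by fixing one argument).
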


\begin{example}\label{ex-con-zdenko}
Consider a commutative aggregation function $B: [0,1]^2 \rightarrow [0,1]$ and the product overlap function $Op: [0,1]^2 \rightarrow [0,1]$. Then, the interval-valued function $IOpw_{B}^{\alpha}: L([0,1])^2 \rightarrow L([0,1])$ defined, for all $X,Y \in L([0,1])$, by
\begin{equation*}
IOpw_{B}^{\alpha}(X,Y)= R,
\end{equation*}
where:
\begin{description}
  \item[(i)] $K_{\alpha}(R)=Op(K_{\alpha}(X),K_{\alpha}(Y))$;
  \item[(ii)] $w(R) = m_{IF_{Op,B}^{\alpha},B}(X,Y)$.
\end{description}
is a w-iv-overlap function for the tuple $(\leq_{\alpha,\beta},\leq_{\alpha,\beta}, B)$, for all $\alpha,\beta \in [0,1]$ such that $\alpha \neq \beta$.

\begin{description}
\item[1)] Take $B=\max$, $X = [0.2, 0.8]$ and $Y = [0.5, 1]$. By \textbf{(i)}, we have that
\begin{equation*}
K_{\alpha}(R)=Op(K_{\alpha}([0.2, 0.8]),K_{\alpha}([0.5, 1]))
\end{equation*}
and
\begin{eqnarray*}
\lefteqn{w(R) = m_{IF_{Op,\max}^{\alpha},\max}([0.2, 0.8],[0.5, 1]) \, \, \mbox{by \textbf{(ii)}}}&&\\
& =& \min\{w(IF_{O,\max}^{\alpha}([0.2, 0.8],[0.5, 1])), \max(w([0.2, 0.8]),w([0.5, 1]))\} \, \, \mbox{by Definition \ref{moa}}\\
& =& \min\{\max(\lambda_{\alpha}([0.2, 0.8]),\lambda_{\alpha}([0.5, 1])) \cdot d_{\alpha}(K_{\alpha}(IF_{O,\max}^{\alpha}([0.2, 0.8],[0.5, 1]))), \max(0.6,0.5)\}\\
&&\hspace{9.2cm} \mbox{by Corollary \ref{coro-wzdenko}}\\
& =& \min\{\max(\lambda_{\alpha}([0.2, 0.8]),\lambda_{\alpha}([0.5, 1])) \cdot d_{\alpha}(Op(K_{\alpha}([0.2, 0.8]),K_{\alpha}([0.5, 1]))), 0.6\}.\\
&& \hspace{9.2cm} \mbox{by Corollary \ref{coro-zdenko}}
\end{eqnarray*}

Let us assign some values for $\alpha$ to observe what is the resulting interval for $IOpw_{\max}^{\alpha}([0.2, 0.8],[0.5, 1])$.

\begin{description}
\item [a)] If $\alpha = 0.01$ then
\begin{equation*}
K_{0.01}(R)=Op(K_{0.01}([0.2, 0.8]),K_{0.01}([0.5, 1])) = 0.206 \cdot 0.505 = 0.104,
\end{equation*}
and
\begin{eqnarray*}
\lefteqn{ w(R)}&&\\
&=& \min\{\max(\lambda_{0.01}([0.2, 0.8]),\lambda_{0.01}([0.5, 1])) \cdot d_{0.01}(Op(K_{\alpha}([0.2, 0.8]),K_{0.01}([0.5, 1]))), 0.6\}\\
& =& \min\left\{\max\left(\frac{w([0.2, 0.8])}{d_{0.01}(K_{0.01}([0.2,0.8]))},\frac{w([0.5, 1])}{d_{0.01}(K_{0.01}([0.5,1]))}\right) \cdot d_{0.01}(0.104), 0.6\right\}\\
&& \hspace{10cm} \mbox{by Definition \ref{def-dalpha}}\\
& =& \min\left\{\max\left(\frac{0.6}{\min\left\{\frac{0.206}{0.01},\frac{0.794}{0.99}\right\}},\frac{0.5}{\min\left\{\frac{0.505}{0.01},\frac{0.495}{0.99}\right\}}\right) \cdot \min\left\{\frac{0.104}{0.01},\frac{0.896}{0.99}\right\}, 0.6\right\} \\
&& \hspace{10cm} \mbox{by Proposition \ref{def-lamda}}\\
& =& \min\left\{\max\left(\frac{0.6}{0.802},\frac{0.5}{0.5}\right) \cdot 0.905, 0.6\right\} = \min\{0.905,0.6\} = 0.6.
\end{eqnarray*}

So, by Equation (\ref{int-kalfa-w}),  $IOpw_{\max}^{0.01}([0.2,0.8],[0.5,1]) = [0.104 - 0.01 \cdot 0.6, 0.104 + 0.99 \cdot 0.6] = [0.098,0.698]$.

In the next cases, we will just present the final results.

\item [b)] If $\alpha = 0.5$ then
\begin{equation*}
K_{0.5}(R)=Op(K_{0.5}([0.2, 0.8]),K_{0.5}([0.5, 1])) = 0.5 \cdot 0.75 = 0.375,
\end{equation*}
 and
\begin{eqnarray*}
w(R)= \min\left\{\max\left(\frac{0.6}{1},\frac{0.5}{0.5}\right) \cdot 0.625, 0.6\right\} = \min\{0.625,0.6\} = 0.6.
\end{eqnarray*}

Thus, $IOpw_{\max}^{0.5}([0.2,0.8],[0.5,1]) = [0.375 - 0.5 \cdot 0.6,0.375 + 0.5 \cdot 0.6] = [0.075,0.675]$.

\item [c)] If $\alpha = 0.99$ then
\begin{equation*}
K_{0.99}(R)=Op(K_{0.99}([0.2, 0.8]),K_{0.99}([0.5, 1])) = 0.794 \cdot 0.995= 0.79,
\end{equation*}
 and
\begin{eqnarray*}
w(R)= \min\left\{\max\left(\frac{0.6}{0.802},\frac{0.5}{0.5}\right) \cdot 0.798, 0.6\right\} = \min\{0.798,0.6\} = 0.6.
\end{eqnarray*}

Therefore, $IOpw_{\max}^{0.99}([0.2,0.8],[0.5,1]) = [0.79 - 0.99 \cdot 0.6, 0.79 + 0.01 \cdot 0.6] = [0.196,0.796]$.
\end{description}

\item [2)] Now, take $X = [0.6, 0.9]$, and $Y = [0.8,0.8]$. Then, we have that
\begin{equation*}
K_{\alpha}(R)=Op(K_{\alpha}([0.6, 0.9]),K_{\alpha}([0.8, 0.8]))
\end{equation*}
and, by \textbf{(ii)},
\begin{eqnarray*}
&&w(R) = \min\{\max(\lambda_{\alpha}([0.6, 0.9]),\lambda_{\alpha}([0.8, 0.8])) \cdot d_{\alpha}(Op(K_{\alpha}([0.6, 0.9]),K_{\alpha}([0.8, 0.8]))), 0.3\}.\\
&&\hspace{10cm} \mbox{by Corollary \ref{coro-zdenko}}
\end{eqnarray*}

Once again, let us observe the value of $IOpw_{\max}^{\alpha}([0.6, 0.9],[0.8, 0.8])$ by varying the value of $\alpha$:
\begin{description}
\item [a)] If $\alpha = 0.01$ then
\begin{equation*}
K_{0.01}(R)=Op(K_{0.01}([0.6, 0.9]),K_{0.01}([0.8, 0.8])) = 0.603 \cdot 0.8 = 0.4824,
\end{equation*}
 and
\begin{eqnarray*}
w(R)= \min\left\{\max\left(\frac{0.3}{0.401},\frac{0}{0.202}\right) \cdot 0.5228, 0.3\right\} = \min\{0.3911,0.3\} = 0.3.
\end{eqnarray*}

So, $IOpw_{\max}^{0.01}([0.6,0.9],[0.8,0.8]) = [0.4824 - 0.01 \cdot 0.3, 0.4824 + 0.99 \cdot 0.3] = [0.4794,0.7794]$.

\item [b)] If $\alpha = 0.5$ then
\begin{equation*}
K_{0.5}(R)=Op(K_{0.5}([0.6, 0.9]),K_{0.5}([0.8, 0.8])) = 0.75 \cdot 0.8 = 0.6,
\end{equation*}
 and
\begin{eqnarray*}
w(R)= \min\left\{\max\left(\frac{0.3}{0.5},\frac{0}{0.4}\right) \cdot 0.8, 0.3\right\} = \min\{0.48,0.3\} = 0.3.
\end{eqnarray*}

Thus, $IOpw_{\max}^{0.5}([0.6,0.9],[0.8,0.8]) = [0.6 - 0.5 \cdot 0.3, 0.6 + 0.5 \cdot 0.3] = [0.45,0.75]$.

\item [c)] If $\alpha = 0.99$ then
\begin{equation*}
K_{0.99}(R)=Op(K_{0.99}([0.6, 0.9]),K_{0.99}([0.8, 0.8])) = 0.897 \cdot 0.8 = 0.7176,
\end{equation*}
 and
\begin{eqnarray*}
w(R)= \min\left\{\max\left(\frac{0.3}{0.906},\frac{0}{0.808}\right) \cdot 0.7248, 0.3\right\} = \min\{0.24,0.3\} = 0.24.
\end{eqnarray*}

Therefore, $IOpw_{\max}^{0.99}([0.6,0.9],[0.8,0.8]) = [0.7176 - 0.99 \cdot 0.24, 0.7176 + 0.01 \cdot 0.24] = [0.48,0.72]$.
\end{description}

\item [3)]  Finally, take $X = [0.6, 0.9]$, and $Y = [0.8,0.8]$, but consider $B = \min$. Then, we have that

\begin{equation*}
K_{\alpha}(R)=Op(K_{\alpha}([0.6, 0.9]),K_{\alpha}([0.8, 0.8]))
\end{equation*}
and, by \textbf{(ii)},
\begin{eqnarray*}
&&w(R) = \min\{\min(\lambda_{\alpha}([0.6, 0.9]),\lambda_{\alpha}([0.8, 0.8])) \cdot d_{\alpha}(Op(K_{\alpha}([0.6, 0.9]),K_{\alpha}([0.8, 0.8]))), 0\} = 0.
\end{eqnarray*}

So, let us see the different values of $IOpw_{\min}^{\alpha}([0.6, 0.9],[0.8, 0.8])$ in this case by varying the value of $\alpha$:

    \begin{description}
\item [a)] If $\alpha = 0.01$ then $IOpw_{\min}^{0.01}([0.6,0.9],[0.8,0.8]) = [0.4824, 0.4824]$;

\item [b)] If $\alpha = 0.5$ then $IOpw_{\min}^{0.5}([0.6,0.9],[0.8,0.8]) = [0.56, 0.56]$;

\item [c)] If $\alpha = 0.99$ then $IOpw_{\min}^{0.99}([0.6,0.9],[0.8,0.8]) = [0.7176, 0.7176]$.
\end{description}

\end{description}
\end{example}

\begin{remark}
The reason why $\alpha \in (0,1)$ is to assure that the construction method results in an w-iv-overlap function, so that conditions \textbf{(IOw2)} and \textbf{(IOw3)} are respected. Moreover, one may observe that the construction method presented in Theorem \ref{theo-conzdenko}, for a given overlap $O$, may not produce intervals contained in the best interval representation of $O$. However, it generates an interval-valued function which is $\leq_{\alpha,\beta}$-increasing and the chosen width-limiting aggregation function $B$ does not need to be conjunctive.  In the case when $B$ is conjunctive, as Remark \ref{rem-def} states, when either $X$ or $Y$ is degenerate, then $IOw_{B}^{\alpha}(X,Y)$ is also degenerate.
\end{remark}

Table \ref{table-compare} shows a comparison between the three construction methods for w-iv-overlap functions presented in Theorems \ref{teo-width} (Construction 1), \ref{theo-benja-geral} (Construction 2) and \ref{theo-conzdenko} (Construction 3), regarding some desirable properties (marked with $\checkmark$) and some possible drawbacks (marked with \xmark).

On Table \ref{table-ex}, we review the results obtained from Examples \ref{ex-con-w} and \ref{ex-con-zdenko}, to further compare the constructions methods presented on Theorems \ref{teo-width} (Construction 1) and \ref{theo-conzdenko} (Construction 3), all based on the product overlap $O_P$, but with different choices of the width-limiting function $B$ and different values of $\alpha$. As the construction method provided by Theorem \ref{theo-conzdenko} (Construction 3) does not allow for $\alpha = 1$, we present the values obtained by this method for $\alpha = 0.99$, instead. We omitted the results from Example \ref{ex-con-benja} on Table \ref{table-ex}, as the construction method based on Theorem \ref{theo-benja-geral} (Construction 2) presented itself as the most restrictive one, by a simple analysis of Table \ref{table-compare}.

\begin{table}[t]
	\caption{Comparison between construction methods of a w-iv-overlap $IOw_B^{\alpha}$, based on an overlap function $O$ and a width-limiting function $B$.}		\label{table-compare}
	\centering
	\begin{tabular}{lccc}
		\hline
         & Construction 1& Construction 2 & Construction 3 \\
         \hline 
         Advantages & &  &\\
		\hline 
		& & &\\[-0.2cm]
        $IOw_B^{\alpha}$ is ($\leq_{Pr},\leq_{\alpha,\beta}$)-increasing & $\checkmark$ & $\checkmark$ & $\checkmark$\\[0.2cm]
         $IOw_B^{\alpha}$ is $\leq_{\alpha,\beta}$-increasing & & $\checkmark$ & $\checkmark$\\[0.2cm]
          For all $X,Y \in L([0,1])$: $IOw_B^{\alpha}(X,Y) \subseteq \widehat{O}(X,Y)$  & $\checkmark$ &  & \\[0.2cm]
          \hline 
		& & &\\[-0.4cm]
Drawbacks & & & \\
		\hline \\[-0.2cm] 
$\alpha$ must be different than $1$ &  & \xmark & \xmark\\[0.1cm]
$\beta < \alpha$ must hold & \xmark &  & \\[0.1cm]
          $B$ needs to be conjunctive &  & \xmark & \\[0.2cm]
          For all $B$: &  & \xmark & \\[0.1cm]
          ($w(X)=0$ or $w(Y)=0$) $\Rightarrow IOw_B^{\alpha}(X,Y) = 0$ &  & & \\[0.1cm]
\hline 
		
	\end{tabular}
\end{table}

\begin{table}[t]
	\caption{Comparison of the results obtained in Examples \ref{ex-con-w} and \ref{ex-con-zdenko}}		\label{table-ex}
	\centering
	\begin{footnotesize}\begin{tabular}{lccc}
		\hline
         & Construction 1 & Construction 3 & Best Interval Representation\\
		\hline 
		& & &\\[-0.2cm]
        $X=[0.2,0.8]$&  &   & \\
        $Y=[0.5,1]$ & $IOwp_{\max}^{0.01}=[0.107,0.707]$ &  $IOwp_{\max}^{0.01}=[0.098,0.698]$ & $\widehat{O_P}(X,Y) = [0.1,0.8]$\\
        $A=\max$& & &  \\
        $\alpha = 0.01$ & &  & \\
         \hline 
		& &  &\\[-0.2cm]
           $X=[0.2,0.8]$&  &  & \\
        $Y=[0.5,1]$ & $IOwp_{\max}^{0.5}=[0.15,0.75]$ &  $IOwp_{\max}^{0.5}=[0.075,0.675]$ & $\widehat{O_P}(X,Y) = [0.1,0.8]$\\
        $A=\max$& & & \\
        $\alpha = 0.5$ & & & \\
        \hline 
		& &  &\\[-0.2cm]
           $X=[0.2,0.8]$&  &  & \\
        $Y=[0.5,1]$ & $IOwp_{\max}^{1}=[0.2,0.8]$ &  $IOwp_{\max}^{0.99}=[0.196,0.796]$ & $\widehat{O_P}(X,Y) = [0.1,0.8]$\\
        $A=\max$& & &  \\
        $\alpha = 1$ & &  & \\
        \hline 
		& & &\\[-0.2cm]
        $X=[0.6,0.9]$&  &  & \\
        $Y=[0.8,0.8]$ & $IOwp_{\max}^{0.01}=[0.48,0.72]$ &  $IOwp_{\max}^{0.01}=[0.4794, 0.7794]$ & $\widehat{O_P}(X,Y) = [0.48,0.72]$\\
        $A=\max$& & & \\
        $\alpha = 0.01$ & & &  \\
         \hline 
		& &  &\\[-0.2cm]
           $X=[0.6,0.9]$ &  &   & \\
        $Y=[0.8,0.8]$ & $IOwp_{\max}^{0.5}=[0.48,0.72]$ &  $IOwp_{\max}^{0.5}=[0.45,0.75]$ & $\widehat{O_P}(X,Y) = [0.48,0.72]$\\
        $A=\max$& & & \\
        $\alpha = 0.5$ & &  & \\
        \hline 
		& & & \\[-0.2cm]
           $X=[0.6,0.9]$&  &  & \\
        $Y=[0.8,0.8]$ & $IOwp_{\max}^{1}=[0.48,0.72]$ &  $IOwp_{\max}^{0.99}=[0.48,0.72]$ & $\widehat{O_P}(X,Y) = [0.48,0.72]$\\
        $A=\max$& & &  \\
        $\alpha = 1$ & & &  \\
       \hline 
		& & &\\[-0.2cm]
        $X=[0.6,0.9]$&  & &   \\
        $Y=[0.8,0.8]$ & $IOwp_{\min}^{0}=[0.4824,0.4824]$ & $IOwp_{\min}^{0.01}=[0.4824, 0.4824]$ & $\widehat{O_P}(X,Y) = [0.48,0.72]$\\
        $A=\min$& & &  \\
        $\alpha = 0.01$ & & & \\
         \hline 
		& & & \\[-0.2cm]
           $X=[0.6,0.9]$ &  &  & \\
        $Y=[0.8,0.8]$ & $IOwp_{\min}^{0.5}=[0.6,0.6]$ & $IOwp_{\min}^{0.5}=[0.56,0.56]$ & $\widehat{O_P}(X,Y) = [0.48,0.72]$\\
        $A=\min$& & &  \\
        $\alpha = 0.5$ & & & \\
        \hline 
		& & & \\[-0.2cm]
           $X=[0.6,0.9]$&  &   & \\
        $Y=[0.8,0.8]$ & $IOwp_{\min}^{1}=[0.72,0.72]$ & $IOwp_{\min}^{0.99}=[0.7176, 0.7176]$ & $\widehat{O_P}(X,Y) = [0.48,0.72]$\\
        $A=\min$& &  & \\
        $\alpha = 1$  & & & \\
\hline 
		
	\end{tabular}\end{footnotesize}
\end{table}

\begin{remark}\label{rem-aplicacao}
Concerning the application of the presented construction methods of width-limited iv-overlap functions in practical problems, a number of choices need to be made by the domain expert:
\begin{description}
  \item[1.] The choice of overlap function $O$: According to the considered application, some overlap functions produce better results than others. For example, in the literature, it is possible to verify that some overlap functions are more suitable to be applied in image processing \cite{Jurio201369} while others present good behaviour in classification problems \cite{elkano,edurne,ASMUS2020,ASMUS2020-TFS}.
  \item[2.] The choice of $\alpha$: It is completely determined by the admissible order $\leq_{\alpha,\beta}$ that is suitable for the application. The choice of the interval order depends on how the intervals are obtained or interpreted \cite{BUSTINCE201369,BENTKOWSKA2015792,BUSTINCE202023}.
  \item[3.] The choice of the width-limiting function $B$: Different applications may require that the aggregation process produces interval-valued outputs with more or less uncertainty tolerance, which will inform the definition of $B$. This will be determined by the relation between information accuracy and information quality required by the application. For example, when using $B = \max$ one has a better control of the accuracy of the result than when $B = \min$. However, the higher the accuracy, the lesser will be the information quality \cite{moore2009,ML03,DIM00}.
\end{description}
\end{remark}

\section{Conclusion} \label{sec-conclusion}

We introduced and developed the concepts of width-limited interval-valued functions and their respective width-limiting functions, as a way to analyze the effect of the width of the input intervals on the width of the output interval, accordingly to the interval-valued function at hand. Furthermore, it was shown a way to obtain the least width-limiting function for a given interval-valued function, which informs how much width-propagation one can expect for such interval-valued operation. A relaxation of the concept of ultramodularity was presented, in the form of $(a,b)$-ultramodular functions, allowing us to analyze the width-limiting functions of the best interval representation of some aggregation functions. Also, we introduced the notion of an interval-valued function that is increasing with respect to a pair of partial orders, a more flexible approach for increasingness of interval-valued functions.

These new developed concepts could aid the definition of different interval-valued functions with controlled width propagation. As our primary interest was to apply such notions on interval-valued overlap operations, width-limited interval-valued overlap functions were defined and studied. Following that, three construction methods for w-iv-overlap functions were presented, analyzed and compared. As these construction methods are all based on choices of overlap functions, width-limiting functions and admissible orders, it was made clear the adaptability of the developed concepts, as one can obtain an interval-valued overlap operations that best satisfy the restrictions of the context regarding the acceptable amount of width propagation and/or the ordering of intervals to be applied.

Thus, the contributions of this work aimed to address the gap in the literature regarding the analysis of the width of interval-valued functions, especially interval-valued overlap functions, while providing the initial theoretical tools to allow the application of similarly defined width-limited interval-valued functions in practical problems, where the increasing uncertainty associated with the widths of the operated intervals may be an obstacle to overcome, in order to maintain the information quality. On the near future, we intend to generalize adequately the presented theoretical approach to allow for applications in the context of interval-valued fuzzy rule-based classification systems.

\section*{Acknowledgment}
Supported by CNPq (307781/2016-0,	301618/2019-4), FAPERGS (19/ 2551-0001660) and the Spanish Ministry of Science and Technology (TIN2016-77356-P, PID2019-108392GB I00 (AEI/10.13039/501100011033)). The fifth author was supported by the Grant APVV-0052-18.

\appendix

\section{Proof of Theorem \ref{teo-width}} \label{ap-proofs}
\begin{proof}
Consider a commutative and increasing function $B: [0,1]^2 \rightarrow [0,1]$, a strict overlap function $O: [0,1]^2 \rightarrow [0,1]$ and take $\alpha \in (0,1]$, $\beta \in [0,\alpha)$. Observe that, for all $X,Y \in L([0,1])$:
\begin{description}
  \item[(i)] $K_{\alpha}(IOw_{B}^{\alpha}(X,Y)) = K_{\alpha}(\widehat{O}(X,Y))$; 
  \item[(ii)] $w(IOw_{B}^{\alpha}(X,Y)) = m_{\widehat{O},B}(X,Y) = \min\{w(\widehat{O}(X,Y)),B(w(X),w(Y))\}$.
\end{description}
So, it is immediate that $IOw_{B}^{\alpha}$ is well defined. Now, let us verify if $IOw_{B}^{\alpha}$ respects conditions \textbf{(IOw1)-(IOw5)} of Definition \ref{def-w-iv-ov}.

\vspace{0.5cm}

\textbf{(IOw1)} Immediate, as $O$ and $B$ are both commutative;

\vspace{0.5cm}

\textbf{(IOw2)} $(\Rightarrow)$ Suppose that there are $X,Y \in L([0,1])$ such that $IOw_{B}^{\alpha}(X,Y) = [0,0]$. Then, we have the following cases:

\begin{description}
  \item[1)] $m_{\widehat{O},B}(X,Y) = w(\widehat{O}(X,Y))$

 From Equations (\ref{eq-kalpha}) and (\ref{eq-iow}), it follows that:
  \begin{eqnarray*}
  \lefteqn{[K_{\alpha}(\widehat{O}(X,Y)) - \alpha \cdot w(\widehat{O}(X,Y)),  K_{\alpha}(\widehat{O}(X,Y)) + (1 - \alpha)\cdot w(\widehat{O}(X,Y))] = [0,0]}&&\\
 &\Rightarrow& [O(\underline{X},\underline{Y}) + \alpha \cdot w(\widehat{O}(X,Y)) - \alpha \cdot w(\widehat{O}(X,Y)),\\
  &&O(\underline{X},\underline{Y}) + \alpha \cdot w(\widehat{O}(X,Y)) + w(\widehat{O}(X,Y)) - \alpha \cdot w(\widehat{O}(X,Y))] = [0,0]\\
 &\Rightarrow& [O(\underline{X},\underline{Y}), O(\underline{X},\underline{Y}) +  w(\widehat{O}(X,Y))] = [0,0] \Rightarrow [O(\underline{X},\underline{Y}), O(\overline{X},\overline{Y})] = [0,0]\\
  &\Rightarrow& \widehat{O}(X,Y) = [0,0] \Leftrightarrow X \cdot Y = [0,0].
  \end{eqnarray*}

  \item[2)] $m_{\widehat{O},B}(X,Y) = B(w(X),w(Y))$

  From Equations (\ref{eq-kalpha}) and (\ref{eq-iow}), it holds that:
  \begin{eqnarray*}
 \lefteqn{[K_{\alpha}(\widehat{O}(X,Y)) - \alpha \cdot B(w(X),w(Y)),  K_{\alpha}(\widehat{O}(X,Y)) + (1 - \alpha)\cdot B(w(X),w(Y))] = [0,0]}&&\\
&\Rightarrow&  - \alpha \cdot B(w(X),w(Y)) = (1 - \alpha)\cdot B(w(X),w(Y)) \Rightarrow B(w(X),w(Y)) = 0\\
 &\Rightarrow& [K_{\alpha}(\widehat{O}(X,Y)),K_{\alpha}(\widehat{O}(X,Y))] = [0,0] \Rightarrow K_{\alpha}(\widehat{O}(X,Y)) = 0\\ 
 &\Rightarrow& \widehat{O}(X,Y) = [0,0] \Leftrightarrow X \cdot Y = [0,0].
  \end{eqnarray*}
\end{description}

$(\Leftarrow)$ Consider $X,Y \in L([0,1])$ such that $X \cdot Y = [0,0]$. Then, it is immediate that $\widehat{O}(X,Y) = [0,0]$ and $m_{\widehat{O},B}(X,Y) = 0$. Furthermore, from Equation (\ref{eq-iow}):

\begin{equation*}
IOw_{B}^{\alpha}(X,Y)= [K_{\alpha}([0,0]) - \alpha \cdot 0,  K_{\alpha}([0,0]) + (1 - \alpha)\cdot 0] = [0,0].
\end{equation*}

\vspace{0.5cm}

\textbf{(IOw3)} $(\Rightarrow)$ Consider $X,Y \in L([0,1])$ such that $IOw_{B}^{\alpha}(X,Y) = [1,1]$. Then, we have the following cases:

\begin{description}
  \item[1)] $m_{\widehat{O},B}(X,Y) = w(\widehat{O}(X,Y))$

 From Equations (\ref{eq-kalpha}) and (\ref{eq-iow}), it follows that:
  \begin{eqnarray*}
  \lefteqn{[K_{\alpha}(\widehat{O}(X,Y)) - \alpha \cdot w(\widehat{O}(X,Y)),  K_{\alpha}(\widehat{O}(X,Y)) + (1 - \alpha)\cdot w(\widehat{O}(X,Y))] = [1,1]}&&\\
 &\Rightarrow& [O(\underline{X},\underline{Y}) + \alpha \cdot w(\widehat{O}(X,Y)) - \alpha \cdot w(\widehat{O}(X,Y)),\\
  &&O(\underline{X},\underline{Y}) + \alpha \cdot w(\widehat{O}(X,Y)) + w(\widehat{O}(X,Y)) - \alpha \cdot w(\widehat{O}(X,Y))] = [1,1]\\
 &\Rightarrow& [O(\underline{X},\underline{Y}), O(\underline{X},\underline{Y}) +  w(\widehat{O}(X,Y))] = [1,1] \Rightarrow [O(\underline{X},\underline{Y}), O(\overline{X},\overline{Y})] = [1,1]\\
  &\Rightarrow& \widehat{O}(X,Y) = [1,1] \Leftrightarrow X \cdot Y = [1,1].
  \end{eqnarray*}

  \item[2)] $m_{\widehat{O},B}(X,Y) = B(w(X),w(Y))$

  From Equations (\ref{eq-kalpha}) and (\ref{eq-iow}), it holds that:
  \begin{eqnarray*}
  \lefteqn{[K_{\alpha}(\widehat{O}(X,Y)) - \alpha \cdot B(w(X),w(Y)),  K_{\alpha}(\widehat{O}(X,Y)) + (1 - \alpha)\cdot B(w(X),w(Y))] = [1,1]}&&\\
 &\Rightarrow&  - \alpha \cdot B(w(X),w(Y)) = (1 - \alpha)\cdot B(w(X),w(Y)) \Rightarrow B(w(X),w(Y)) = 0\\
 &\Rightarrow& [K_{\alpha}(\widehat{O}(X,Y)),K_{\alpha}(\widehat{O}(X,Y))] = [1,1] \Rightarrow K_{\alpha}(\widehat{O}(X,Y)) = 1\\ 
 &\Rightarrow& \widehat{O}(X,Y) = [1,1] \Leftrightarrow X \cdot Y = [1,1].
  \end{eqnarray*}
\end{description}

$(\Leftarrow)$ Consider $X,Y \in L([0,1])$ such that $X \cdot Y = [1,1]$. Then, it is immediate that $\widehat{O}(X,Y) = [1,1]$ and $m_{\widehat{O},B}(X,Y) = 0$. Furthermore, from Equation (\ref{eq-iow}):

\begin{equation*}
IOw_{B}^{\alpha}(X,Y)= [K_{\alpha}([1,1]) - \alpha \cdot 0,  K_{\alpha}([1,1]) + (1 - \alpha)\cdot 0] = [1,1].
\end{equation*}

\vspace{0.5cm}

\textbf{(IOw4)} Consider $X,Y,Z \in L([0,1])$ such that $X \leq_{Pr} Y$. Then:
\begin{equation}\label{iowax}
IOw_{B}^{\alpha}(X,Z)= [K_{\alpha}(\widehat{O}(X,Z)) - \alpha \cdot m_{\widehat{O},B}(X,Z),  K_{\alpha}(\widehat{O}(X,Z)) + (1 - \alpha)\cdot m_{\widehat{O},B}(X,Z)],
\end{equation}
and
\begin{equation}\label{ioway}
IOw_{B}^{\alpha}(Y,Z)= [K_{\alpha}(\widehat{O}(Y,Z)) - \alpha \cdot m_{\widehat{O},B}(Y,Z),  K_{\alpha}(\widehat{O}(Y,Z)) + (1 - \alpha)\cdot m_{\widehat{O},B}(Y,Z)].
\end{equation}

Observe that $IOw_{B}^{\alpha}(X,Z)$ is obtained by constructing an interval around the value of $K_{\alpha}(\widehat{O}(X,Z))$, and that $\widehat{O}(X,Z)$ is an $o$-representable iv-overlap function with $O$ as both its representatives. Then, from Equations (\ref{eq-kalpha}) and (\ref{iowax}), it follows that:
\begin{eqnarray}
&&K_{\alpha}(IOw_{B}^{\alpha}(X,Z)) = K_{\alpha}(\widehat{O}(X,Z)),\label{kalfax}\\
\nonumber &&K_{\beta}(IOw_{B}^{\alpha}(X,Z)) = K_{\alpha}(\widehat{O}(X,Z)) - \alpha \cdot m_{\widehat{O},B}(X,Z) + \beta \cdot m_{\widehat{O},B}(X,Z).
\end{eqnarray}
As $\beta<\alpha$, by Lemma \ref{lemma-novo}, one can consider $\beta = 0$. Thus, we have that:
\begin{equation}\label{kbetax}
K_{\beta}(IOw_{B}^{\alpha}(X,Z)) = K_{\alpha}(\widehat{O}(X,Z)) - \alpha \cdot m_{\widehat{O},B}(X,Z).
\end{equation}

Analogously, from Equations (\ref{eq-kalpha}) and (\ref{ioway}), it follows that:
\begin{eqnarray}
&&K_{\alpha}(IOw_{B}^{\alpha}(Y,Z)) = K_{\alpha}(\widehat{O}(Y,Z)),\label{kalfay}\\
&&K_{\beta}(IOw_{B}^{\alpha}(Y,Z)) = K_{\alpha}(\widehat{O}(Y,Z)) - \alpha \cdot m_{\widehat{O},B}(Y,Z)\label{kbetay}.
\end{eqnarray}

Now, we have the following possibilities regarding $m_{\widehat{O},B}(X,Z)$ and $m_{\widehat{O},B}(Y,Z)$ that affects the values of $IOw_{B}^{\alpha}(X,Z)$ and $IOw_{B}^{\alpha}(Y,Z)$, respectively:

\begin{description}

\item [1)] $m_{\widehat{O},B}(X,Z)=w(\widehat{O}(X,Z))$ and $m_{\widehat{O},B}(Y,Z)=w(\widehat{O}(Y,Z))$

In this case, we have
\begin{equation*}
IOw_{B}^{\alpha}(X,Z) = \widehat{O}(X,Z) \leq_{Pr} \widehat{O}(Y,Z) = IOw_{B}^{\alpha}(Y,Z),
\end{equation*}
meaning that $IOw_{B}^{\alpha}(X,Z) \leq_{\alpha,\beta} IOw_{B}^{\alpha}(Y,Z)$.

\item[2)] $m_{\widehat{O},B}(X,Z)=B(w(X),w(Z))$ and $m_{\widehat{O},B}(Y,Z)=B(w(Y),w(Z))$

It follows that
\begin{equation*}
IOw_{B}^{\alpha}(X,Z)= [K_{\alpha}(\widehat{O}(X,Z)) - \alpha \cdot B(w(X),w(Z)),  K_{\alpha}(\widehat{O}(X,Z)) + (1 - \alpha)\cdot B(w(X),w(Z))],
\end{equation*}
and
\begin{equation*}
IOw_{B}^{\alpha}(Y,Z)= [K_{\alpha}(\widehat{O}(Y,Z)) - \alpha \cdot B(w(Y),w(Z)),  K_{\alpha}(\widehat{O}(Y,Z)) + (1 - \alpha)\cdot B(w(Y),w(Z))].
\end{equation*}

Now, let us verify all the cases in which $X \leq_{Pr} Y$ holds:

\begin{description}

\item [a)] $\underline{X} = \underline{Y}$ and  $\overline{X} = \overline{Y}$:

We have that $X = Y$, meaning that \[IOw_{B}^{\alpha}(X,Z) = IOw_{B}^{\alpha}(Y,Z) \Rightarrow IOw_{B}^{\alpha}(X,Z) \leq_{\alpha,\beta} IOw_{B}^{\alpha}(Y,Z).\]

\item[b)] $\underline{X} = \underline{Y}$ and  $\overline{X} < \overline{Y}$:

When $\underline{Z} \neq 0$, from Lemma \ref{lema-prkalfa}, it holds that $K_{\alpha}(\widehat{O}(X,Z)) < K_{\alpha}(\widehat{O}(Y,Z))$, since $O$ is a strict overlap function and $\alpha \in (0,1]$. As $K_{\alpha}(IOw_{B}^{\alpha}(X,Z)) = K_{\alpha}(\widehat{O}(X,Z))$ and $K_{\alpha}(IOw_{B}^{\alpha}(Y,Z)) = K_{\alpha}(\widehat{O}(Y,Z))$, we have that \[K_{\alpha}(IOw_{B}^{\alpha}(X,Z)) < K_{\alpha}(IOw_{B}^{\alpha}(Y,Z)) \Rightarrow IOw_{B}^{\alpha}(X,Z) \leq_{\alpha,\beta} IOw_{B}^{\alpha}(Y,Z).\] 

If $\underline{Z} = 0$ and $\overline{Z} \neq 0$, by \textbf{(O2)}, one has that
\begin{equation*}
\widehat{O}(X,Z)= [0, O(\overline{X},\overline{Z})],
\end{equation*}
and
\begin{equation*}
\widehat{O}(Y,Z)= [0, O(\overline{Y},\overline{Z})].
\end{equation*}
Since $\overline{X} < \overline{Y}$ and $O$ is strict, then
\begin{eqnarray*}
\lefteqn{K_{\alpha}(IOw_{B}^{\alpha}(X,Z)) = K_{\alpha}(\widehat{O}(X,Z)) < K_{\alpha}(\widehat{O}(Y,Z)) = K_{\alpha}(IOw_{B}^{\alpha}(Y,Z))}&&\\
&\Rightarrow &IOw_{B}^{\alpha}(X,Z) \leq_{\alpha,\beta} IOw_{B}^{\alpha}(Y,Z).
\end{eqnarray*}

If $\underline{Z} = 0$ and $\overline{Z} = 0$, then
\begin{equation*}
\widehat{O}(X,Z)=IOw_{B}^{\alpha}(X,Z)=[0,0]=IOw_{B}^{\alpha}(Y,Z)=\widehat{O}(X,Z).
\end{equation*}

So, we have that $IOw_{B}^{\alpha}(X,Z) \leq_{\alpha,\beta} IOw_{B}^{\alpha}(Y,Z)$, for all $X,Y,Z \in L([0,1])$, such that $\underline{X} = \underline{Y}$ and  $\overline{X} < \overline{Y}$.

\item[c)] $\underline{X} < \underline{Y}$ and  $\overline{X} = \overline{Y}$:

When $\underline{Z} \neq 0$ and $\alpha \neq 1$, from Lemma \ref{lema-prkalfa}, we have that $K_{\alpha}(\widehat{O}(X,Z)) < K_{\alpha}(\widehat{O}(Y,Z))$. So, it holds that
\begin{equation*}
K_{\alpha}(IOw_{B}^{\alpha}(X,Z)) < K_{\alpha}(IOw_{B}^{\alpha}(Y,Z)) \Rightarrow IOw_{B}^{\alpha}(X,Z) \leq_{\alpha,\beta} IOw_{B}^{\alpha}(Y,Z).
\end{equation*}

When taking $\underline{Z} \neq 0$ and $\alpha = 1$, we have that $K_{\alpha}(IOw_{B}^{\alpha}(X,Z)) = K_{\alpha}(IOw_{B}^{\alpha}(Y,Z))$. Moreover, from  Equations (\ref{kbetax}) and (\ref{kbetay}):
\begin{equation*}
K_{\beta}(IOw_{B}^{1}(X,Z)) = O(\overline{X},\overline{Z}) - B(w(X),w(Z))
\end{equation*}

and
\begin{equation*}
K_{\beta}(IOw_{B}^{1}(Y,Z)) = O(\overline{Y},\overline{Z}) - B(w(Y),w(Z)).
\end{equation*}

As $\underline{X} < \underline{Y}$ and $\overline{X} = \overline{Y}$, we have that $w(Y) < w(X)$, and thus, $B(w(Y),w(Z)) \leq B(w(X),w(Z))$, as $B$ is increasing. So,
\begin{equation*}
K_{\beta}(IOw_{B}^{1}(X,Z)) = O(\overline{X},\overline{Z}) - B(w(X),w(Z)) \leq O(\overline{Y},\overline{Z}) - B(w(Y),w(Z)) = K_{\beta}(IOw_{B}^{1}(Y,Z)).
\end{equation*}
Then,
\begin{eqnarray*}
\lefteqn{K_{\alpha}(IOw_{B}^{\alpha}(X,Z)) = K_{\alpha}(IOw_{B}^{\alpha}(Y,Z)) \, \, \mbox{and} \, \, K_\beta(IOw_{B}^{\alpha}(X,Z)) \leq K_\beta(IOw_{B}^{\alpha}(Y,Z))}&&\\
&\Rightarrow& IOw_{B}^{\alpha}(X,Z) \leq_{\alpha,\beta} IOw_{B}^{\alpha}(Y,Z).
\end{eqnarray*}

If $\underline{Z} = 0$, by \textbf{(O2)}, one has that
\begin{equation*}
\widehat{O}(X,Z)= [0, O(\overline{X},\overline{Z})],
\end{equation*}
and
\begin{equation*}
\widehat{O}(Y,Z)= [0, O(\overline{Y},\overline{Z})].
\end{equation*}
Since $\overline{X} = \overline{Y}$, then $K_{\alpha}(IOw_{B}^{\alpha}(X,Z)) = K_{\alpha}(IOw_{B}^{\alpha}(Y,Z))$ and, analogous to the previous case when $\underline{Z} \neq 0$ and $\alpha = 1$, we have that
\begin{eqnarray*}
\lefteqn{K_{\alpha}(IOw_{B}^{\alpha}(X,Z)) = K_{\alpha}(IOw_{B}^{\alpha}(Y,Z)) \, \, \mbox{and} \, \, K_\beta(IOw_{B}^{\alpha}(X,Z)) \leq K_\beta(IOw_{B}^{\alpha}(Y,Z))}&&\\
&\Rightarrow& IOw_{B}^{\alpha}(X,Z) \leq_{\alpha,\beta} IOw_{B}^{\alpha}(Y,Z).
\end{eqnarray*}

So, we have that $IOw_{B}^{\alpha}(X,Z) \leq_{\alpha,\beta} IOw_{B}^{\alpha}(Y,Z)$, for all $X,Y,Z \in L([0,1])$, such that $\underline{X} < \underline{Y}$ and  $\overline{X} = \overline{Y}$.

\item[d)] $\underline{X} < \underline{Y}$ and  $\overline{X} < \overline{Y}$:

When $\underline{Z} \neq 0$, from Lemma \ref{lema-prkalfa}, it holds that $K_{\alpha}(\widehat{O}(X,Z)) < K_{\alpha}(\widehat{O}(Y,Z))$. So, we have that
\begin{equation*}
K_{\alpha}(IOw_{B}^{\alpha}(X,Z)) < K_{\alpha}(IOw_{B}^{\alpha}(Y,Z)) \Rightarrow IOw_{B}^{\alpha}(X,Z) \leq_{\alpha,\beta} IOw_{B}^{\alpha}(Y,Z).
\end{equation*}

If $\underline{Z} = 0$ and $\overline{Z} \neq 0$, by \textbf{(O2)}, one has that
\begin{equation*}
\widehat{O}(X,Z)= [0, O(\overline{X},\overline{Z})],
\end{equation*}
and
\begin{equation*}
\widehat{O}(Y,Z)= [0, O(\overline{Y},\overline{Z})].
\end{equation*}
Since $\overline{X} < \overline{Y}$ and $O$ is strict, then
\begin{eqnarray*}
\lefteqn{K_{\alpha}(IOw_{B}^{\alpha}(X,Z)) = K_{\alpha}(\widehat{O}(X,Z)) < K_{\alpha}(\widehat{O}(Y,Z)) = K_{\alpha}(IOw_{B}^{\alpha}(Y,Z))}&&\\
&\Rightarrow &IOw_{B}^{\alpha}(X,Z) \leq_{\alpha,\beta} IOw_{B}^{\alpha}(Y,Z).
\end{eqnarray*}

If $\underline{Z} = 0$ and $\overline{Z} = 0$, then
\begin{equation*}
\widehat{O}(X,Z)=IOw_{B}^{\alpha}(X,Z)=[0,0]=IOw_{B}^{\alpha}(Y,Z)=\widehat{O}(X,Z).
\end{equation*}
So, we have that $IOw_{B}^{\alpha}(X,Z) \leq_{\alpha,\beta} IOw_{B}^{\alpha}(Y,Z)$, for all $X,Y,Z \in L([0,1])$, such that $\underline{X} < \underline{Y}$ and  $\overline{X} < \overline{Y}$.
\end{description}

Thus, one can conclude that, for all $X,Y,Z \in L([0,1])$, when $m_{\widehat{O},B}(X,Z)=B(w(X),w(Z))$ and $m_{\widehat{O},B}(Y,Z)=B(w(Y),w(Z))$, then
\begin{equation*}
X\leq_{Pr}Y \Rightarrow IOw_{B}^{\alpha}(X,Z) \leq_{\alpha,\beta} IOw_{B}^{\alpha}(Y,Z).
\end{equation*}

\item[3)] $m_{\widehat{O},B}(X,Z)=w(\widehat{O}(X,Z))$ and $m_{\widehat{O},B}(Y,Z)=B(w(Y),w(Z))$

It follows that
\begin{equation*}
IOw_{B}^{\alpha}(X,Z)= \widehat{O}(X,Z),
\end{equation*}
and
\begin{equation*}
IOw_{B}^{\alpha}(Y,Z)= [K_{\alpha}(\widehat{O}(Y,Z)) - \alpha \cdot B(w(Y),w(Z)),  K_{\alpha}(\widehat{O}(Y,Z)) + (1 - \alpha)\cdot B(w(Y),w(Z))].
\end{equation*}

Now, let us verify all the cases in which $X \leq_{Pr} Y$ holds:

\begin{description}

\item [a)] $\underline{X} = \underline{Y}$ and  $\overline{X} = \overline{Y}$:

We have that $X = Y$ and
\begin{equation*}
IOw_{B}^{\alpha}(X,Z) = IOw_{B}^{\alpha}(Y,Z) \Rightarrow IOw_{B}^{\alpha}(X,Z) \leq_{\alpha,\beta} IOw_{B}^{\alpha}(Y,Z).
\end{equation*}

\item[b)] $\underline{X} = \underline{Y}$ and  $\overline{X} < \overline{Y}$:

When $\underline{Z}\neq 0$, from Lemma \ref{lema-prkalfa}, it holds that $K_{\alpha}(\widehat{O}(X,Z)) < K_{\alpha}(\widehat{O}(Y,Z))$, since $O$ is a strict overlap function and $\alpha \in (0,1]$. So, as $K_{\alpha}(IOw_{B}^{\alpha}(X,Z)) = K_{\alpha}(\widehat{O}(X,Z))$ and $K_{\alpha}(IOw_{B}^{\alpha}(Y,Z)) = K_{\alpha}(\widehat{O}(Y,Z))$, we have that
\begin{equation*}
K_{\alpha}(IOw_{B}^{\alpha}(X,Z)) < K_{\alpha}(IOw_{B}^{\alpha}(Y,Z)) \Rightarrow IOw_{B}^{\alpha}(X,Z) \leq_{\alpha,\beta} IOw_{B}^{\alpha}(Y,Z).
\end{equation*}

If $\underline{Z} = 0$ and $\overline{Z} \neq 0$, by \textbf{(O2)}, one has that
\begin{equation*}
\widehat{O}(X,Z)= [0, O(\overline{X},\overline{Z})],
\end{equation*}
and
\begin{equation*}
\widehat{O}(Y,Z)= [0, O(\overline{Y},\overline{Z})].
\end{equation*}
Since $\overline{X} < \overline{Y}$ and $O$ is strict, then
\begin{eqnarray*}
\lefteqn{K_{\alpha}(IOw_{B}^{\alpha}(X,Z)) = K_{\alpha}(\widehat{O}(X,Z)) < K_{\alpha}(\widehat{O}(Y,Z)) = K_{\alpha}(IOw_{B}^{\alpha}(Y,Z))}&&\\
&\Rightarrow &IOw_{B}^{\alpha}(X,Z) \leq_{\alpha,\beta} IOw_{B}^{\alpha}(Y,Z).
\end{eqnarray*}

If $\underline{Z} = 0$ and $\overline{Z} = 0$, then
\begin{equation*}
\widehat{O}(X,Z)=IOw_{B}^{\alpha}(X,Z)=[0,0]=IOw_{B}^{\alpha}(Y,Z)=\widehat{O}(X,Z).
\end{equation*}

So, we have that $IOw_{B}^{\alpha}(X,Z) \leq_{\alpha,\beta} IOw_{B}^{\alpha}(Y,Z)$, for all $X,Y,Z \in L([0,1])$, such that $\underline{X} = \underline{Y}$ and  $\overline{X} < \overline{Y}$.

\item[c)] $\underline{X} < \underline{Y}$ and  $\overline{X} = \overline{Y}$:

When $\underline{Z} \neq 0$ and $\alpha \neq 1$, from Lemma \ref{lema-prkalfa}, we have that $K_{\alpha}(\widehat{O}(X,Z)) < K_{\alpha}(\widehat{O}(Y,Z))$. So, it holds that
\begin{equation*}
K_{\alpha}(IOw_{B}^{\alpha}(X,Z)) < K_{\alpha}(IOw_{B}^{\alpha}(Y,Z)) \Rightarrow IOw_{B}^{\alpha}(X,Z) \leq_{\alpha,\beta} IOw_{B}^{\alpha}(Y,Z).
\end{equation*}

If $\underline{Z} \neq 0$ and $\alpha = 1$, we have that $K_{\alpha}(IOw_{B}^{\alpha}(X,Z)) = K_{\alpha}(IOw_{B}^{\alpha}(Y,Z))$. Moreover, from  Equations (\ref{kbetax}) and (\ref{kbetay}):
\begin{equation*}
K_{\beta}(IOw_{B}^{1}(X,Z)) = O(\overline{X},\overline{Z}) - w(\widehat{O}(X,Z))
\end{equation*}

and
\begin{equation*}
K_{\beta}(IOw_{B}^{1}(Y,Z)) = O(\overline{Y},\overline{Z}) - B(w(Y),w(Z)).
\end{equation*}

As $\underline{X} < \underline{Y}$ and $\overline{X} = \overline{Y}$,  we have that
\begin{equation*}
B(w(Y),w(Z)) \leq  w(\widehat{O}(Y,Z)) =  O(\overline{Y},\overline{Z}) - O(\underline{Y},\underline{Z}) \leq O(\overline{X},\overline{Z}) - O(\underline{X},\underline{Z}) = w(\widehat{O}(X,Z)),
\end{equation*}
as $O$ is increasing. So,
\begin{equation*}
K_{\beta}(IOw_{B}^{1}(X,Z)) = O(\overline{X},\overline{Z}) - w(\widehat{O}(X,Z)) \leq O(\overline{Y},\overline{Z}) - B(w(Y),w(Z)) = K_{\beta}(IOw_{B}^{1}(Y,Z)).
\end{equation*}
Then,
\begin{eqnarray*}
\lefteqn{K_{\alpha}(IOw_{B}^{\alpha}(X,Z)) = K_{\alpha}(IOw_{B}^{\alpha}(Y,Z)) \, \, \mbox{and} \, \, K_\beta(IOw_{B}^{\alpha}(X,Z)) \leq K_\beta(IOw_{B}^{\alpha}(Y,Z))}&&\\
&\Rightarrow& IOw_{B}^{\alpha}(X,Z) \leq_{\alpha,\beta} IOw_{B}^{\alpha}(Y,Z).
\end{eqnarray*}

When $\underline{Z} = 0$, by \textbf{(O2)} we have that

\begin{equation*}
\widehat{O}(X,Z)= [0, O(\overline{X},\overline{Z})],
\end{equation*}
and
\begin{equation*}
\widehat{O}(Y,Z)= [0, O(\overline{Y},\overline{Z})].
\end{equation*}

Since $\overline{X} = \overline{Y}$, then $K_{\alpha}(IOw_{B}^{\alpha}(X,Z)) = K_{\alpha}(IOw_{B}^{\alpha}(Y,Z))$ and, analogous to the previous case when $\underline{Z} \neq 0$ and $\alpha = 1$, we have that
\begin{eqnarray*}
\lefteqn{K_{\alpha}(IOw_{B}^{\alpha}(X,Z)) = K_{\alpha}(IOw_{B}^{\alpha}(Y,Z)) \, \, \mbox{and} \, \, K_\beta(IOw_{B}^{\alpha}(X,Z)) \leq K_\beta(IOw_{B}^{\alpha}(Y,Z))}&&\\
&\Rightarrow& IOw_{B}^{\alpha}(X,Z) \leq_{\alpha,\beta} IOw_{B}^{\alpha}(Y,Z).
\end{eqnarray*}

So, we have that $IOw_{B}^{\alpha}(X,Z) \leq_{\alpha,\beta} IOw_{B}^{\alpha}(Y,Z)$, for all $X,Y,Z \in L([0,1])$, such that $\underline{X} < \underline{Y}$ and  $\overline{X} = \overline{Y}$

\item[d)] $\underline{X} < \underline{Y}$ and  $\overline{X} < \overline{Y}$:

If $\underline{Z} \neq 0$, from Lemma \ref{lema-prkalfa}, it holds that $K_{\alpha}(\widehat{O}(X,Z)) < K_{\alpha}(\widehat{O}(Y,Z))$. So, we have that
\begin{equation*}
K_{\alpha}(IOw_{B}^{\alpha}(X,Z)) < K_{\alpha}(IOw_{B}^{\alpha}(Y,Z)) \Rightarrow IOw_{B}^{\alpha}(X,Z) \leq_{\alpha,\beta} IOw_{B}^{\alpha}(Y,Z).
\end{equation*}

If $\underline{Z} = 0$ and $\overline{Z} \neq 0$, by \textbf{(O2)}, one has that
\begin{equation*}
\widehat{O}(X,Z)= [0, O(\overline{X},\overline{Z})],
\end{equation*}
and
\begin{equation*}
\widehat{O}(Y,Z)= [0, O(\overline{Y},\overline{Z})].
\end{equation*}
Since $\overline{X} < \overline{Y}$ and $O$ is strict, then
\begin{eqnarray*}
\lefteqn{K_{\alpha}(IOw_{B}^{\alpha}(X,Z)) = K_{\alpha}(\widehat{O}(X,Z)) < K_{\alpha}(\widehat{O}(Y,Z)) = K_{\alpha}(IOw_{B}^{\alpha}(Y,Z))}&&\\
&\Rightarrow &IOw_{B}^{\alpha}(X,Z) \leq_{\alpha,\beta} IOw_{B}^{\alpha}(Y,Z).
\end{eqnarray*}

If $\underline{Z} = 0$ and $\overline{Z} = 0$, then
\begin{equation*}
\widehat{O}(X,Z)=IOw_{B}^{\alpha}(X,Z)=[0,0]=IOw_{B}^{\alpha}(Y,Z)=\widehat{O}(X,Z).
\end{equation*}
So, we have that $IOw_{B}^{\alpha}(X,Z) \leq_{\alpha,\beta} IOw_{B}^{\alpha}(Y,Z)$, for all $X,Y,Z \in L([0,1])$, such that $\underline{X} < \underline{Y}$ and  $\overline{X} < \overline{Y}$.
\end{description}

Thus, one can conclude that, for all $X,Y,Z \in L([0,1])$, when $m_{\widehat{O},B}(X,Z)=w(\widehat{O}(X,Z))$ and $m_{\widehat{O},B}(Y,Z)=B(w(Y),w(Z))$, then
\begin{equation*}
X\leq_{Pr}Y \Rightarrow IOw_{B}^{\alpha}(X,Z) \leq_{\alpha,\beta} IOw_{B}^{\alpha}(Y,Z).
\end{equation*}

\item[4)] $m_{\widehat{O},B}(X,Z)=B(w(X),w(Z))$ and $m_{\widehat{O},B}(Y,Z)=w(\widehat{O}(Y,Z))$

It follows that
\begin{equation*}
IOw_{B}^{\alpha}(X,Z)= [K_{\alpha}(\widehat{O}(X,Z)) - \alpha \cdot B(w(X),w(Z)),  K_{\alpha}(\widehat{O}(X,Z)) + (1 - \alpha)\cdot B(w(X),w(Z))],
\end{equation*}
and
\begin{equation*}
IOw_{B}^{\alpha}(Y,Z)= \widehat{O}(Y,Z).
\end{equation*}

Now, let us verify all the cases in which $X \leq_{Pr} Y$ holds:

\begin{description}

\item [a)] $\underline{X} = \underline{Y}$ and  $\overline{X} = \overline{Y}$:

We have that $X = Y$ and $IOw_{B}^{\alpha}(X,Z) = IOw_{B}^{\alpha}(Y,Z) \Rightarrow IOw_{B}^{\alpha}(X,Z) \leq_{\alpha,\beta} IOw_{B}^{\alpha}(Y,Z)$.

\item[b)] $\underline{X} = \underline{Y}$ and  $\overline{X} < \overline{Y}$:

When $\underline{Z}\neq 0$, from Lemma \ref{lema-prkalfa}, it holds that $K_{\alpha}(\widehat{O}(X,Z)) < K_{\alpha}(\widehat{O}(Y,Z))$, since $O$ is a strict overlap function and $\alpha \in (0,1]$. So, as $K_{\alpha}(IOw_{B}^{\alpha}(X,Z)) = K_{\alpha}(\widehat{O}(X,Z))$ and $K_{\alpha}(IOw_{B}^{\alpha}(Y,Z)) = K_{\alpha}(\widehat{O}(Y,Z))$, we have that
\begin{equation*}
K_{\alpha}(IOw_{B}^{\alpha}(X,Z)) < K_{\alpha}(IOw_{B}^{\alpha}(Y,Z)) \Rightarrow IOw_{B}^{\alpha}(X,Z) \leq_{\alpha,\beta} IOw_{B}^{\alpha}(Y,Z).
\end{equation*}

If $\underline{Z} = 0$ and $\overline{Z} \neq 0$, by \textbf{(O2)}, one has that
\begin{equation*}
\widehat{O}(X,Z)= [0, O(\overline{X},\overline{Z})],
\end{equation*}
and
\begin{equation*}
\widehat{O}(Y,Z)= [0, O(\overline{Y},\overline{Z})].
\end{equation*}
Since $\overline{X} < \overline{Y}$ and $O$ is strict, then
\begin{eqnarray*}
\lefteqn{K_{\alpha}(IOw_{B}^{\alpha}(X,Z)) = K_{\alpha}(\widehat{O}(X,Z)) < K_{\alpha}(\widehat{O}(Y,Z)) = K_{\alpha}(IOw_{B}^{\alpha}(Y,Z))}&&\\
&\Rightarrow &IOw_{B}^{\alpha}(X,Z) \leq_{\alpha,\beta} IOw_{B}^{\alpha}(Y,Z).
\end{eqnarray*}

If $\underline{Z} = 0$ and $\overline{Z} = 0$, then
\begin{equation*}
\widehat{O}(X,Z)=IOw_{B}^{\alpha}(X,Z)=[0,0]=IOw_{B}^{\alpha}(Y,Z)=\widehat{O}(X,Z).
\end{equation*}

So, we have that $IOw_{B}^{\alpha}(X,Z) \leq_{\alpha,\beta} IOw_{B}^{\alpha}(Y,Z)$, for all $X,Y,Z \in L([0,1])$, such that $\underline{X} = \underline{Y}$ and  $\overline{X} < \overline{Y}$.

\item[c)] $\underline{X} < \underline{Y}$ and  $\overline{X} = \overline{Y}$:

When $\underline{Z} \neq 0$ and $\alpha \neq 1$, from Lemma \ref{lema-prkalfa}, we have that $K_{\alpha}(\widehat{O}(X,Z)) < K_{\alpha}(\widehat{O}(Y,Z))$. So, it holds that
\begin{equation*}
K_{\alpha}(IOw_{B}^{\alpha}(X,Z)) < K_{\alpha}(IOw_{B}^{\alpha}(Y,Z)) \Rightarrow IOw_{B}^{\alpha}(X,Z) \leq_{\alpha,\beta} IOw_{B}^{\alpha}(Y,Z).
\end{equation*}

If $\underline{Z} \neq 0$ and $\alpha = 1$, we have that $K_{\alpha}(IOw_{B}^{\alpha}(X,Z)) = K_{\alpha}(IOw_{B}^{\alpha}(Y,Z))$. Moreover, from  Equations (\ref{kbetax}) and (\ref{kbetay}):
\begin{equation*}
K_{\beta}(IOw_{B}^{1}(X,Z)) = O(\overline{X},\overline{Z}) - B(w(X),w(Z))
\end{equation*}

and
\begin{equation*}
K_{\beta}(IOw_{B}^{1}(Y,Z)) = O(\overline{Y},\overline{Z}) - w(\widehat{O}(Y,Z)).
\end{equation*}

As $\underline{X} < \underline{Y}$ and $\overline{X} = \overline{Y}$, we have that $w(Y) < w(X)$, and thus,
\begin{equation*}
w(\widehat{O}(Y,Z)) \leq B(w(Y),w(Z)) \leq B(w(X),w(Z)),
\end{equation*}
as $B$ is increasing. So,
\begin{equation*}
K_{\beta}(IOw_{B}^{1}(X,Z)) = O(\overline{X},\overline{Z}) - w(\widehat{O}(X,Z)) \leq O(\overline{Y},\overline{Z}) - B(w(Y),w(Z)) = K_{\beta}(IOw_{B}^{1}(Y,Z)).
\end{equation*}
Then,
\begin{eqnarray*}
\lefteqn{K_{\alpha}(IOw_{B}^{\alpha}(X,Z)) = K_{\alpha}(IOw_{B}^{\alpha}(Y,Z)) \, \, \mbox{and} \, \, K_\beta(IOw_{B}^{\alpha}(X,Z)) \leq K_\beta(IOw_{B}^{\alpha}(Y,Z))}&&\\
&\Rightarrow& IOw_{B}^{\alpha}(X,Z) \leq_{\alpha,\beta} IOw_{B}^{\alpha}(Y,Z).
\end{eqnarray*}

So, we have that $IOw_{B}^{\alpha}(X,Z) \leq_{\alpha,\beta} IOw_{B}^{\alpha}(Y,Z)$, for all $X,Y,Z \in L([0,1])$, such that $\underline{X} < \underline{Y}$ and  $\overline{X} = \overline{Y}$.

\item[d)] $\underline{X} < \underline{Y}$ and  $\overline{X} < \overline{Y}$:

When $\underline{Z} \neq 0$, from Lemma \ref{lema-prkalfa}, it holds that $K_{\alpha}(\widehat{O}(X,Z)) < K_{\alpha}(\widehat{O}(Y,Z))$. So, we have that
\begin{equation*}
K_{\alpha}(IOw_{B}^{\alpha}(X,Z)) < K_{\alpha}(IOw_{B}^{\alpha}(Y,Z)) \Rightarrow IOw_{B}^{\alpha}(X,Z) \leq_{\alpha,\beta} IOw_{B}^{\alpha}(Y,Z).
\end{equation*}
If $\underline{Z} = 0$ and $\overline{Z} \neq 0$, by \textbf{(O2)}, one has that
\begin{equation*}
\widehat{O}(X,Z)= [0, O(\overline{X},\overline{Z})],
\end{equation*}
and
\begin{equation*}
\widehat{O}(Y,Z)= [0, O(\overline{Y},\overline{Z})].
\end{equation*}
Since $\overline{X} < \overline{Y}$ and $O$ is strict, then
\begin{eqnarray*}
\lefteqn{K_{\alpha}(IOw_{B}^{\alpha}(X,Z)) = K_{\alpha}(\widehat{O}(X,Z)) < K_{\alpha}(\widehat{O}(Y,Z)) = K_{\alpha}(IOw_{B}^{\alpha}(Y,Z))}&&\\
&\Rightarrow &IOw_{B}^{\alpha}(X,Z) \leq_{\alpha,\beta} IOw_{B}^{\alpha}(Y,Z).
\end{eqnarray*}

If $\underline{Z} = 0$ and $\overline{Z} = 0$, then
\begin{equation*}
\widehat{O}(X,Z)=IOw_{B}^{\alpha}(X,Z)=[0,0]=IOw_{B}^{\alpha}(Y,Z)=\widehat{O}(X,Z).
\end{equation*}

So, we have that $IOw_{B}^{\alpha}(X,Z) \leq_{\alpha,\beta} IOw_{B}^{\alpha}(Y,Z)$, for all $X,Y,Z \in L([0,1])$, such that $\underline{X} < \underline{Y}$ and  $\overline{X} < \overline{Y}$.

\end{description}

Thus, one can conclude that, for all $X,Y,Z \in L([0,1])$, when $m_{\widehat{O},B}(X,Z)=B(w(X),w(Z))$ and $m_{\widehat{O},B}(Y,Z)= w(\widehat{O}(Y,Z))$, then
\begin{equation*}
X\leq_{Pr}Y \Rightarrow IOw_{B}^{\alpha}(X,Z) \leq_{\alpha,\beta} IOw_{B}^{\alpha}(Y,Z).
\end{equation*}

\end{description}

As verified for all possible scenarios, it holds that  $IOw_{B}^{\alpha}$ is $(\leq_{Pr},\leq_{\alpha,\beta})$-increasing, for all $\alpha,\beta \in [0,1]$ such that $\alpha \neq \beta$.

\vspace{0.5cm}

\textbf{(IOw5)} \begin{eqnarray*}
       w(IOw_{B}^{\alpha}(X,Y)) & = & K_{\alpha}(\widehat{O}(X,Y)) + (1 - \alpha)\cdot m_{\widehat{O},B}(X,Y) - (K_{\alpha}(\widehat{O}(X,Y)) - \alpha \cdot m_{\widehat{O},B}(X,Y))\\
        & = &m_{\widehat{O},B}(X,Y)\\
        & = & \min\{w(\widehat{O}(X,Y)),B(w(X),w(Y))\}\\
        & \leq &  B(w(X),w(Y)).
       \end{eqnarray*}
       Then, it holds that $IOw_{B}^{\alpha}$ is width-limited by $B$ for all $\alpha \in [0,1]$.
\qed
\end{proof}

\section{Proof of Theorem \ref{theo-benja-geral}} \label{ap-proofs2}

\begin{proof}
Consider a commutative, increasing and conjunctive function $B: [0,1]^2 \rightarrow [0,1]$, a strict overlap function $O: [0,1]^2 \rightarrow [0,1]$ and let $\alpha \in (0,1)$, $\beta \in [0,1]$ such that $\alpha \neq \beta$. Observe that, for all $X,Y \in L([0,1])$:
\begin{description}
  \item[(i)] $K_{\alpha}(IOw_{B}^{\alpha}(X,Y)) = O(K_{\alpha}(X), K_{\alpha}(Y))$; 
  \item[(ii)] $w(IOw_{B}^{\alpha}(X,Y)) = \theta = B(B(w(X), w(Y)), B(O(K_{\alpha}(X), K_{\alpha}(Y)),1 - O(K_{\alpha}(X), K_{\alpha}(Y))))$.
\end{description}

So, it is clear that $IOw_{B}^{\alpha}$ is well defined. Now, let us verify if $IOw_{B}^{\alpha}$ respects conditions \textbf{(IOw1)-(IOw5)} from Definition \ref{def-w-iv-ov}.

\vspace{0.5cm}

\textbf{(IOw1)} Immediate, as $O$ and $B$ are commutative;

\vspace{0.5cm}

\textbf{(IOw2)} ($\Rightarrow$) Take $X,Y \in L([0,1])$ and suppose that $IOw_{B}^{\alpha}(X,Y) = [0,0]$. Then, by \textbf{(i)}, we have that \[K_{\alpha}(IOw_{B}^{\alpha}(X,Y)) = K_{\alpha}([0,0]) = 0 = O(K_{\alpha}(X),K_{\alpha}(Y)),\]since $\alpha \in (0,1)$. Thus, by condition \textbf{(O2)}, either $K_{\alpha}(X)=0$ or $K_{\alpha}(Y)=0$, and, therefore, $X \cdot Y = [0,0]$;

($\Leftarrow$) Consider $X,Y \in L([0,1])$ such that $X \cdot Y = [0,0]$. So, $K_{\alpha}(X) \cdot K_{\alpha}(Y) = 0$, since $\alpha \in (0,1)$. Then, by \textbf{(i)} and \textbf{(O2)}, one has that $K_{\alpha}(IOw_{B}^{\alpha}(X,Y)) = O(K_{\alpha}(X),K_{\alpha}(Y)) = 0$, meaning that $IOw_{B}^{\alpha}(X,Y) = [0,0]$;

\vspace{0.5cm}

\textbf{(IOw3)} ($\Rightarrow$)  Take $X,Y \in L([0,1])$ such that $IOw_{B}^{\alpha}(X,Y) = [1,1]$. Then, by \textbf{(i)}, one has that
\begin{equation*}
K_{\alpha}(IOw_{B}^{\alpha}(X,Y)) = K_{\alpha}([1,1]) = 1 = O(K_{\alpha}(X),K_{\alpha}(Y)).
\end{equation*}
By \textbf{(O3)}, $K_{\alpha}(X) \cdot K_{\alpha}(Y) = 1$, since $\alpha \in (0,1)$, meaning that  $X \cdot Y = [1,1]$;

($\Leftarrow$) Consider $X,Y \in L([0,1])$ such that $X \cdot Y = [1,1]$. So, $K_{\alpha}(X) \cdot K_{\alpha}(Y) = 1$, since $\alpha \in (0,1)$. Then, by \textbf{(i)} and \textbf{(O3)}, one has that $K_{\alpha}(IOw_{B}^{\alpha}(X,Y)) = O(K_{\alpha}(X),K_{\alpha}(Y)) = 1$, meaning that $IOw_{B}^{\alpha}(X,Y) = [1,1]$;

\vspace{0.5cm}

\textbf{(IOw4)} Consider $X,Y,Z \in L([0,1])$ such that $X \leq_{\alpha,\beta} Y$ with $\alpha \in (0,1)$, $\beta \in [0,1]$, $\alpha \neq \beta$. By Lemma \ref{lemma-novo}, it is sufficient to consider the cases $\beta=0$ and $\beta =1$. First, for $X <_{\alpha,\beta} Y$ and $\beta = 0$ we have the following possibilities:

\begin{description}
  \item[1)] $X <_{\alpha,0} Y$ and $K_{\alpha}(Z)=0$. Then, $O(K_{\alpha}(X),K_{\alpha}(Z)) = 0 = O(K_{\alpha}(Y),K_{\alpha}(Z))$, and, therefore, since $\alpha \neq 0$, by \textbf{(i)} it holds that $IOw_{B}^{\alpha}(X,Z) =  IOw_{B}^{\alpha}(Y,Z) = [0,0]$;
  \item[2)] $X <_{\alpha,0} Y$ and $K_{\alpha}(Z)>0$. Here, we have the following possibilities:
  \begin{description}
    \item[a)] $K_{\alpha}(X) < K_{\alpha}(Y)$. Since $O$ is strict, by \textbf{(O4)}, one has that $O(K_{\alpha}(X),K_{\alpha}(Z)) < O(K_{\alpha}(Y),K_{\alpha}(Z))$, and, thus, by \textbf{(i)} it follows that $IOw_{B}^{\alpha}(X,Z) <_{\alpha,0} IOw_{B}^{\alpha}(Y,Z)$;
    \item[b)] $K_{\alpha}(X) = K_{\alpha}(Y)$ and $K_{\beta = 0}(X) < K_{\beta = 0}(Y)$. Then, $\underline{X} < \underline{Y} \leq \overline{Y} < \overline{X}$, meaning that $w(X) > w(Y)$. So, by \textbf{(i)},
        \begin{equation*}
        K_{\alpha}(IOw_{B}^{\alpha}(X,Z)) = O(K_{\alpha}(X),K_{\alpha}(Z)) = O(K_{\alpha}(Y),K_{\alpha}(Z)) = K_{\alpha}(IOw_{B}^{\alpha}(Y,Z)),
          \end{equation*}
        and
        \begin{eqnarray*}
        \lefteqn{K_{\beta = 0}(IOw_{B}^{\alpha}(X,Z)) = K_{\alpha}(IOw_{B}^{\alpha}(X,Z)) - \alpha \cdot w(IOw_{B}^{\alpha}(X,Z)) \, \, \mbox{by Definition \ref{def-alfabeta}}}&&\\
        &=& K_{\alpha}(IOw_{B}^{\alpha}(X,Z)) - \alpha \cdot B(B(w(X), w(Z)), B(O(K_{\alpha}(X), K_{\alpha}(Z)),1 - O(K_{\alpha}(X), K_{\alpha}(Z))))\\
        &&  \hspace{12cm}\mbox{by \textbf{(ii)}}\\
        &\leq&  K_{\alpha}(IOw_{B}^{\alpha}(Y,Z)) - \alpha \cdot B(B(w(Y), w(Z)), B(O(K_{\alpha}(Y), K_{\alpha}(Z)),1 - O(K_{\alpha}(Y), K_{\alpha}(Z))))\\
        &=& K_{\alpha}(IOw_{B}^{\alpha}(Y,Z)) - \alpha \cdot  w(IOw_{B}^{\alpha}(Y,Z))\\
        & =& K_{\beta = 0}(IOw_{B}^{\alpha}(Y,Z)),
          \end{eqnarray*}
         as $B$ is increasing. Therefore, $IOw_{B}^{\alpha}(X,Z) \leq_{\alpha,0} IOw_{B}^{\alpha}(Y,Z)$.
  \end{description}
\end{description}

When $X = Y$, it is immediate that $IOw_{B}^{\alpha}(X,Z) = IOw_{B}^{\alpha}(Y,Z)$. Then, for $\beta = 0$ it holds that
\begin{equation*}
IOw_{B}^{\alpha}(X,Z) \leq_{\alpha,0} IOw_{B}^{\alpha}(Y,Z).
\end{equation*}
The proof for $\beta = 1$ can be obtained analogously.

\vspace{0.5cm}

\textbf{(IOw5)} By \textbf{(ii)}, since $B$ is conjunctive, it holds that\begin{eqnarray*}
      \lefteqn{w(IOw_{B}^{\alpha}(X,Y)) = \theta}&&\\
     &=& B(B(w(Y), w(Z)), B(O(K_{\alpha}(Y), K_{\alpha}(Z)),1 - O(K_{\alpha}(Y) \leq  B(w(X),w(Y)),
      \end{eqnarray*}
Then, it holds that $IOw_{B}^{\alpha}$ is width-limited by $B$ for all $\alpha \in (0,1)$.
\qed
\end{proof}

\section{Proof of Theorem \ref{theo-conzdenko}} \label{ap-proofs3}

\begin{proof}
Consider a commutative aggregation function $B: [0,1]^2 \rightarrow [0,1]$, a strict overlap function $O: [0,1]^2 \rightarrow [0,1]$ and let $\alpha \in (0,1)$ and $\beta \in [0,1]$ such that $\alpha \neq \beta$. Observe that it is immediate that $IOw_{B}^{\alpha}$ is well defined. In fact, considering that $IOw_{B}^{\alpha}(X,Y)= R$, one has that
$w(R) = m_{IF_{O,B}^{\alpha},B}(X,Y)$ which, by Definition \ref{moa}, is uniquely defined for the pair $(IF_{O,B}^{\alpha}, B)$. As $K_{\alpha}(R)=O(K_{\alpha}(X),K_{\alpha}(Y))$, then, it follows that $\underline{R} = K_{\alpha}(R) - \alpha \cdot w(R)$ and $\overline{R} = K_{\alpha}(R) + (1 - \alpha) \cdot w(R)$.

Now, let us verify if $IOw_{B}^{\alpha}$ respects conditions \textbf{(IOw1)-(IOw5)} from Definition \ref{def-w-iv-ov}.

\vspace{0.5cm}

\textbf{(IOw1)} Observe that, since $O$ and $B$ are commutative, then $IF_{O,B}^{\alpha}$ is commutative, as well as $m_{IF_{O,B}^{\alpha},B}$. Then, it is immediate that $IOw_{B}^{\alpha}$ is commutative;

\vspace{0.5cm}

\textbf{(IOw2)} ($\Rightarrow$) Take $X,Y \in L([0,1])$ and suppose that $IOw_{B}^{\alpha}(X,Y) = R = [0,0]$. Then, by \textbf{(i)}, we have that
\begin{equation*}
K_{\alpha}(R) = K_{\alpha}([0,0]) = 0 = O(K_{\alpha}(X),K_{\alpha}(Y)),
\end{equation*}
since $\alpha \in (0,1)$. Thus, by condition \textbf{(O2)}, either $K_{\alpha}(X)=0$ or $K_{\alpha}(Y)=0$, and, therefore, $X \cdot Y = [0,0]$;

($\Leftarrow$) Consider $X,Y \in L([0,1])$ such that $X \cdot Y = [0,0]$. So, $K_{\alpha}(X) \cdot K_{\alpha}(Y) = 0$, since $\alpha \in (0,1)$. Then, by \textbf{(i)} and \textbf{(O2)}, one has that
\begin{equation*}
K_{\alpha}(R) = O(K_{\alpha}(X),K_{\alpha}(Y)) = 0,
\end{equation*}
meaning that $IOw_{B}^{\alpha}(X,Y) = R = [0,0]$;

\vspace{0.5cm}

\textbf{(IOw3)} ($\Rightarrow$)  Take $X,Y \in L([0,1])$ such that $IOw_{B}^{\alpha}(X,Y) = R = [1,1]$. Then, by \textbf{(i)}, one has that
\begin{equation*}
K_{\alpha}(R) = K_{\alpha}([1,1]) = 1 = O(K_{\alpha}(X),K_{\alpha}(Y)).
\end{equation*}
By \textbf{(O3)}, $K_{\alpha}(X) \cdot K_{\alpha}(Y) = 1$, since $\alpha \in (0,1)$, meaning that  $X \cdot Y = [1,1]$;

($\Leftarrow$) Consider $X,Y \in L([0,1])$ such that $X \cdot Y = [1,1]$. So, $K_{\alpha}(X) \cdot K_{\alpha}(Y) = 1$, since $\alpha \in (0,1)$. Then, by \textbf{(i)} and \textbf{(O3)}, one has that
\begin{equation*}
K_{\alpha}(R) = O(K_{\alpha}(X),K_{\alpha}(Y)) = 1,
\end{equation*}
meaning that $IOw_{B}^{\alpha}(X,Y) = R = [1,1]$;

\vspace{0.5cm}


\textbf{(IOw4)} Consider $X,Y,Z \in L([0,1])$ such that $X \leq_{\alpha,\beta} Y$ with $\alpha \in (0,1)$, $\beta \in [0,1]$, such that $\alpha \neq \beta$. By Lemma \ref{lemma-novo}, it is sufficient to consider the cases $\beta=0$ and $\beta =1$. First, for $X <_{\alpha,\beta} Y$ and $\beta = 0$ we have the following possibilities:

\begin{description}
  \item[1)] $X <_{\alpha,0} Y$ and $K_{\alpha}(Z)=0$. Then, $O(K_{\alpha}(X),K_{\alpha}(Z)) = 0 = O(K_{\alpha}(Y),K_{\alpha}(Z))$, and, therefore, since $\alpha \neq 0$, by \textbf{(i)} it holds that $IOw_{B}^{\alpha}(X,Z) = [0,0] = IOw_{B}^{\alpha}(Y,Z)$;
  \item[2)] $X <_{\alpha,0} Y$ and $K_{\alpha}(Z)>0$. Here, we have the following possibilities:
  \begin{description}
    \item[a)] $K_{\alpha}(X) < K_{\alpha}(Y)$. Since $O$ is strict, by \textbf{(O4)}, one has that $O(K_{\alpha}(X),K_{\alpha}(Z)) < O(K_{\alpha}(Y),K_{\alpha}(Z))$, and, thus, by \textbf{(i)} it follows that $IOw_{B}^{\alpha}(X,Z) <_{\alpha,0} IOw_{B}^{\alpha}(Y,Z)$;
    \item[b)] $K_{\alpha}(X) = K_{\alpha}(Y)$ and $K_{\beta = 0}(X) < K_{\beta = 0}(Y)$. Then, $\underline{X} < \underline{Y} \leq \overline{Y} < \overline{X}$, meaning that $w(X) > w(Y)$ and, therefore, by Definition \ref{def-lamda}, $\lambda_{\alpha}(X) > \lambda_{\alpha}(Y)$. So, by \textbf{(i)},
        \begin{equation*}
        K_{\alpha}(IOw_{B}^{\alpha}(X,Z)) = O(K_{\alpha}(X),K_{\alpha}(Z)) = O(K_{\alpha}(Y),K_{\alpha}(Z)) = K_{\alpha}(IOw_{B}^{\alpha}(Y,Z)),
          \end{equation*}
        and
        \begin{eqnarray*}
        \lefteqn{K_{\beta = 0}(IOw_{B}^{\alpha}(X,Z)) = K_{\alpha}(IOw_{B}^{\alpha}(X,Z)) - \alpha \cdot w(IOw_{B}^{\alpha}(X,Z)) \, \, \mbox{by Definition \ref{def-alfabeta}}}&&\\
        &=& K_{\alpha}(IOw_{B}^{\alpha}(X,Z)) - \alpha \cdot m_{IF_{O,B}^{\alpha},B}(X,Z) \, \, \mbox{by \textbf{(ii)}}\\
        &=& K_{\alpha}(IOw_{B}^{\alpha}(X,Z)) - \alpha \cdot \min\{B(w(X),w(Z)),B(\lambda_{\alpha}(X),\lambda_{\alpha}(Z))\cdot d_{\alpha}(K_{\alpha}(IOw_{B}^{\alpha}(X,Z)))\} \\
        && \hspace{10cm} \mbox{by Definition \ref{moa}}\\
        &\leq& K_{\alpha}(IOw_{B}^{\alpha}(Y,Z)) - \alpha \cdot \min\{B(w(Y),w(Z)),B(\lambda_{\alpha}(Y),\lambda_{\alpha}(Z))\cdot d_{\alpha}(K_{\alpha}(IOw_{B}^{\alpha}(Y,Z)))\}\\
        &=& K_{\alpha}(IOw_{B}^{\alpha}(Y,Z)) - \alpha \cdot m_{IF_{O,B}^{\alpha},B}(Y,Z) \, \, \mbox{by Definition \ref{moa}}\\
        &=& K_{\beta = 0}(IOw_{B}^{\alpha}(Y,Z)),
          \end{eqnarray*}
         as $B$ is increasing. Therefore, $IOw_{B}^{\alpha}(X,Z) \leq_{\alpha,0} IOw_{B}^{\alpha}(Y,Z)$.
  \end{description}
\end{description}

When $X = Y$, it is immediate that $IOw_{B}^{\alpha}(X,Z) = IOw_{B}^{\alpha}(Y,Z)$. Then, for $\beta = 0$ it holds that
\begin{equation*}
IOw_{B}^{\alpha}(X,Z) \leq_{\alpha,0} IOw_{B}^{\alpha}(Y,Z).
\end{equation*}

The proof for $\beta = 1$ can be obtained analogously.

\vspace{0.5cm}

\textbf{(IOw5)} By \textbf{(ii)} and Definition \ref{moa}, it holds that\begin{eqnarray*}
       \lefteqn{w(IOw_{B}^{\alpha}(X,Y)) = m_{IF_{O,B}^{\alpha},B}(X,Y)}&&\\
       &=& \min\{B(w(X),w(Y)),B(\lambda_{\alpha}(X),\lambda_{\alpha}(Y))\cdot d_{\alpha}(K_{\alpha}(R))\}\\
       &\leq& B(w(X),w(Y)).
       \end{eqnarray*}
Then, it holds that $IOw_{B}^{\alpha}$ is width-limited by $B$ for all $\alpha \in (0,1)$.
\qed
\end{proof}

\bibliographystyle{elsart-num-sort}

\end{document}